\icmltitlerunning{LIV: Language-Image Value Learning}
\newcommand{\BE}{\mathbb{E}}
\newtheorem{proposition}{Proposition}
\newtheorem*{proposition*}{Proposition}
\begin{document}

\twocolumn[

\icmltitle{LIV: Language-Image Representations and Rewards for Robotic Control}

\begin{icmlauthorlist}
\icmlauthor{Yecheng Jason Ma}{penn}
\icmlauthor{William Liang}{penn}
\icmlauthor{Vaidehi Som}{penn} \\
\icmlauthor{Vikash Kumar}{meta}
\icmlauthor{Amy Zhang}{meta}
\icmlauthor{Osbert Bastani}{penn}
\icmlauthor{Dinesh Jayaraman}{penn}
\end{icmlauthorlist}

\icmlaffiliation{penn}{University of Pennsylvania}
\icmlaffiliation{meta}{Meta AI}

\icmlcorrespondingauthor{Jason Ma}{jasonyma@seas.upenn.edu}

\icmlkeywords{representation learning, pre-training, offline reinforcement learning}

\vskip 0.3in
]

\printAffiliationsAndNotice{}  %

\begin{abstract}
We present \textbf{L}anguage-\textbf{I}mage \textbf{V}alue learning (LIV), a unified objective for vision-language representation and reward learning from action-free videos with text annotations. 
Exploiting a novel connection between dual reinforcement learning and mutual information contrastive learning, the LIV objective trains a  multi-modal representation that implicitly encodes a universal value function for tasks specified as language or image goals. 
We use LIV to pre-train the first control-centric vision-language representation from large human video datasets such as EpicKitchen. 
Given only a language or image goal, the pre-trained LIV model can assign dense rewards to each frame in videos of unseen robots or humans attempting that task in unseen environments. Further, when some target domain-specific data is available, the same objective can be used to fine-tune and improve LIV and even other pre-trained
representations for robotic control and reward specification in that domain.
In our experiments on several simulated and real-world robot environments, 
LIV 
models consistently outperform the best prior input state representations for imitation learning, as well as reward specification methods for policy synthesis. Our results validate the advantages of joint vision-language representation and reward learning within the unified, compact LIV framework. Project website: \href{https://penn-pal-lab.github.io/LIV}{penn-pal-lab.github.io/LIV} 
\end{abstract}

\section{Introduction}
\label{sec:introduction}

What are the key machine learning challenges in building a general-purpose robot? Consider a home robot for non-expert end users. Such a robot must acquire common-sense knowledge applicable to generic homes, 
permitting it to operate from visual observations with some minimal proficiency right off the shelf. 
Then, it must be able to quickly and robustly adapt to the specifics of the user's home, conditioning its language understanding in the particular visual context of its new habitat.
Finally, it must be able to autonomously learn arbitrary new skills specified by its user, most naturally in plain language.

\begin{figure*}[t!]
\centering 
\includegraphics[width=0.9\textwidth]{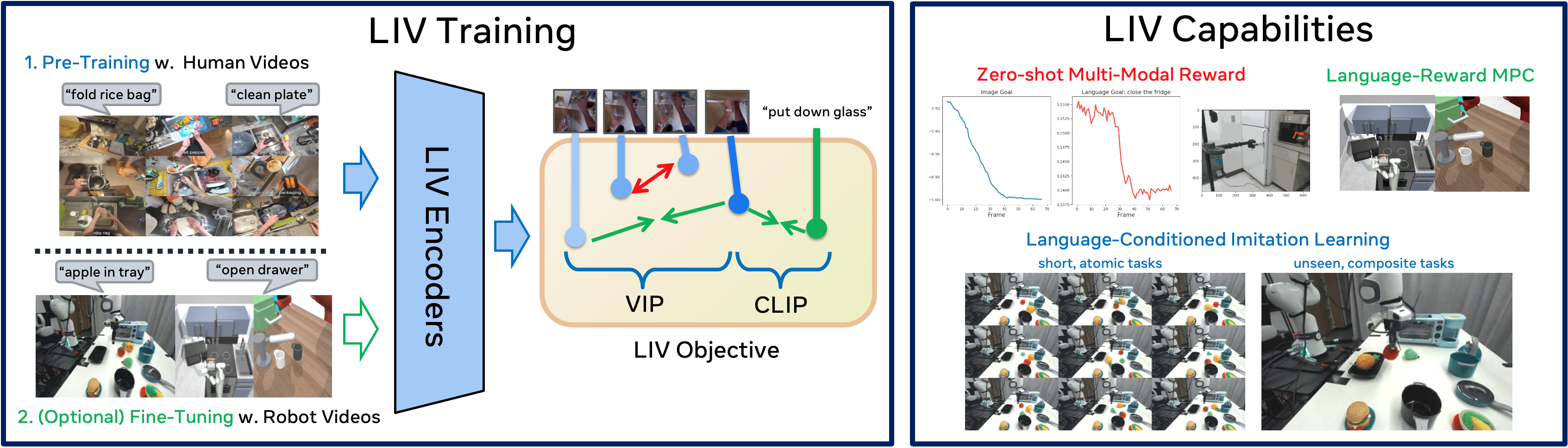}
\caption{\textbf{Language-Image Value Learning (LIV)} for vision-language reward and representation learning. Using the same objective for pre-training and fine-tuning, LIV induces a cross-modal embedding with both temporal coherence and semantic alignment. LIV's multi-modal value representations enable diverse visuomotor control applications.}
\vspace{-0.5cm}
\label{figure:concept}
\end{figure*}

Motivated by such considerations, we identify three key desiderata for control-aware vision-language representations. The first two deal with qualities of the trained representation: (1) It must \textbf{align the two modalities to permit grounding language specifications} for effective task representation, and (2) It must \textbf{capture task-directed progress grounded in language} to supply intermediate learning signals for autonomous skill acquisition. The last desideratum is concerned with how these control-aware VLMs must be trained. Language grounding is commonly context-dependent, so effective representations must be domain-aware. On the flip side, domain-specific data is typically expensive to collect and therefore scarce in robotics settings, making any domain-specific fine-tuning of large models challenging. Our third criterion for our vision-language representation then is that: (3) It must permit \textbf{both extensive domain-generic pre-training as well as domain-specific fine-tuning} on small datasets.

To achieve all three criteria, we propose \textbf{L}anguage-\textbf{I}mage \textbf{V}alue Learning (LIV), a unified objective 
for joint vision-language representation and reward learning.
LIV can flexibly pre-train representations on 
arbitrary video activity datasets with text annotations, even including purely observational datasets of human activity, for which there are several large and conveniently available options~\citep{damen2018scaling, grauman2022ego4d, goyal2017something}. Afterwards, the very same objective can be used to fine-tune those representations on small datasets of in-domain robot data, to overcome domain gaps and ground language in context-specific ways.

At a technical level, LIV builds on our prior work Value-Implicit Pre-Training (VIP)~\citep{ma2022vip}, a self-supervised approach for learning visual goal-conditioned value functions and representations from videos, generalizing it to learning multi-modal, vision-language values and representations from \textit{language-aligned} videos, as described above. Interestingly, we show that LIV is a more general variation of the well-known mutual-information based image-text contrastive representation learning objective,
as used in CLIP~\citep{radford2021learning}; this observation simplifies LIV's practical implementation to a simple yet principled combination of the VIP and CLIP objectives; see Figure~\ref{figure:concept} for an overview of LIV.

We perform extensive experimental evaluations on several simulated and \textit{real-world} household robotic manipulation settings. Our experiments evaluate LIV vision-language representations not only in their capacity as input state representations for language-conditioned behavior cloning of task policies, but also to directly ground language-based task specifications into visual state-based rewards for robot trajectory optimization, thereby stress-testing alignment across modalities. In many cases, the pre-trained LIV model, without ever seeing robots in its pre-training human video dataset, can \textit{zero-shot} produce dense language-conditioned reward on unseen robot videos. Along another axis of evaluation, we assess both the ``generic'' representations pre-trained on large human video datasets as well as the specialized representations fine-tuned on in-domain robot data. Our results comparing to several representative recent works from the three distinct categories of pre-training, fine-tuning, and reward learning, confirm the advantages of the LIV objective for joint vision-language representation and reward learning for control.

\section{Related Work}
\label{sec:related-work}

\textbf{Pre-trained Representations for Control.}
Our work is related to the literature on pre-training representations for control~\citep{shah2021rrl, cui2022can, parisi2022unsurprising, nair2022r3m, xiao2022masked, ma2022vip, fan2022minedojo, majumdar2023we}. These works all seek to use pre-existing data, typically out-of-domain, to pre-train effective representations for downstream unseen robotic tasks. Conceptually, VIP~\citep{ma2022vip} is closest to LIV in learning an implicit value function as joint reward and representation, but VIP focuses only on visual pre-training. Likewise, \citet{nair2022r3m} uses a language alignment loss~\citep{nair2022learning} with respect to a fixed language encoder~\citep{sanh2019distilbert} to shape the visual representation temporally, but the learned representation itself is still uni-modal. In this context, our work is the first to propose a multi-modal vision-language objective that is simultaneously suitable for pre-training, fine-tuning, and reward-learning for language-conditioned robotic control.
A concurrent work~\citep{karamcheti2023language} has proposed joint pixel and language reconstruction as a pre-training objective and focuses more on high-level robotics reasoning tasks, such as grasp affordance prediction and referring expression grounding.

\textbf{Fine-Tuning Pre-Trained Representations.}
Several recent works study how to adapt pre-trained representations for downstream tasks~\citep{kumar2022fine,wortsman2022model,ilharco2022patching,lee2022surgical,kirichenko2022last,goyal2022finetune,dong2022clip}, motivated by the emergence of large pre-trained models~\citep{radford2021learning,brown2020language} capable of zero-shot transfer. Most closely related to our work are a few concurrent works that find using the CLIP objective to fine-tune CLIP is more effective than alternative fine-tuning approaches~\citep{goyal2022finetune, dong2022clip}. However, in the setting of vision-language robotic control, we find CLIP to be a sub-optimal fine-tuning choice, even for the original CLIP model itself, due to the static image-text alignment nature of the CLIP objective that ignores the temporal structure in video data and discards non-goal frames in videos when fine-tuning image-level visual representation.
In contrast, LIV induces a natural synergy between VIP and CLIP that does not require any hyperparameter tuning and can fine-tuning various pre-trained vision-language representations for downstream robotic control.

\textbf{Language-Conditioned Robotic Manipulation.}
There has been a surge of interest in language-conditioned vision-based robotic manipulation systems~\citep{lynch2020language, stepputtis2020language, ahn2022can, jang2022bc, lynch2022interactive, shridhar2022cliport, brohan2022rt, shridhar2022perceiver, guhur2022instruction, liu2022instruction, mees2022calvin}. While several works have considered policy learning on top of pre-trained vision-language representations~\citep{shridhar2022cliport, liu2022instruction, mees2022matters}, they do not consider how a better representation can be learned in the first place by leveraging large-scale out-of-domain text-annotated video data. Our work is the first to study how to pre-train new control-centric vision-language representations that surpass existing representations such as CLIP for language-conditioned visuomotor robotic control tasks. 

Further, our solution, LIV, also affords flexibility in policy synthesis algorithms. 
Most existing works in this area focus on language-conditioned behavior cloning (LCBC)
~\citep{lynch2020language,stepputtis2020language}. This paradigm demands the expensive collection and text labeling of demonstration data, which can take months to complete~\citep{jang2022bc,lynch2022interactive, brohan2022rt}. In contrast, not only is LIV an effective pre-trained representation for LCBC, it can also be used as a language-conditioned visual reward model that supports autonomous skill acquisition via reinforcement learning~\citep{goyal2021pixl2r, nair2022learning, pmlr-v162-mahmoudieh22a}. Our experiments show that LIV outperforms prior state-of-art language-conditioned reward models~\citep{nair2022learning, nair2022r3m} in model-based planning settings.

\vspace{-0.2cm}
\section{Preliminaries \& Problem Setting}
\label{sec:preliminaries}

In this section, we review the VIP algorithm and describe our problem setting.

\textbf{Value Implicit Pre-Training (VIP).} 
VIP~\citep{ma2022vip} learns the optimal goal-conditioned value function via the dual goal-conditioned RL formulation~\citep{ma2022smodice, ma2022far}: 
\begin{equation}
\label{eq:vip}
{\small
\begin{split}
&\mathcal{L}(\phi)  = \BE_{p(g)}[(1-\gamma) \BE_{\mu_0(o;g)}\left[- \mathcal{S}(\phi(o);\phi(g))\right] \\ 
+& \log \BE_{(o,o';g) \sim D}\left[\mathrm{exp}\left(\mathcal{S}(\phi(o);\phi(g)) + 1 - \gamma \mathcal{S}(\phi(o');\phi(g)) \right)\right]],
\end{split}}
\end{equation}
where $\mu_0(o;g)$ is the distribution of initial frame conditioned on the goal frame $g$ and $D(o,o';g)$ is the goal-conditioned distribution of two successive intermediate frames.
In VIP, the value function is implicitly parameterized as a similarity metric (e.g., $L_2$ distance) in the embedding space $V(o;g) := \mathcal{S}(\phi(o);\phi(g))$, making it both a representation learning and a reward learning algorithm. Since it does not depend on actions, VIP can be pre-trained on large-scale human video datasets. The resulting implicit value function serves the dual purposes of (1) visual representation for unseen robot tasks, and (2) goal-conditioned dense reward specification. In particular, given a goal $g$, VIP assigns a potential-based reward at each time $t$: 
\begin{equation}
\label{eq:vip-reward}
R(o_t,o_{t+1};g) := \mathcal{S}(\phi(o_{t+1});\phi(g)) - \mathcal{S}(\phi(o_t);\phi(g))
\end{equation}

\paragraph{Vision-Language Representation Pre-Training for Control.} 
We assume access to a dataset of language-annotated videos $D = \{v_i := (o_1^i,...,g^i; l^i)\}_{i=1}^N$, where each $o \in O:=\mathbb{R}^{H \times W \times 3}$ is a raw RGB image, $g^i$ the last frame of the video, and $l^i$ is the textual annotation associated with $v^i$, describing the video outcome in $g^i$.
As the video dataset can be out-of-domain with respect to our robot agent (e.g., human videos), we do not assume access to action labels. Datasets of this nature, such as human daily activity videos~\citep{damen2018scaling, miech2019howto100m,grauman2022ego4d}, are readily available for research use. A pre-training algorithm $\mathcal{A}$ ingests this training data and
outputs vision-language encoders
$(\phi, \psi) := \mathcal{A}(D)$, where the vision encoder $\phi : \mathbb{R}^{H \times W \times 3} \rightarrow \mathbb{R}^K$ and the language encoder $\psi: L \rightarrow \mathbb{R}^K$, where $L$ is the space of natural strings, each map to the same $K$-dimensional vision-language representation space.

A standard way to learn a vision-language representation is by learning a cross-modal joint-embedding~\citep{lecun2022path} that aligns the modalities via contrastive learning. Specifically, the two modalities are semantically aligned by minimizing the InfoNCE objective~\citep{oord2018representation}:
\vspace{-0.2cm}
\begin{equation}
\label{eq:infonce-first}
{\small
\mathcal{L}_{\text{InfoNCE}}(\phi, \psi) = \BE_{p(o,l)}\left[-\log \frac{ e^{\mathcal{S}(\phi(o);\psi(l))}}{\BE_{D(o')}\left[e^{\mathcal{S}(\phi(o');\psi(l))}\right]} \right],}
\end{equation}
where $\mathcal{S}$ is a choice of similarity metric. Intuitively, this objective aims to attract the representations of matching image-text pairs $(o,l)$, while repelling mismatching pairs. Many state-of-art vision-language models~\citep{radford2021learning, jia2021scaling, li2022blip} train with this InfoNCE objective at scale to deliver strong zero-shot performance on a myriad of vision-language tasks.

\vspace{-0.3cm}
\section{LIV: Language-Image Value Learning}
\label{sec:method}

\subsection{Algorithm}
We begin by extending the VIP framework to multi-modal goal specifications. This is straightforward given the goal-conditioned nature of Eq.~\eqref{eq:vip}, since we can simply replace encoded image goal $\phi(g)$ with encoded text goal $\psi(l)$ and optimize for a \textit{multi-modal} VIP objective:
\begin{equation}
\label{eq:vip2}
{\small
\begin{split}
&\mathcal{L}(\phi, \psi)  = \\ 
&+\BE_{p(g)}[(1-\gamma) \BE_{\mu_0(o;g)}[-\mathcal{S}(\phi(o);\phi(g))] \\ 
& \underbrace{+ \log \BE_{(o,o';g) \sim D}\left[\mathrm{exp}\left(\mathcal{S}(\phi(o);\phi(g)) + 1 - \gamma \mathcal{S}(\phi(o');\phi(g)) \right)\right]]}_{\text{VIP-I}} \\ 
& + \BE_{p(l)}[(1-\gamma) \BE_{\mu_0(o;l)}\left[- \mathcal{S}(\phi(o);\psi(l))\right] \\ 
& \underbrace{+ \log \BE_{(o,o';l) \sim D}\left[\mathrm{exp}\left(\mathcal{S}(\phi(o);\psi(l)) + 1 - \gamma \mathcal{S}(\phi(o');\psi(l)) \right)\right]]}_{\text{VIP-L}}
\end{split}}
\end{equation}

As shown, this objective consists of two independent components; VIP-I (Image) encourages the representation to encode an \textit{image} goal-conditioned value function, 
and likewise, VIP-L (Language) for \textit{language} goal. %

At first glance, this objective does not appear to be directly optimizing for semantic alignment between goals in the two modalities, 
as the respective modality-specific VIP objectives are independently optimized. Without alignment, semantically equivalent goals in the respective modality may actually be distant in the representation space. This is undesirable for reward specification, which requires visual grounding of linguistic task descriptions. 
Intriguingly, in the next section, we show that such semantic alignment is in fact automatically achieved from optimizing Eq.~\eqref{eq:vip2}.

\subsection{Theoretical Analysis}
\label{sec:theoretical-analysis}
Now, we show that Eq.~\eqref{eq:vip2} in fact naturally optimizes semantic alignment between multi-modal goals with a simple data augmentation applied to the training videos.
Specifically, if we were to consider a \textit{degenerate} distribution of videos, i.e., videos consisting of solely static text-aligned frames $v=((o,o);l)$, we recover InfoNCE from VIP-L:
\begin{proposition}
\label{proposition:vip-to-clip}
Let the video distribution consist of solely degenerate videos of repeated frames that align with the text annotation, $D := \{v:=((g,g;l))\}$. Then, the VIP-L objective is equivalent to the InfoNCE objective up to a constant: 
\begin{equation}
\label{eq:infonce}
\resizebox{\columnwidth}{!}{
$
\mathcal{L}_{\text{VIP-L}}(\phi, \psi) = \BE_{p(g,l)}\left[-\log \frac{ e^{(1-\gamma) \mathcal{S}(\phi(g);\psi(l))}}{\BE_{D(g')}\left[e^{(1-\gamma) \mathcal{S}(\phi(g');\psi(l))}\right]} \right]+1,$}
\end{equation}
where $p(g,l)$ is the distribution of goal frame and text pair.
\end{proposition}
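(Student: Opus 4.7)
My plan is to substitute the degenerate video distribution directly into the VIP-L summand and show that, after elementary simplifications, every piece of it matches a term in the InfoNCE objective, with the only residue being an additive constant of $+1$. The key observation driving the argument is that replacing arbitrary videos by the degenerate pair $((g,g);l)$ collapses both the single-frame initial distribution and the two-frame transition distribution to point masses, which turns the VIP-L Bellman-style term into a purely static image/text contrastive term.

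First, I would unpack the two components of VIP-L separately. Under $D=\{((g,g);l)\}$, the conditional initial-frame distribution $\mu_0(o;l)$ is a point mass at the goal $g$ paired with $l$, so the first component $\BE_{p(l)}[(1-\gamma)\BE_{\mu_0(o;l)}[-\mathcal{S}(\phi(o);\psi(l))]]$ simplifies immediately to $\BE_{p(g,l)}[-(1-\gamma)\mathcal{S}(\phi(g);\psi(l))]$, which is exactly (up to sign) the logarithm of the numerator in the target InfoNCE expression.

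Second, I would process the log-sum-exp component. Since a degenerate video only supports $o=o'=g$, the exponent $\mathcal{S}(\phi(o);\psi(l)) + 1 - \gamma\mathcal{S}(\phi(o');\psi(l))$ collapses to $(1-\gamma)\mathcal{S}(\phi(g);\psi(l))+1$. The crucial subtlety — and the main place where care is required — is that the sampling distribution for $(o,o')$ inside the $\log$ is the dataset marginal over pairs, not the conditional given the outer $l$; thus the sampled goal $g'$ varies independently of $l$ and plays exactly the role of the negative sample $g'\sim D(g')$ in the target's denominator. Pulling the constant $e^{1}$ out of the expectation and applying $\log$ then yields $1+\log\BE_{g'\sim D}[\exp((1-\gamma)\mathcal{S}(\phi(g');\psi(l)))]$.

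Finally, I would combine the two simplified components inside a single $\BE_{p(g,l)}$ and recognize the difference between the first piece and the log-sum-exp piece as precisely $-\log$ of the InfoNCE ratio in the statement, leaving the additive $+1$ as the claimed constant. The main obstacle is notational rather than computational: being explicit that the $l$ under the inner expectation is the same $l$ as the outer loop while $(o,o')$ is drawn from the dataset marginal, so that negatives are unconditioned on $l$. Once this is pinned down, the rest reduces to direct algebraic rearrangement.
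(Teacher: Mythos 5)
Your proposal is correct and follows essentially the same route as the paper's proof: substitute the degenerate videos so that $o=o'=g$ collapses the Bellman exponent to $1+(1-\gamma)\mathcal{S}(\phi(g');\psi(l))$, identify $p(l)\mu_0(o;l)$ with $p(g,l)$ and the pair distribution with the marginal $D(g')$, and pull the constant $e^{1}$ out of the log-sum-exp to obtain the $+1$. Your explicit remark that the negatives inside the $\log$ are drawn from the dataset marginal while $l$ is held fixed from the outer expectation is exactly the reduction the paper performs, just stated more carefully.
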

The proof is in Appendix~\ref{appendix:proof}. This result, though simple to derive, has several important implications. First, note that Eq.~\eqref{eq:infonce} is precisely what CLIP~\citep{radford2021learning} optimizes (Eq.~\eqref{eq:infonce-first}, modulo the constant factor) by contrastively learning similarity between matching image-text pairs. The fact that this objective can be obtained by optimizing VIP-L with a degenerate video distribution suggests that VIP-L is a natural generalization of the InfoNCE objective to the decision making setting, where the data is temporal. In practice, as we will show, this degenerate video distribution can be trivially obtained by augmenting any existing annotated video in the dataset by repeating the last frame.

This finding also directly suggests a method for \textit{fine-tuning} pre-trained contrastive vision-language models (e.g., CLIP) for control: use LIV on in-domain labeled videos such as text-annotated robot demonstrations. 
Several concurrent works~\citep{goyal2022finetune,dong2022clip} have suggested that fine-tuning a pre-trained model using the same objective (in particular, using CLIP objective to fine-tune CLIP model)
can be more effective than fine-tuning using the downstream task objective.
When working with CLIP-like pre-trained contrastive joint-embeddings, it is then natural to fine-tune them for control with the LIV objective, which is but a natural extension of CLIP that exploits sequential, goal-directed video data. 

As we show in our experiments, using the LIV objective to fine-tune the pre-trained CLIP model is far more effective than using the CLIP objective. CLIP fine-tuning aligns the last frame in the video to its text description but fails to leverage earlier frames from the same video sequence.

\begin{algorithm*}[t]
\caption{Language-Image Value Learning (LIV)}
\label{algo:vip}
\begin{algorithmic}[1]
\small
\STATE \textbf{Require}: Offline text-annotated (human) videos $D = \{ (o^i_1,...,g^i;l^i)\}_{i=1}^N$, vision-language architecture $(\phi, \psi)$ 
\FOR{\text{number of training iterations}}
\STATE Sample sub-trajectories $\{o^i_{t}, ..., o^i_{k}, o^i_{k+1},..., g^i; l^i\}_{i=1}^B\sim D, t \in [1, h_i-1], t\leq k < h_i, 
\forall i$

\STATE
\hspace{-0.15cm} \resizebox{0.95\textwidth}{!}{
$\mathcal{L}_{\text{VIP-I}}(\phi) :=\frac{1-\gamma}{B} \sum_{i=1}^B \left[-\mathcal{S}(\phi(o^i_t);\phi(g^i))\right] + \log \frac{1}{B} \sum_{i=1}^B \mathrm{exp}\left[\mathcal{S}(\phi(o^i_k);\phi(g^i)) +1 - \gamma \mathcal{S}(\phi(o^i_{k+1});\phi(g^i))\right]$}
\STATE 
$\mathcal{L}_{\text{InfoNCE}}(\phi, \psi) := \frac{1-\gamma}{B} \sum_{i=1}^B \left[-\log \frac{ e^{(1-\gamma)\mathcal{S}(\phi(g^i);\psi(l^i))}}{\frac{1}{B} \sum_{j = 1}^B\left[e^{(1-\gamma)\mathcal{S}(\phi(g^j);\psi(l^i))}\right]} \right] $ 

\STATE Update $(\phi, \psi)$ using SGD: $(\phi, \psi) \leftarrow (\phi, \psi) - \alpha\nabla (\mathcal{L}_{\text{VIP-I}}(\phi) + \mathcal{L}_{\text{InfoNCE}}(\phi, \psi))$
\ENDFOR
\end{algorithmic}
\end{algorithm*}

\subsection{Implementation}
Based on the analysis above, we see that, despite initial appearances, 
Eq.~\eqref{eq:vip2} does in fact naturally induce semantic alignment between visual and language goals.
In particular, it is implicitly optimizing for a pathway that connects the two modalities via mutual information maximization. Given this pathway that makes goals interchangeable across modalities, the final LIV objective optimizes the VIP objective in just one modality in conjunction with the vision-language InfoNCE objective in Eq.~\eqref{eq:infonce}, which we find to be a simple yet effective objective:

\vspace{-0.5cm}
\begin{equation}
\label{eq:vip2-practical}
{\small
\begin{split}
&\mathcal{L}_{\text{LIV}}(\phi, \psi)  = \\ 
&+\BE_{p(g)}[(1-\gamma) \BE_{\mu_0(o;g)}[- \mathcal{S}(\phi(o);\phi(g))] \\ 
& \underbrace{+ \log \BE_{(o,o';g) \sim D}\left[\mathrm{exp}\left(\mathcal{S}(\phi(o);\phi(g)) + 1 - \gamma \mathcal{S}(\phi(o');\phi(g)) \right)\right]]}_{\text{VIP-I}} \\ 
& \underbrace{+ \BE_{p(g,l)}\left[-\log \frac{ e^{(1-\gamma) \mathcal{S}(\phi(g);\psi(l))}}{\BE_{D(g')}\left[e^{(1-\gamma) \mathcal{S}(\phi(g');\psi(l))}\right]} \right]}_{\text{InfoNCE}},
\end{split}}
\end{equation}
We use a $\gamma$-weighted cosine similarity metric for $\mathcal{S}(\phi(\cdot), \psi(\cdot)):= \frac{1}{1-\gamma}CS(\phi(\cdot), \psi(\cdot))$ so it represents a valid value function.
Pseudocode is presented in Algorithm~\ref{algo:vip}. 
In each gradient step, a minibatch of video sub-clip consisting of initial, intermediate, and final frames are sampled along with the corresponding text annotations. These samples are used to estimate the VIP-I and InfoNCE losses, which then update the vision-language architecture via back-propagation. 

\textbf{LIV Pre-Training.} We have shown above that the LIV objective subsumes CLIP-style contrastive objectives. In implementing LIV, we stay close to CLIP architecture and design choices to allow fair comparison to pre-trained CLIP with ResNet50~\citep{he2016deep} vision backbone. 
 Initialized with CLIP weights, we pre-train LIV on EpicKitchen~\citep{damen2018scaling}, a text-annotated ego-centric video dataset of humans completing tasks in diverse household kitchens; this dataset consists of 90k video segments, totalling 20M frames and 20k unique text annotations, and offers diverse camera views and action-centric videos, making it an ideal choice for vision-language pre-training. See Appendix~\ref{appendix:model-details} for more pre-training details.

 \begin{figure*}[t!]
\centering

\subfigure[\texttt{open cabinet} (Human)]{\label{fig:opencabinet}\includegraphics[trim=13cm 0cm 0cm 0cm, clip, width=0.49\textwidth]{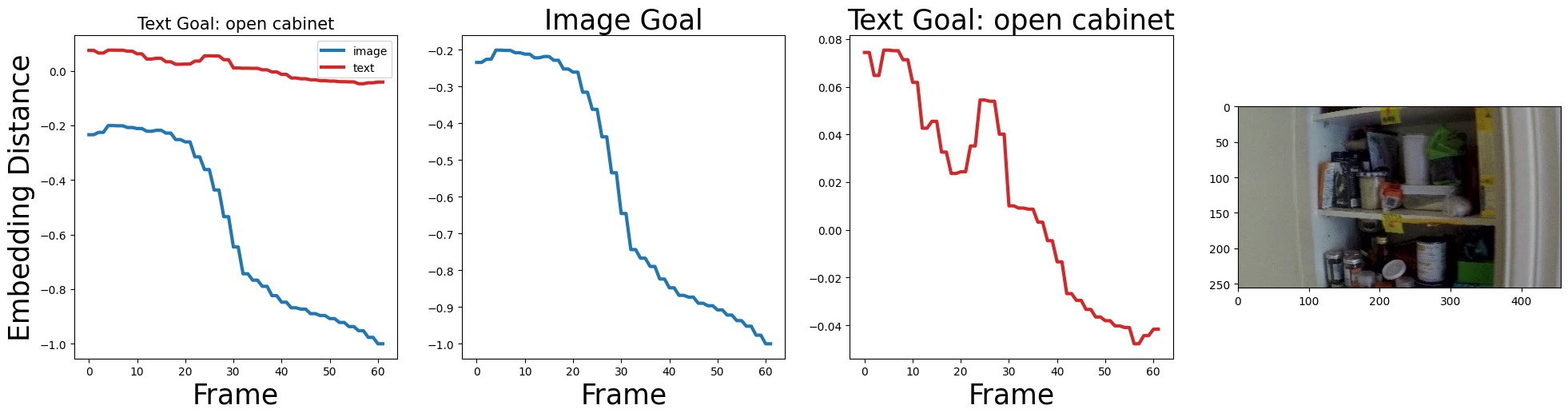}}
\subfigure[\texttt{open microwave} (Human)]{\label{fig:openmicrowave}\includegraphics[trim=13cm 0cm 0cm 0cm, clip, width=0.49\textwidth]{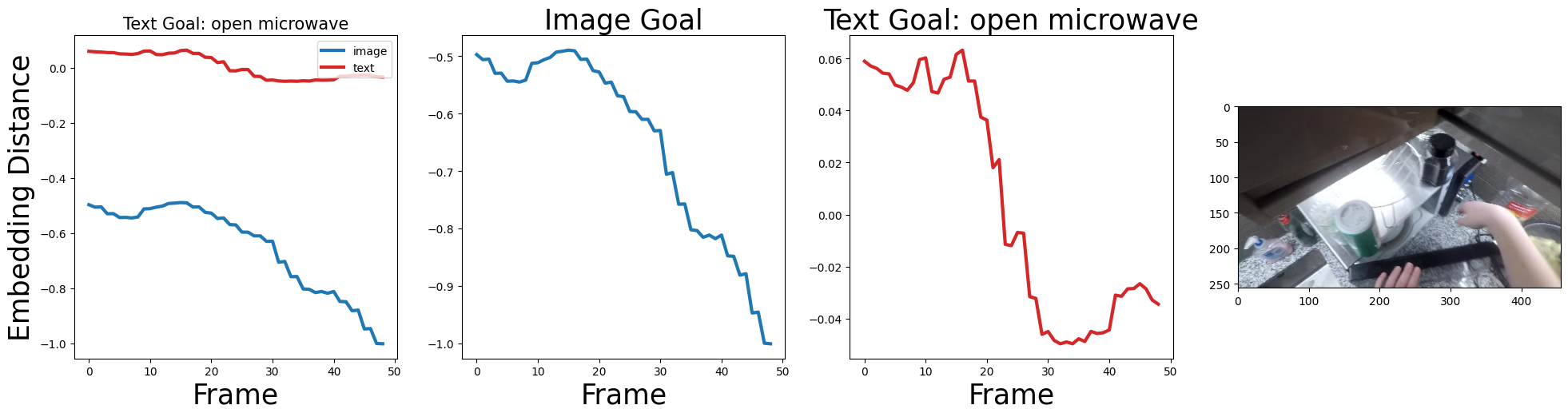}}

\subfigure[\texttt{close the fridge} (Robot)]{\label{fig:closefridge}\includegraphics[trim=18cm 0cm 0cm 0cm, clip, width=0.49\textwidth]{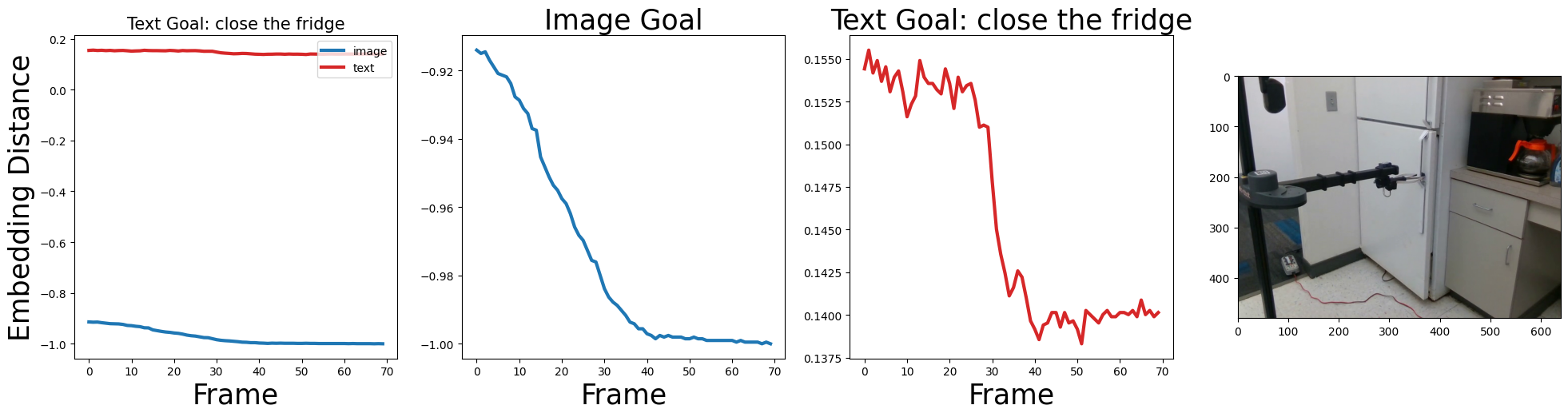}}
\subfigure[\texttt{put the hat on the bottle} (Robot)]{\label{fig:puthatonbottle}\includegraphics[trim=18cm 0cm 0cm 0cm, clip, width=0.49\textwidth]{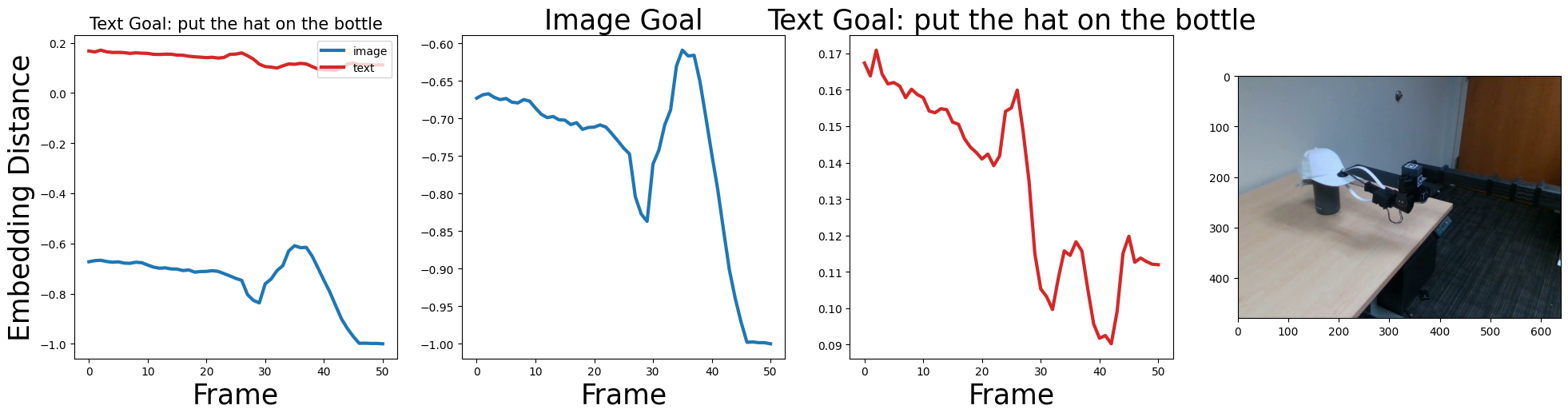}}

\caption{\textbf{LIV Zero-Shot Multi-Modal Cost on Unseen Human and Robot Videos.} The y-axis is the real-valued negative cosine similarities computed in the LIV embedding space.}
\label{fig:zero-shot-reward}
\end{figure*}

\section{Experiments}
\label{sec:experiments}

Our experiments aim to answer the following questions: 
\begin{enumerate}[topsep=0pt,itemsep=0ex,partopsep=1ex,parsep=0ex,leftmargin=3ex]
\item Can LIV produce multi-modal goal-conditioned rewards? 
\item Does pre-trained LIV enable effective vision-language representations for control?
\item Can LIV successfully fine-tune pre-existing vision-language models?
\end{enumerate}

To assess LIV's reward learning capability, we assess whether the pre-trained LIV can zero-shot provide multi-modal rewards for unseen text-annotated human and robot videos (Section~\ref{section:pre-training-reward}) and use its reward function for model-based planning to solve language-specified tasks (Section~\ref{section:reward-learning}). We evaluate LIV's effectiveness for pre-training (Section~\ref{section:pre-training}) and fine-tuning (Section~\ref{section:fine-tuning}) by using the resulting representations as the vision-language backbone in language-conditioned imitation learning (LCBC) in both simulations and a real robot platform.
LIV model and training code are released: \href{https://github.com/penn-pal-lab/LIV}{github.com/penn-pal-lab/LIV.} Our qualitative results, including animated reward curves and real-robot videos, are best viewed on our project website: \href{https://penn-pal-lab.github.io/LIV}{penn-pal-lab.github.io/LIV}

\begin{figure}[t!]
\centering
\subfigure[MetaWorld]{\label{fig:metaworld}\includegraphics[width=0.45\columnwidth]{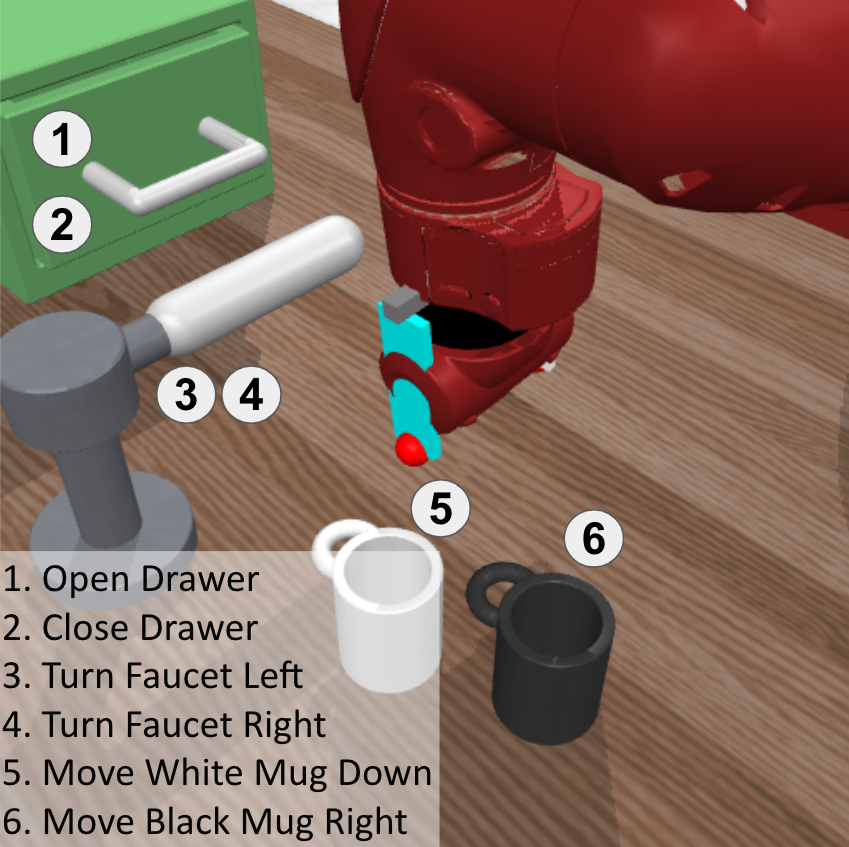}}
\subfigure[FrankaKitchen]{\label{fig:frankakitchen}\includegraphics[width=0.455\columnwidth]{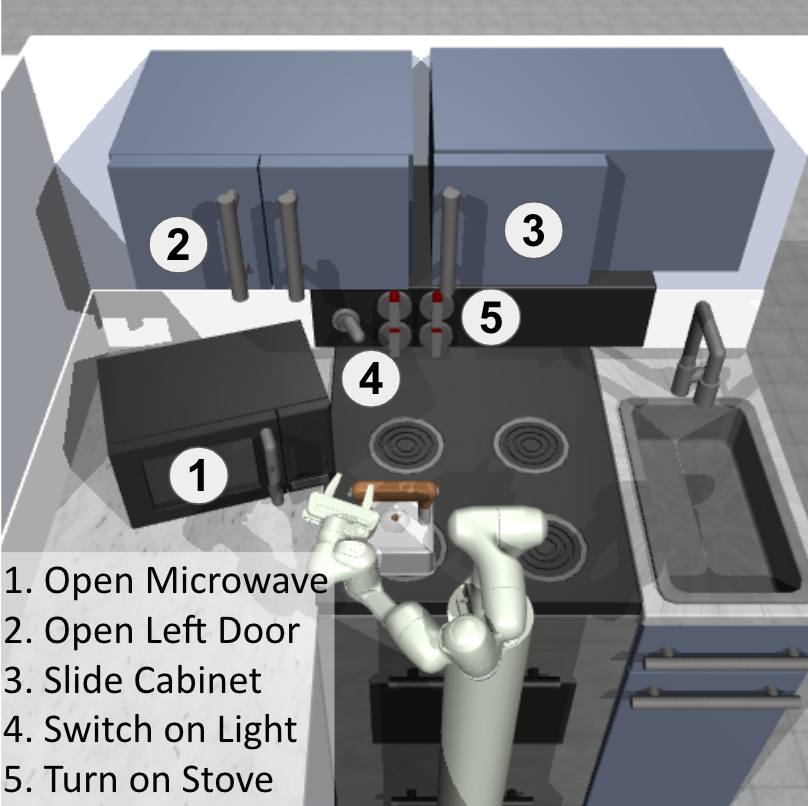}}
\subfigure[RealRobot]{\label{fig:realworld}\includegraphics[width=0.9\columnwidth]{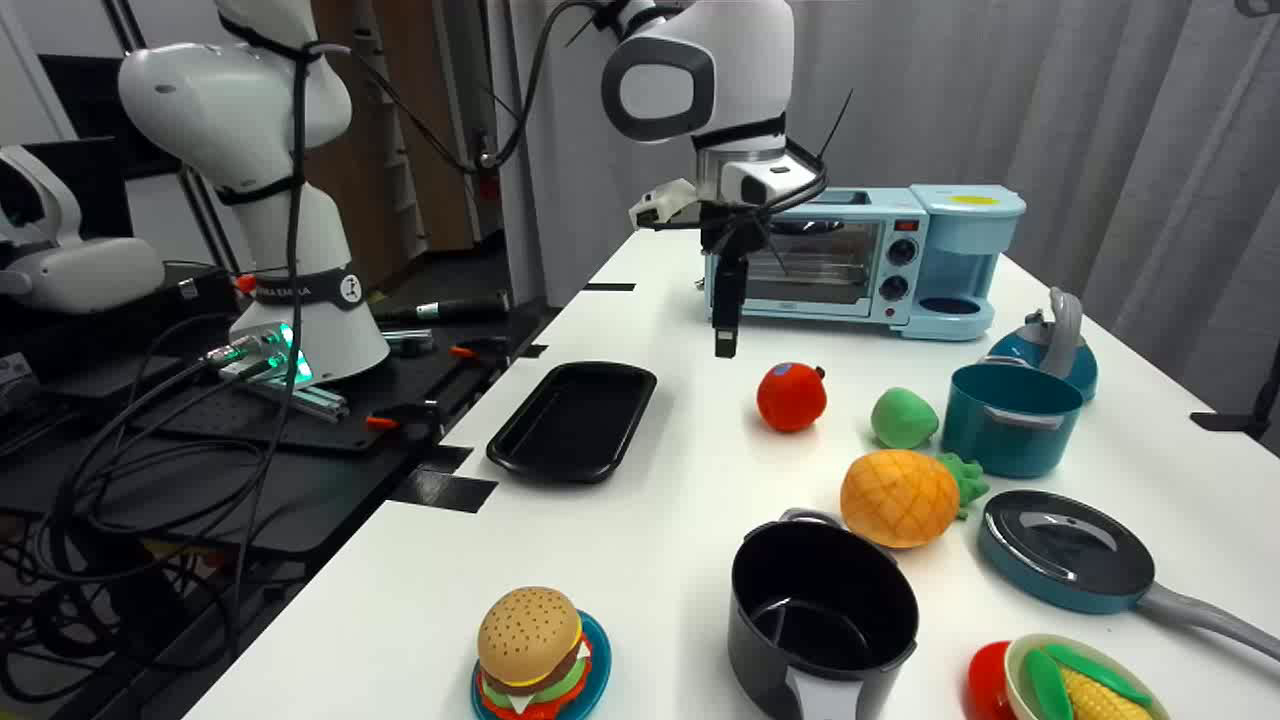}}
\resizebox{\columnwidth}{!}{
\begin{tabular}{llrrrrrrrrrrr}
\toprule
Environment & Train Tasks & Test Tasks & Horizon & Dataset Size & Dataset Type     \\
\midrule
MetaWorld & 1000 & 6 & 20 & 1M & Random \\ 
FrankaKitchen & 5 & 5 & 50 & 12.5K & Machine Demos \\ 
RealRobot & 9 & 9 & 100 & 90k & Human Teleoperation \\ 
\bottomrule 
\end{tabular}}
\caption{Multi-Task Vision-Language Environments.}
\label{figure:environments}
\end{figure}

\begin{figure*}[t!]
\centering
\includegraphics[width=0.9\textwidth]{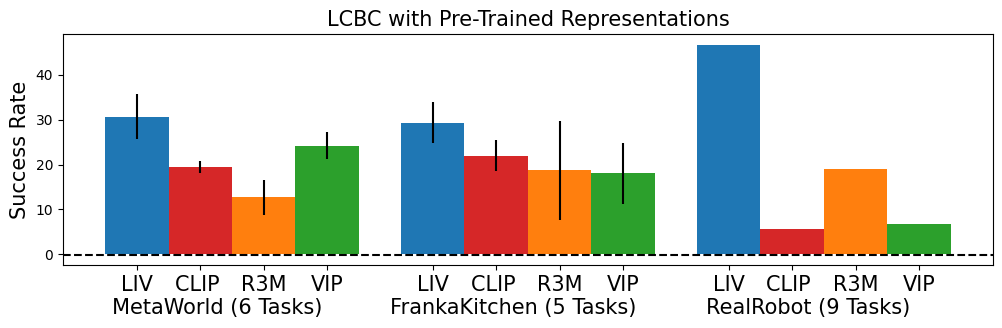}
\vspace{-0.3cm}
\caption{\textbf{Pre-Trained Representations for Language-Conditioned Behavior Cloning:} LIV achieves the highest average success rates across three distinct environments.}
\label{fig:pre-training-results}
\end{figure*}

\subsection{Pre-Trained LIV as Zero-Shot Reward}
\label{section:pre-training-reward} 
Recall that LIV objective encourages similarities in the embedding space to encode multi-modal goal-conditioned value functions. Intuitively, on videos depicting direct progress towards a goal, the distances (negative cosine similarities $-\mathcal{S}(\phi(o), \phi(g))$ or $-\mathcal{S}(\phi(o), \psi(l))$) between video frames $o$ and image or language goals $g$ or $l$, should steadily fall over time, reflecting progression towards those goals. Before using the embeddings for policy learning, we therefore first ask: to what extent does LIV pre-training achieve this property, and does it hold on new, unseen domains with robots?
For unseen, goal-directed human videos from the test split of EpicKitchen, Figure~\ref{fig:zero-shot-reward} (a\&b) plot frame distances from image and language goals, validating that this property indeed holds within the training distribution. However, to use LIV for training robots, we are interested in generalization to robot videos. This is challenging, since these pre-trained LIV models have never encountered any robot data. Yet, as Figure~\ref{fig:zero-shot-reward} (c\&d) show, LIV embedding distances are still informative of task progress, indicating transferability to unseen viewpoints and embodiments. Interestingly, in Figure~\ref{fig:zero-shot-reward} (d), we observe a bump in the middle of both cost curves; upon examination, this corresponds to a portion of the video where the robot lifts the hat unnecessarily high for the task of putting the hat on the bottle. In other words, the cost bump is indicative of sub-optimal actions.

In Appendix~\ref{appendix:qualitative-hellorobot}, we include more examples, such as untrimmed robot videos where LIV rewards can effectively distinguish the opposite actions (e.g., \texttt{open the fridge} and \texttt{close the fridge}) that both appear in the video, and some failure cases.  
These results are promising; the fact that LIV embedding distances track task value functions means that we can use them to assign dense rewards (Eq.~\eqref{eq:vip-reward}) based purely on image or language goal specifications. Throughout these results, the distance plots are less smooth with respect to the \textit{language} goal than the \textit{image} goal, due to the language grounding gap; we show later (Section~\ref{section:reward-learning}) that this can be effectively resolved with in-domain LIV fine-tuning.
Finally, we also plot these distance progression plots with CLIP in Appendix~\ref{appendix:qualitative-epickitchen} and find that LIV's zero-shot reward capability is largely absent in CLIP. 
In the rest of our experiments, we present several ways in which the LIV pre-trained model and training objective can aid in language-conditioned robotic manipulation.

\subsection{Policy Learning Environments}
We consider three multi-task language conditioned visual manipulation environments, spanning two simulation and one real robot setup. The two simulated environments extend the MetaWorld~\citep{yu2020meta} and FrankaKitchen~\citep{gupta2019relay} benchmarks. The MetaWorld benchmark is taken from~\citet{nair2022learning}, which has also released a dataset consisting of 1M transitions collected via random actions for policy learning; the trajectories are labeled with task descriptions based on true environment state. The FrankaKitchen benchmark takes existing tasks supported in the environment but makes them specified via fixed language descriptions; we use the tasks and the dataset from~\citet{nair2022r3m}. 

The real-world environment (referred to as RealRobot) consists of a table-top toy kitchen setup, in which a Franka robot is tasked with placing various fruits, \texttt{\{apple, pear, pineapple\}} in various containers, \texttt{\{tray, black pot, green pot\}} in the scene given a sentence task description (e.g., \texttt{apple in black pot}). Unlike prior works that have considered simplified action space~\citep{shridhar2022cliport} or reduced action frequency~\citep{brohan2022rt, mees2022matters}, we use 6-DOF end-effector displacement as the action space with 15Hz control. For each task, we collect 100 trajectories using human teleoperation with the fruits randomly initialized in the center workspace of the table for each trajectory. This environment is more challenging than the simulated ones because it requires grounding language to fine-grained spatial understanding of the scene in order to pick up the correct fruit and place it into the correct container. The environments and associated tasks are illustrated in Figure~\ref{figure:environments}; for all three environments, the visual observation consists of the 3rd-person view shown in the figure. For RealRobot, we additionally include a robot wrist view. See Appendix~\ref{appendix:environments} for more details on these environments and datasets.

\subsection{Pre-Trained LIV as Representation}
\label{section:pre-training} 
Pre-trained on diverse annotated human videos accomplishing daily household tasks, we posit that LIV can serve as an effective multi-modal representation for language-conditioned robotics manipulation.

\paragraph{Baselines.} We compare against \textbf{CLIP}~\citep{radford2021learning}, a state-of-art vision-language representation that has seen wide adoption in various robotics tasks~\citep{shridhar2022cliport,cui2022can,khandelwal2022simple, tam2022semantic}; as LIV is trained using the CLIP architecture and initialized with CLIP weights, this is the closest comparison. 
We also compare against \textbf{R3M}~\citep{nair2022r3m} and \textbf{VIP}~\citep{ma2022vip}, two state-of-art pre-trained visual representations. While unimodal, both are strong baselines; they are pre-trained on ego-centric videos similar to EpicKitchen~\citep{grauman2022ego4d} and employ similar vision architecture (ResNet50) as LIV. We adapt them to the vision-language setting by coupling them with a pre-trained DistilBERT encoder~\citep{sanh2019distilbert} to process language input. We note that R3M does employ this very same model for shaping its visual representation during training, making DistilBERT a natural design choice.

\paragraph{Policy Learning and Evaluation.} 
Our experimental protocols closely follow prior works on evaluating pre-trained representations for robotic manipulation both in simulation~\citep{nair2022r3m, xiao2022masked} and on a real robot platform~\citep{ma2022vip}. At a high level, we perform language-conditioned behavior cloning (LCBC), where a single multi-task policy, which takes concatenated current observation embedding and language task embedding as input, is trained for all tasks within an environment using the given environment dataset. The representations are kept frozen during policy learning, and we employ a simple MLP architecture on top of the pre-trained representations for the policy network. For the simulation environments, as in~\citep{nair2022r3m}, we report the success rate of the best training checkpoints on 20 evaluation rollouts per task.
For the real environment, we evaluate only the final checkpoint on 10 evaluation rollouts per task due to the cost of real-world policy evaluation~\citep{mandlekar2021matters}. For each backbone representation, we train policies using 3 random seeds in simulation and report the mean and the standard deviation of the success rates over all evaluation episodes; on RealRobot, we train one policy per backbone representation. See Appendix~\ref{appendix:lcbc} for additional details on training hyperparameters.

\begin{figure*}[t!]
\centering
\includegraphics[width=0.9\textwidth]{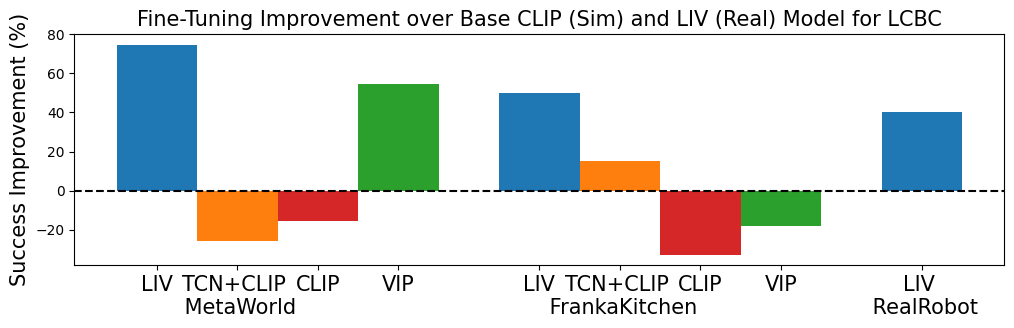}
\vspace{-0.3cm}
\caption{\textbf{LIV is an effective fine-tuner for pre-trained vision-language representations for robotic control.} Compared to the base model performance (dash line at 0), LIV fine-tuning consistently improves success rate by more than 40\% in all environments.}
\label{figure:fine-tuning-aggregate}
\end{figure*}

\paragraph{Results.} 
Full results are reported in Figure~\ref{fig:pre-training-results} (full numeric results in Appendix~\ref{appendix:lcbc}). 
As shown, our pre-trained LIV model, without any in-domain fine-tuning, performs best in all environments. In particular, LIV gains are largest 
on RealRobot, which is as expected since it is pre-trained on real data.
R3M and VIP are also both pre-trained on human videos, yet they do not achieve the same level of performance as LIV, suggesting the importance of a proper vision-language pre-training objective for control that cannot be substituted by ad-hoc combinations of vision-only pre-trained representations and language models. This finding holds true especially on the real-world environment. There, given the lack of any ``shortcut'' visual cues to infer the correct action (e.g., objects of interest always appear centrally in the image), an effective vision-language representation needs to facilitate learning language-grounded hand-eye coordination for effective policy learning. Indeed, in Figure~\ref{figure:realrobot_loss}, Appendix~\ref{appendix:lcbc-additional-results}, we visualize the BC training loss each policy realizes on RealRobot and find that the policy with LIV backbone achieves significantly lower training loss than baselines; this result indicates that the baselines underfit as they fail to distinguish the correct action based on the conditioning text description and consequently achieve much lower success rates. In Appendix~\ref{appendix:lcbc-additional-results}, we also include an additional experiment ablating language task encoding with one-hot encoding. There, we find that LIV provides the best contextual representation~\citep{sodhani2021multi} that improves policy learning even for our simulation tasks that do not heavily rely on language for task disambiguation.

Building on these real-world results, in Appendix~\ref{appendix:realrobot-longhorizon}, we further find that the LIV policy can even \textit{zero-shot} generalize to long-horizon composite tasks consisting of chained atomic tasks. This requires out-of-distribution generalization to unseen starting configurations for the atomic tasks.

\subsection{Fine-Tuning}
\label{section:fine-tuning}

Next, we show that the LIV objective can also be used to effectively fine-tune pre-trained vision-language models for downstream policy learning. Specifically, we first take the same in-domain task data as in Section~\ref{section:pre-training} to fine-tune the pre-trained representations using the LIV objective (Algorithm~\ref{algo:vip}) as well as several alternatives (see below). Then, as before, we freeze the fine-tuned representations and train policies on top using LCBC.

\textbf{Baselines.} The LIV objective can be understood as a principled combination of VIP and CLIP, so we consider a baseline that amounts to an alternative combination of a visual self-supervised learning (SSL) objective and the CLIP objective. In particular, we employ time contrastive learning (TCN)~\citep{sermanet2018time} because TCN is a SSL objective that also utilizes temporal information and has been incorporated in a prior work~\citep{nair2022r3m}. 
For this baseline, we sweep through $\alpha=\{0.1,0.3,0.5,0.8\}$ and report the best value for weighing the two terms in the combined objective: $\alpha \text{TCN} + (1-\alpha)\text{CLIP}$. In addition, we consider using the \textbf{CLIP} InfoNCE objective (Eq.~\eqref{eq:infonce}) in isolation, as well as the \textbf{VIP-I} objective (Eq.~\eqref{eq:vip}); these fine-tuning methods are ablations of LIV that focus only on semantic (CLIP) or temporal value (VIP) alignment. To isolate the impact of LIV fine-tuning independently of LIV pre-training, we use the pre-trained CLIP as the base model to be fine-tuned using LIV and the baselines; In Appendix~\ref{appendix:fine-tuning-additional-results}, we also present results on fine-tuning the pre-trained LIV. 
On RealRobot, due to the cost of real-world evaluation, we evaluate only LIV fine-tuning to see whether a strong fine-tuning algorithm vetted in simulation can remain effective in the real world. Likewise, we use the pre-trained LIV as the base model since CLIP LCBC performs poorly (Section~\ref{section:pre-training}). 

\textbf{Results.} The relative performance improvements in percentages are shown in Figure~\ref{figure:fine-tuning-aggregate}. LIV fine-tuning substantially improves policy success rates on all three environments that vary significantly in terms of in-domain dataset size and quality. On RealRobot, we find that LIV fine-tuning remains effective and significantly improves the already strong base LIV model, validating its efficacy for real-world usage in both the pre-training and the fine-tuning stages. in Appendix~\ref{appendix:realrobot-reward-curves}, we compare the embedding distance curves of LIV before and after LIV fine-tuning and show that LIV fine-tuning indeed acquires a smoother multi-modal representation. Qualitatively, we observe that the policy trained using fine-tuned LIV generates more coordinated and smoother grasping and placing motions that are critical for task success on RealRobot. 

The weak performance of CLIP and VIP demonstrates that neither semantic or temporal-perception fine-tuning alone is sufficient for robotics manipulation; in the case of FrankaKitchen, due to the smaller dataset size, both ablations overfit and in fact decrease policy performance. TCN+CLIP similarly delivers mixed results, highlighting that LIV's fine-tuning capability cannot be easily obtained by combining another SSL objective, even one that considers temporal information, with the CLIP objective. Furthermore, given that TCN and CLIP do not bear natural connection to one another, we find their combination to be quite sensitive to the weighting parameter $\alpha$; see Appendix~\ref{appendix:fine-tuning-additional-results} for the numerical results for a comprehensive sweep over $\alpha$ for TCN+CLIP. In particular, whereas $\alpha = 0.5$ works best on FrankaKitchen, the same value leads to diverged training on MetaWorld; decreasing $\alpha$ to $0.1$ prevents divergence on MetaWorld, but the resulting policy is even worse than that of the base CLIP model on MetaWorld and substantially worse than $\alpha=0.5$ on FrankaKitchen. LIV does not suffer from this issue; our theoretical result in Section~\ref{sec:theoretical-analysis} suggests that the VIP and CLIP components in LIV should be weighed exactly in one-to-one, and indeed, we find that this ratio works well on all environments, requiring no hyperparameter tuning. In Appendix~\ref{appendix:fine-tuning-additional-results}, we further show that LIV fine-tuning is effective for different base models (e.g., LIV) and different in-domain dataset sizes, showcasing its versatility over a large spectrum of domain specificities.

\begin{figure}[t!]
\centering
\subfigure[Pre-Trained LIV]{\label{fig:liv-pre-trained}\includegraphics[trim=1cm 0cm 25cm 0.8cm, clip, width=0.32\columnwidth]{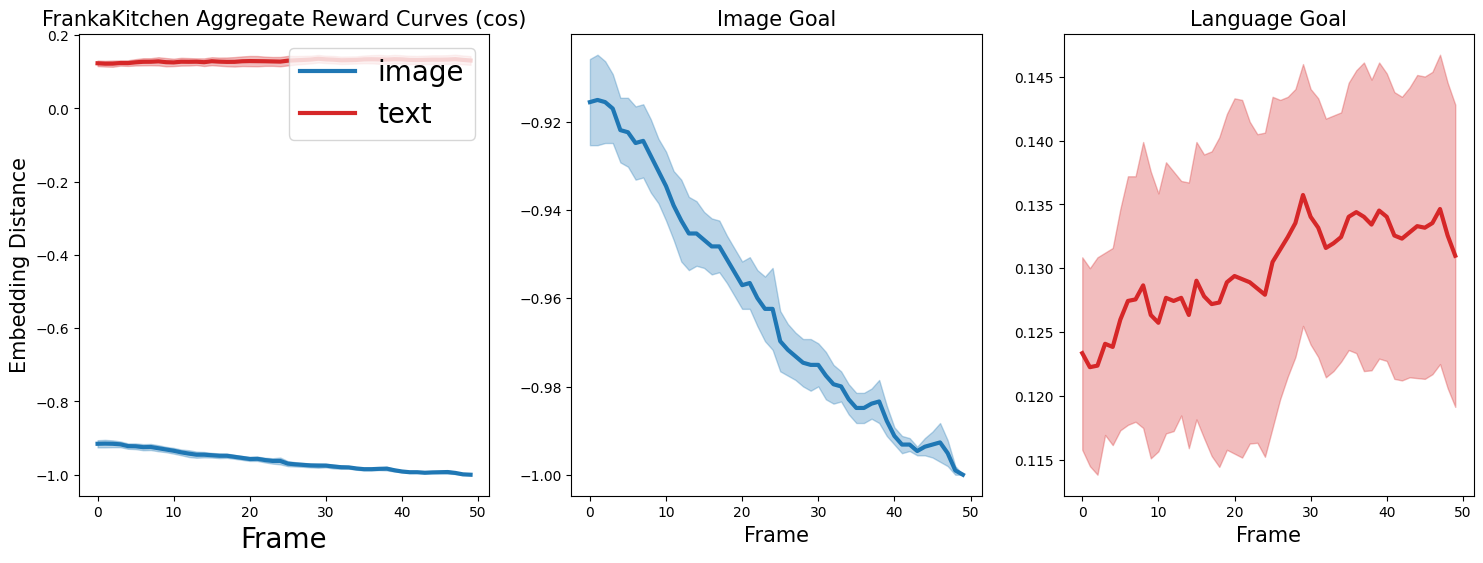}}
\subfigure[LIV FT]{\label{fig:liv-fine-tuning}\includegraphics[trim=1cm 0cm 25cm 0.8cm, clip, width=0.32\columnwidth]{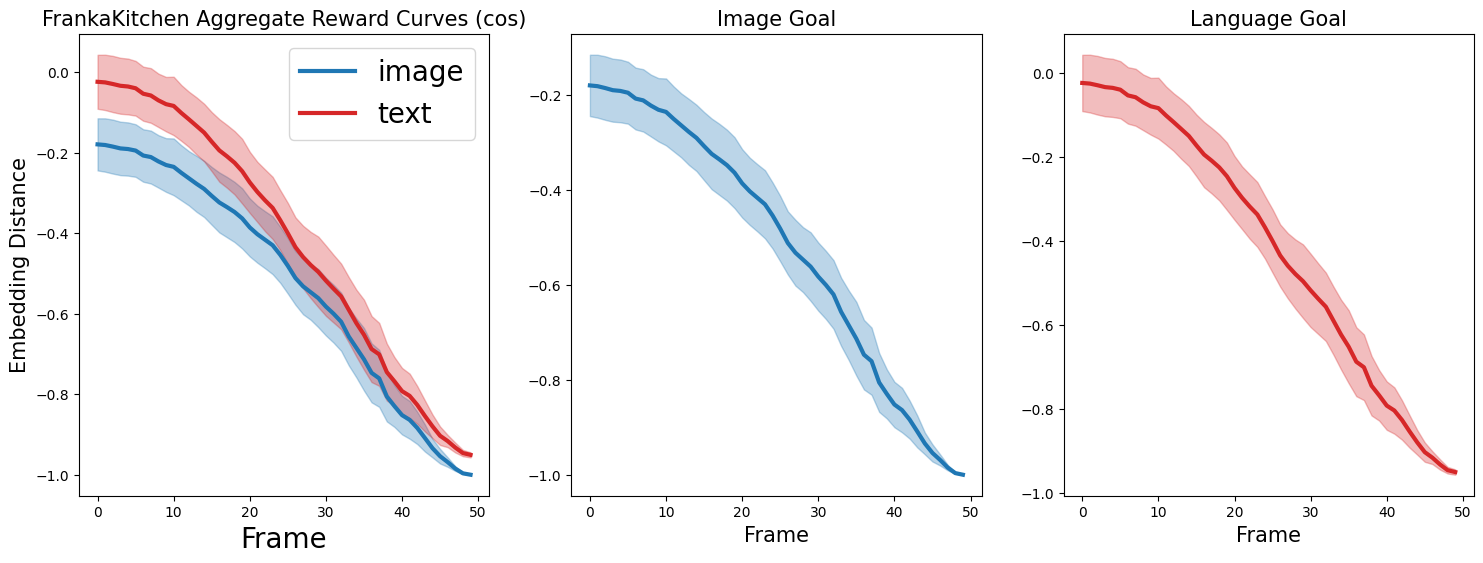}}
\subfigure[CLIP FT]{\label{fig:clip-fine-tuning}\includegraphics[trim=1cm 0cm 25cm 0.8cm, clip, width=0.32\columnwidth]{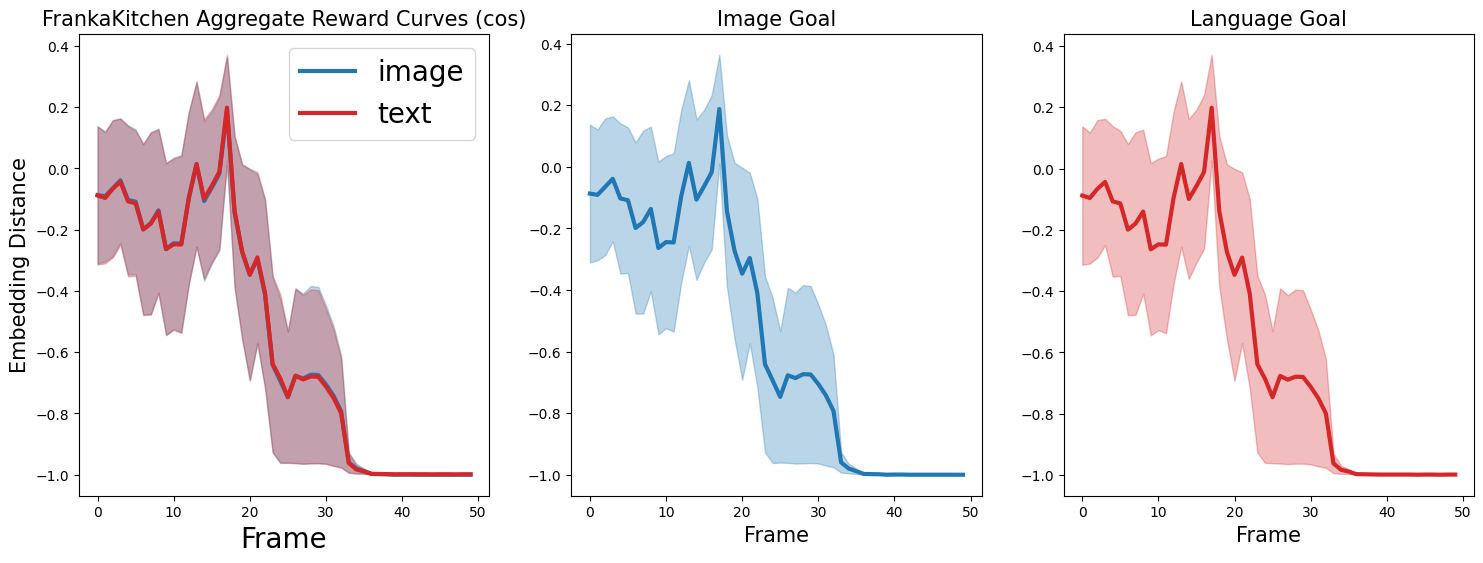}}
\vspace{-0.2cm}
\caption{\textbf{LIV fine-tuning (FT) improves both temporal coherence and semantic alignment of the multi-modal cost curves;} in contrast, CLIP fine-tuning over-aggressively aligns the goal frame-text pair that damages representations of earlier frames.}
\label{figure:fine-tuning-comparison}

\end{figure}

\paragraph{Qualitative Analysis.} 
To better understand LIV fine-tuning's empirical gains, we visually compare the LIV models fine-tuned by LIV and CLIP objectives by overlaying their multi-modal cost curves (Section~\ref{section:pre-training-reward}) over the in-domain demonstrations used for fine-tuning on FrankaKitchen. We use one demonstration from each distinct task and average over all curves to produce Figure~\ref{figure:fine-tuning-comparison}; individual task reward curves as well as the plots for TCN+CLIP are included in Appendix~\ref{appendix:frankakitchen-reward-curves}. As shown, the cost curves for the pre-trained LIV are far apart, illustrative of the real-to-sim domain gap that handicaps in-domain language grounding. LIV (Fig.~\ref{fig:liv-fine-tuning}) naturally preserves a structured representation, in which the visual and text similarity curves are near-linear, monotonic, and converge to similar locations, suggesting that the representation has successfully constructed a latent value function that aligns goals in different modalities and preserve temporal coherence due to the recursive nature of value functions. In contrast, CLIP fine-tuning (Fig.~\ref{fig:clip-fine-tuning}), as intended, maps the goal frame and the goal text to a near identical point in the representation space as the two curves almost perfectly overlap. However, the CLIP similarity scores of the intermediate frames exhibit uneven trends and high variance, indicating that the representation lacks temporal coherence possibly due to over-prioritizing semantic alignment. 
This temporal consistency is crucial for effective representation as it automatically prevents incorrect observation aliasing and preserves feature scale across time for effective policy learning~\citep{ma2022vip,nair2022r3m}. Yet, we have already shown that temporal consistency alone is not sufficient, evident from VIP-I's poor fine-tuning performance on FrankaKitchen due to overfitting. As such, LIV's unique effectivenss can be attributed to its principled combination of VIP and CLIP that enables the two objectives to regularize each other and work together in learning a structured, multi-modal latent value function.
On our project website, we also animate these reward curves generated using successive fine-tuning checkpoints to qualitatively capture the learning dynamics of LIV fine-tuning.

\begin{table}[t]
\caption{\textbf{Planning with Learned Reward:} LIV-EPIC is both the strongest zero-shot and adapted reward model.}
\centering 
\resizebox{1.0\columnwidth}{!}{
\begin{tabular}{llrrrrrrrrrrr}
\toprule
Model &  FrankaKitchen & MetaWorld  \\
\midrule
LIV (Pre-Trained) & 1.3 $\pm$ {\scriptsize 0.8} & 29.7 $\pm$ {\scriptsize 4.7} \\
LIV (LIV Fine-Tuned) & \textbf{20.0} $\pm$ {\scriptsize 4.5} & \textbf{55.2} $\pm$ {\scriptsize 5.5} \\
\midrule 
CLIP & 0 $\pm$ {\scriptsize 0.0} & 18.2 $\pm$ {\scriptsize 4.4} \\ 
CLIP (LIV Fine-Tuned) & 15.2 $\pm$ {\scriptsize 4.6} & 45.3 $\pm$ {\scriptsize 2.5}\\
CLIP (CLIP Fine-Tuned) & 3.2 $\pm$ {\scriptsize 0.9} & 30.7 $\pm$ {\scriptsize 3.3} \\
\midrule 
LOREL & 9.6 $\pm$ {\scriptsize 3.0}& 47.9 $\pm$ {\scriptsize 3.2} \\ 
LOREL (R3M Initialized) & 16.8 $\pm$ {\scriptsize 3.8} & 47.5 $\pm$ {\scriptsize 12.7}  \\
\midrule 
R3M (Pre-Trained) & $8.8^*$ $\pm$ {\scriptsize 2.7} & 18.3 $\pm$ {\scriptsize 7.7} \\ 
R3M (R3M Fine-Tuned) & 16.1 $\pm$ {\scriptsize 4.2} & 43.9 $\pm$ {\scriptsize 3.2} \\ 
\bottomrule
\end{tabular}}
\label{table:reward-learning-results}
\end{table}

\subsection{LIV Reward-Based Behavior Synthesis}
\label{section:reward-learning}

Finally, we demonstrate how to use LIV's dense goal-conditioned reward generation capability to directly acquire new language conditioned skills, in particular, via language-reward model predictive control.

\paragraph{Baselines.} 
We compare to LOREL~\citep{nair2022learning}, a state-of-art language-conditioned reward learning method that learns a classifier $f_\theta(o_0, o_t, l)$ for whether the progression from $o_0$ to $o_t$ completes task description $l$. In addition, we compare to R3M~\citep{nair2022r3m}, which incorporates a similar language-progression score function trained via contrastive learning. As the original LOREL does not leverage pre-trained visual representations, we also consider a variant of LOREL initialized with R3M model weights to improve its performance. Similarly, to circumvent the out-of-domain language grounding problem for pre-trained R3M, we consider a variant where we fine-tune the pre-trained R3M using the R3M objective on the same in-domain data used for LIV fine-tuning.  

\paragraph{Evaluations.}
We evaluate all reward models in a model-based planning setup, in which a trajectory optimizer synthesizes a sequence of actions to be executed in the true environment based on scores from the utilized reward function. For all LIV models (pre-trained and fine-tuned), we use the potential-based reward (Eq.~\ref{eq:vip-reward}) as in~\citet{ma2022vip} using encoded language goal: 
On MetaWorld, we use the identical experimental setup as in~\citet{nair2022learning}, whereas on FrankaKitchen, we closely follow the experimental protocol of~\citet{ma2022vip}. See Appendix~\ref{appendix:reward-learning} for more details on our model-based planning experiments.

\paragraph{Results.} The aggregated success rate over all test instances are reported in Table~\ref{table:reward-learning-results}. LIV fine-tuning significantly improves the success rate over the base pre-trained LIV and CLIP models, and the fine-tuned LIV achieves the best performance overall across both benchmarks; LIV fine-tuning's performance also steadily improves as the quality of the base model improves.  LOREL and R3M models both perform adequately with the respective modifications we have introduced, but they still trail behind LIV; in Appendix~\ref{appendix:reward-learning-additional-results}, we present additional analysis on these results. In conclusion, we have shown that LIV's implicit value learning paradigm gracefully combines both reward and representation learning in one unified objective and results in a flexible combined model that is highly effective across all evaluation settings.

\section{Conclusion}
\label{sec:conclusion}

We have presented the Language-Image Value Learning (LIV) algorithm. LIV is at once the first pre-training objective for control-oriented vision-language representations, a fine-tuning objective for domain-specific language grounding, and a language-conditioned task reward function. Trained on large generic human video datasets and fine-tuned on small robotics datasets, LIV outperforms state-of-the-art approaches in each of three distinct evaluation settings and successfully operates on real-world robotic tasks.

\section*{Acknowledgements}\label{sec:acks}
This work was supported in part by ONR grant number N00014-22-1-2677 and gift funding from
NEC Laboratories.

\section*{Author Contributions}
YJM conceived the project idea, implemented models and experiments, and wrote the paper. WL and VS assisted on the real-world experiments. VK, AZ, OB, and DJ provided feedback on the project and edited the writing.

 \clearpage 
 \newpage 
 
\bibliography{references}

\begin{thebibliography}{58}
\providecommand{\natexlab}[1]{#1}
\providecommand{\url}[1]{\texttt{#1}}
\expandafter\ifx\csname urlstyle\endcsname\relax
  \providecommand{\doi}[1]{doi: #1}\else
  \providecommand{\doi}{doi: \begingroup \urlstyle{rm}\Url}\fi

\bibitem[Ahn et~al.(2022)Ahn, Brohan, Brown, Chebotar, Cortes, David, Finn,
  Gopalakrishnan, Hausman, Herzog, et~al.]{ahn2022can}
Ahn, M., Brohan, A., Brown, N., Chebotar, Y., Cortes, O., David, B., Finn, C.,
  Gopalakrishnan, K., Hausman, K., Herzog, A., et~al.
\newblock Do as i can, not as i say: Grounding language in robotic affordances.
\newblock \emph{arXiv preprint arXiv:2204.01691}, 2022.

\bibitem[Babaeizadeh et~al.(2017)Babaeizadeh, Finn, Erhan, Campbell, and
  Levine]{babaeizadeh2017stochastic}
Babaeizadeh, M., Finn, C., Erhan, D., Campbell, R.~H., and Levine, S.
\newblock Stochastic variational video prediction.
\newblock \emph{arXiv preprint arXiv:1710.11252}, 2017.

\bibitem[Brohan et~al.(2022)Brohan, Brown, Carbajal, Chebotar, Dabis, Finn,
  Gopalakrishnan, Hausman, Herzog, Hsu, et~al.]{brohan2022rt}
Brohan, A., Brown, N., Carbajal, J., Chebotar, Y., Dabis, J., Finn, C.,
  Gopalakrishnan, K., Hausman, K., Herzog, A., Hsu, J., et~al.
\newblock Rt-1: Robotics transformer for real-world control at scale.
\newblock \emph{arXiv preprint arXiv:2212.06817}, 2022.

\bibitem[Brown et~al.(2020)Brown, Mann, Ryder, Subbiah, Kaplan, Dhariwal,
  Neelakantan, Shyam, Sastry, Askell, et~al.]{brown2020language}
Brown, T., Mann, B., Ryder, N., Subbiah, M., Kaplan, J.~D., Dhariwal, P.,
  Neelakantan, A., Shyam, P., Sastry, G., Askell, A., et~al.
\newblock Language models are few-shot learners.
\newblock \emph{Advances in neural information processing systems},
  33:\penalty0 1877--1901, 2020.

\bibitem[Cui et~al.(2022)Cui, Niekum, Gupta, Kumar, and Rajeswaran]{cui2022can}
Cui, Y., Niekum, S., Gupta, A., Kumar, V., and Rajeswaran, A.
\newblock Can foundation models perform zero-shot task specification for robot
  manipulation?
\newblock In \emph{Learning for Dynamics and Control Conference}, pp.\
  893--905. PMLR, 2022.

\bibitem[Damen et~al.(2018)Damen, Doughty, Farinella, Fidler, Furnari, Kazakos,
  Moltisanti, Munro, Perrett, Price, et~al.]{damen2018scaling}
Damen, D., Doughty, H., Farinella, G.~M., Fidler, S., Furnari, A., Kazakos, E.,
  Moltisanti, D., Munro, J., Perrett, T., Price, W., et~al.
\newblock Scaling egocentric vision: The epic-kitchens dataset.
\newblock In \emph{Proceedings of the European Conference on Computer Vision
  (ECCV)}, pp.\  720--736, 2018.

\bibitem[Dong et~al.(2022)Dong, Bao, Zhang, Chen, Gu, Zhang, Yuan, Chen, Wen,
  and Yu]{dong2022clip}
Dong, X., Bao, J., Zhang, T., Chen, D., Gu, S., Zhang, W., Yuan, L., Chen, D.,
  Wen, F., and Yu, N.
\newblock Clip itself is a strong fine-tuner: Achieving 85.7\% and 88.0\% top-1
  accuracy with vit-b and vit-l on imagenet.
\newblock \emph{arXiv preprint arXiv:2212.06138}, 2022.

\bibitem[Fan et~al.(2022)Fan, Wang, Jiang, Mandlekar, Yang, Zhu, Tang, Huang,
  Zhu, and Anandkumar]{fan2022minedojo}
Fan, L., Wang, G., Jiang, Y., Mandlekar, A., Yang, Y., Zhu, H., Tang, A.,
  Huang, D.-A., Zhu, Y., and Anandkumar, A.
\newblock Minedojo: Building open-ended embodied agents with internet-scale
  knowledge.
\newblock \emph{arXiv preprint arXiv:2206.08853}, 2022.

\bibitem[Goyal et~al.(2021)Goyal, Niekum, and Mooney]{goyal2021pixl2r}
Goyal, P., Niekum, S., and Mooney, R.
\newblock Pixl2r: Guiding reinforcement learning using natural language by
  mapping pixels to rewards.
\newblock In \emph{Conference on Robot Learning}, pp.\  485--497. PMLR, 2021.

\bibitem[Goyal et~al.(2017)Goyal, Kahou, Michalski, Materzyńska, Westphal,
  Kim, Haenel, Fruend, Yianilos, Mueller-Freitag, Hoppe, Thurau, Bax, and
  Memisevic]{goyal2017something}
Goyal, R., Kahou, S.~E., Michalski, V., Materzyńska, J., Westphal, S., Kim,
  H., Haenel, V., Fruend, I., Yianilos, P., Mueller-Freitag, M., Hoppe, F.,
  Thurau, C., Bax, I., and Memisevic, R.
\newblock The "something something" video database for learning and evaluating
  visual common sense, 2017.

\bibitem[Goyal et~al.(2022)Goyal, Kumar, Garg, Kolter, and
  Raghunathan]{goyal2022finetune}
Goyal, S., Kumar, A., Garg, S., Kolter, Z., and Raghunathan, A.
\newblock Finetune like you pretrain: Improved finetuning of zero-shot vision
  models.
\newblock \emph{arXiv preprint arXiv:2212.00638}, 2022.

\bibitem[Grauman et~al.(2022)Grauman, Westbury, Byrne, Chavis, Furnari,
  Girdhar, Hamburger, Jiang, Liu, Liu, et~al.]{grauman2022ego4d}
Grauman, K., Westbury, A., Byrne, E., Chavis, Z., Furnari, A., Girdhar, R.,
  Hamburger, J., Jiang, H., Liu, M., Liu, X., et~al.
\newblock Ego4d: Around the world in 3,000 hours of egocentric video.
\newblock In \emph{Proceedings of the IEEE/CVF Conference on Computer Vision
  and Pattern Recognition}, pp.\  18995--19012, 2022.

\bibitem[Guhur et~al.(2022)Guhur, Chen, Garcia, Tapaswi, Laptev, and
  Schmid]{guhur2022instruction}
Guhur, P.-L., Chen, S., Garcia, R., Tapaswi, M., Laptev, I., and Schmid, C.
\newblock Instruction-driven history-aware policies for robotic manipulations.
\newblock \emph{arXiv preprint arXiv:2209.04899}, 2022.

\bibitem[Gupta et~al.(2019)Gupta, Kumar, Lynch, Levine, and
  Hausman]{gupta2019relay}
Gupta, A., Kumar, V., Lynch, C., Levine, S., and Hausman, K.
\newblock Relay policy learning: Solving long-horizon tasks via imitation and
  reinforcement learning.
\newblock \emph{arXiv preprint arXiv:1910.11956}, 2019.

\bibitem[He et~al.(2016)He, Zhang, Ren, and Sun]{he2016deep}
He, K., Zhang, X., Ren, S., and Sun, J.
\newblock Deep residual learning for image recognition.
\newblock In \emph{Proceedings of the IEEE conference on computer vision and
  pattern recognition}, pp.\  770--778, 2016.

\bibitem[Ilharco et~al.(2022)Ilharco, Wortsman, Gadre, Song, Hajishirzi,
  Kornblith, Farhadi, and Schmidt]{ilharco2022patching}
Ilharco, G., Wortsman, M., Gadre, S.~Y., Song, S., Hajishirzi, H., Kornblith,
  S., Farhadi, A., and Schmidt, L.
\newblock Patching open-vocabulary models by interpolating weights.
\newblock \emph{arXiv preprint arXiv:2208.05592}, 2022.

\bibitem[Jang et~al.(2022)Jang, Irpan, Khansari, Kappler, Ebert, Lynch, Levine,
  and Finn]{jang2022bc}
Jang, E., Irpan, A., Khansari, M., Kappler, D., Ebert, F., Lynch, C., Levine,
  S., and Finn, C.
\newblock Bc-z: Zero-shot task generalization with robotic imitation learning.
\newblock In \emph{Conference on Robot Learning}, pp.\  991--1002. PMLR, 2022.

\bibitem[Jia et~al.(2021)Jia, Yang, Xia, Chen, Parekh, Pham, Le, Sung, Li, and
  Duerig]{jia2021scaling}
Jia, C., Yang, Y., Xia, Y., Chen, Y.-T., Parekh, Z., Pham, H., Le, Q., Sung,
  Y.-H., Li, Z., and Duerig, T.
\newblock Scaling up visual and vision-language representation learning with
  noisy text supervision.
\newblock In \emph{International Conference on Machine Learning}, pp.\
  4904--4916. PMLR, 2021.

\bibitem[Karamcheti et~al.(2023)Karamcheti, Nair, Chen, Kollar, Finn, Sadigh,
  and Liang]{karamcheti2023language}
Karamcheti, S., Nair, S., Chen, A.~S., Kollar, T., Finn, C., Sadigh, D., and
  Liang, P.
\newblock Language-driven representation learning for robotics.
\newblock \emph{arXiv preprint arXiv:2302.12766}, 2023.

\bibitem[Khandelwal et~al.(2022)Khandelwal, Weihs, Mottaghi, and
  Kembhavi]{khandelwal2022simple}
Khandelwal, A., Weihs, L., Mottaghi, R., and Kembhavi, A.
\newblock Simple but effective: Clip embeddings for embodied ai.
\newblock In \emph{Proceedings of the IEEE/CVF Conference on Computer Vision
  and Pattern Recognition}, pp.\  14829--14838, 2022.

\bibitem[Kingma \& Ba(2014)Kingma and Ba]{kingma2014adam}
Kingma, D.~P. and Ba, J.
\newblock Adam: A method for stochastic optimization.
\newblock \emph{arXiv preprint arXiv:1412.6980}, 2014.

\bibitem[Kirichenko et~al.(2022)Kirichenko, Izmailov, and
  Wilson]{kirichenko2022last}
Kirichenko, P., Izmailov, P., and Wilson, A.~G.
\newblock Last layer re-training is sufficient for robustness to spurious
  correlations.
\newblock \emph{arXiv preprint arXiv:2204.02937}, 2022.

\bibitem[Kumar et~al.(2022)Kumar, Raghunathan, Jones, Ma, and
  Liang]{kumar2022fine}
Kumar, A., Raghunathan, A., Jones, R., Ma, T., and Liang, P.
\newblock Fine-tuning can distort pretrained features and underperform
  out-of-distribution.
\newblock \emph{arXiv preprint arXiv:2202.10054}, 2022.

\bibitem[LeCun(2022)]{lecun2022path}
LeCun, Y.
\newblock A path towards autonomous machine intelligence version 0.9. 2,
  2022-06-27.
\newblock \emph{Open Review}, 62, 2022.

\bibitem[Lee et~al.(2022)Lee, Chen, Tajwar, Kumar, Yao, Liang, and
  Finn]{lee2022surgical}
Lee, Y., Chen, A.~S., Tajwar, F., Kumar, A., Yao, H., Liang, P., and Finn, C.
\newblock Surgical fine-tuning improves adaptation to distribution shifts.
\newblock \emph{arXiv preprint arXiv:2210.11466}, 2022.

\bibitem[Li et~al.(2022)Li, Li, Xiong, and Hoi]{li2022blip}
Li, J., Li, D., Xiong, C., and Hoi, S.
\newblock Blip: Bootstrapping language-image pre-training for unified
  vision-language understanding and generation.
\newblock \emph{arXiv preprint arXiv:2201.12086}, 2022.

\bibitem[Liu et~al.(2022)Liu, Lee, Lee, and Abbeel]{liu2022instruction}
Liu, H., Lee, L., Lee, K., and Abbeel, P.
\newblock Instruction-following agents with jointly pre-trained vision-language
  models.
\newblock \emph{arXiv preprint arXiv:2210.13431}, 2022.

\bibitem[Lynch \& Sermanet(2020)Lynch and Sermanet]{lynch2020language}
Lynch, C. and Sermanet, P.
\newblock Language conditioned imitation learning over unstructured data.
\newblock \emph{arXiv preprint arXiv:2005.07648}, 2020.

\bibitem[Lynch et~al.(2022)Lynch, Wahid, Tompson, Ding, Betker, Baruch,
  Armstrong, and Florence]{lynch2022interactive}
Lynch, C., Wahid, A., Tompson, J., Ding, T., Betker, J., Baruch, R., Armstrong,
  T., and Florence, P.
\newblock Interactive language: Talking to robots in real time.
\newblock \emph{arXiv preprint arXiv:2210.06407}, 2022.

\bibitem[Ma et~al.(2022{\natexlab{a}})Ma, Shen, Jayaraman, and
  Bastani]{ma2022smodice}
Ma, Y.~J., Shen, A., Jayaraman, D., and Bastani, O.
\newblock Smodice: Versatile offline imitation learning via state occupancy
  matching.
\newblock \emph{arXiv preprint arXiv:2202.02433}, 2022{\natexlab{a}}.

\bibitem[Ma et~al.(2022{\natexlab{b}})Ma, Sodhani, Jayaraman, Bastani, Kumar,
  and Zhang]{ma2022vip}
Ma, Y.~J., Sodhani, S., Jayaraman, D., Bastani, O., Kumar, V., and Zhang, A.
\newblock Vip: Towards universal visual reward and representation via
  value-implicit pre-training.
\newblock \emph{arXiv preprint arXiv:2210.00030}, 2022{\natexlab{b}}.

\bibitem[Ma et~al.(2022{\natexlab{c}})Ma, Yan, Jayaraman, and
  Bastani]{ma2022far}
Ma, Y.~J., Yan, J., Jayaraman, D., and Bastani, O.
\newblock How far i'll go: Offline goal-conditioned reinforcement learning via
  $ f $-advantage regression.
\newblock \emph{arXiv preprint arXiv:2206.03023}, 2022{\natexlab{c}}.

\bibitem[Mahmoudieh et~al.(2022)Mahmoudieh, Pathak, and
  Darrell]{pmlr-v162-mahmoudieh22a}
Mahmoudieh, P., Pathak, D., and Darrell, T.
\newblock Zero-shot reward specification via grounded natural language.
\newblock In Chaudhuri, K., Jegelka, S., Song, L., Szepesvari, C., Niu, G., and
  Sabato, S. (eds.), \emph{Proceedings of the 39th International Conference on
  Machine Learning}, volume 162 of \emph{Proceedings of Machine Learning
  Research}, pp.\  14743--14752. PMLR, 17--23 Jul 2022.
\newblock URL \url{https://proceedings.mlr.press/v162/mahmoudieh22a.html}.

\bibitem[Majumdar et~al.(2023)Majumdar, Yadav, Arnaud, Ma, Chen, Silwal, Jain,
  Berges, Abbeel, Malik, et~al.]{majumdar2023we}
Majumdar, A., Yadav, K., Arnaud, S., Ma, Y.~J., Chen, C., Silwal, S., Jain, A.,
  Berges, V.-P., Abbeel, P., Malik, J., et~al.
\newblock Where are we in the search for an artificial visual cortex for
  embodied intelligence?
\newblock \emph{arXiv preprint arXiv:2303.18240}, 2023.

\bibitem[Mandlekar et~al.(2021)Mandlekar, Xu, Wong, Nasiriany, Wang, Kulkarni,
  Fei-Fei, Savarese, Zhu, and Mart{\'\i}n-Mart{\'\i}n]{mandlekar2021matters}
Mandlekar, A., Xu, D., Wong, J., Nasiriany, S., Wang, C., Kulkarni, R.,
  Fei-Fei, L., Savarese, S., Zhu, Y., and Mart{\'\i}n-Mart{\'\i}n, R.
\newblock What matters in learning from offline human demonstrations for robot
  manipulation.
\newblock \emph{arXiv preprint arXiv:2108.03298}, 2021.

\bibitem[Mees et~al.(2022{\natexlab{a}})Mees, Hermann, and
  Burgard]{mees2022matters}
Mees, O., Hermann, L., and Burgard, W.
\newblock What matters in language conditioned robotic imitation learning over
  unstructured data.
\newblock \emph{IEEE Robotics and Automation Letters}, 7\penalty0 (4):\penalty0
  11205--11212, 2022{\natexlab{a}}.

\bibitem[Mees et~al.(2022{\natexlab{b}})Mees, Hermann, Rosete-Beas, and
  Burgard]{mees2022calvin}
Mees, O., Hermann, L., Rosete-Beas, E., and Burgard, W.
\newblock Calvin: A benchmark for language-conditioned policy learning for
  long-horizon robot manipulation tasks.
\newblock \emph{IEEE Robotics and Automation Letters}, 2022{\natexlab{b}}.

\bibitem[Miech et~al.(2019)Miech, Zhukov, Alayrac, Tapaswi, Laptev, and
  Sivic]{miech2019howto100m}
Miech, A., Zhukov, D., Alayrac, J.-B., Tapaswi, M., Laptev, I., and Sivic, J.
\newblock Howto100m: Learning a text-video embedding by watching hundred
  million narrated video clips.
\newblock In \emph{Proceedings of the IEEE/CVF International Conference on
  Computer Vision}, pp.\  2630--2640, 2019.

\bibitem[Nair et~al.(2022{\natexlab{a}})Nair, Mitchell, Chen, Savarese, Finn,
  et~al.]{nair2022learning}
Nair, S., Mitchell, E., Chen, K., Savarese, S., Finn, C., et~al.
\newblock Learning language-conditioned robot behavior from offline data and
  crowd-sourced annotation.
\newblock In \emph{Conference on Robot Learning}, pp.\  1303--1315. PMLR,
  2022{\natexlab{a}}.

\bibitem[Nair et~al.(2022{\natexlab{b}})Nair, Rajeswaran, Kumar, Finn, and
  Gupta]{nair2022r3m}
Nair, S., Rajeswaran, A., Kumar, V., Finn, C., and Gupta, A.
\newblock R3m: A universal visual representation for robot manipulation.
\newblock \emph{arXiv preprint arXiv:2203.12601}, 2022{\natexlab{b}}.

\bibitem[Oord et~al.(2018)Oord, Li, and Vinyals]{oord2018representation}
Oord, A. v.~d., Li, Y., and Vinyals, O.
\newblock Representation learning with contrastive predictive coding.
\newblock \emph{arXiv preprint arXiv:1807.03748}, 2018.

\bibitem[Parisi et~al.(2022)Parisi, Rajeswaran, Purushwalkam, and
  Gupta]{parisi2022unsurprising}
Parisi, S., Rajeswaran, A., Purushwalkam, S., and Gupta, A.
\newblock The unsurprising effectiveness of pre-trained vision models for
  control.
\newblock \emph{arXiv preprint arXiv:2203.03580}, 2022.

\bibitem[Radford et~al.(2019)Radford, Wu, Child, Luan, Amodei, and
  Sutskever]{radford2019language}
Radford, A., Wu, J., Child, R., Luan, D., Amodei, D., and Sutskever, I.
\newblock Language models are unsupervised multitask learners.
\newblock 2019.

\bibitem[Radford et~al.(2021)Radford, Kim, Hallacy, Ramesh, Goh, Agarwal,
  Sastry, Askell, Mishkin, Clark, et~al.]{radford2021learning}
Radford, A., Kim, J.~W., Hallacy, C., Ramesh, A., Goh, G., Agarwal, S., Sastry,
  G., Askell, A., Mishkin, P., Clark, J., et~al.
\newblock Learning transferable visual models from natural language
  supervision.
\newblock In \emph{International Conference on Machine Learning}, pp.\
  8748--8763. PMLR, 2021.

\bibitem[Rubinstein \& Kroese(2004)Rubinstein and Kroese]{rubinstein2004cross}
Rubinstein, R.~Y. and Kroese, D.~P.
\newblock \emph{The cross-entropy method: a unified approach to combinatorial
  optimization, Monte-Carlo simulation, and machine learning}, volume 133.
\newblock Springer, 2004.

\bibitem[Sanh et~al.(2019)Sanh, Debut, Chaumond, and Wolf]{sanh2019distilbert}
Sanh, V., Debut, L., Chaumond, J., and Wolf, T.
\newblock Distilbert, a distilled version of bert: smaller, faster, cheaper and
  lighter.
\newblock \emph{arXiv preprint arXiv:1910.01108}, 2019.

\bibitem[Sermanet et~al.(2018)Sermanet, Lynch, Chebotar, Hsu, Jang, Schaal,
  Levine, and Brain]{sermanet2018time}
Sermanet, P., Lynch, C., Chebotar, Y., Hsu, J., Jang, E., Schaal, S., Levine,
  S., and Brain, G.
\newblock Time-contrastive networks: Self-supervised learning from video.
\newblock In \emph{2018 IEEE international conference on robotics and
  automation (ICRA)}, pp.\  1134--1141. IEEE, 2018.

\bibitem[Shah \& Kumar(2021)Shah and Kumar]{shah2021rrl}
Shah, R. and Kumar, V.
\newblock Rrl: Resnet as representation for reinforcement learning.
\newblock \emph{arXiv preprint arXiv:2107.03380}, 2021.

\bibitem[Shridhar et~al.(2022{\natexlab{a}})Shridhar, Manuelli, and
  Fox]{shridhar2022cliport}
Shridhar, M., Manuelli, L., and Fox, D.
\newblock Cliport: What and where pathways for robotic manipulation.
\newblock In \emph{Conference on Robot Learning}, pp.\  894--906. PMLR,
  2022{\natexlab{a}}.

\bibitem[Shridhar et~al.(2022{\natexlab{b}})Shridhar, Manuelli, and
  Fox]{shridhar2022perceiver}
Shridhar, M., Manuelli, L., and Fox, D.
\newblock Perceiver-actor: A multi-task transformer for robotic manipulation.
\newblock \emph{arXiv preprint arXiv:2209.05451}, 2022{\natexlab{b}}.

\bibitem[Sodhani et~al.(2021)Sodhani, Zhang, and Pineau]{sodhani2021multi}
Sodhani, S., Zhang, A., and Pineau, J.
\newblock Multi-task reinforcement learning with context-based representations.
\newblock In \emph{International Conference on Machine Learning}, pp.\
  9767--9779. PMLR, 2021.

\bibitem[Stepputtis et~al.(2020)Stepputtis, Campbell, Phielipp, Lee, Baral, and
  Ben~Amor]{stepputtis2020language}
Stepputtis, S., Campbell, J., Phielipp, M., Lee, S., Baral, C., and Ben~Amor,
  H.
\newblock Language-conditioned imitation learning for robot manipulation tasks.
\newblock \emph{Advances in Neural Information Processing Systems},
  33:\penalty0 13139--13150, 2020.

\bibitem[Tam et~al.(2022)Tam, Rabinowitz, Lampinen, Roy, Chan, Strouse, Wang,
  Banino, and Hill]{tam2022semantic}
Tam, A.~C., Rabinowitz, N.~C., Lampinen, A.~K., Roy, N.~A., Chan, S.~C.,
  Strouse, D., Wang, J.~X., Banino, A., and Hill, F.
\newblock Semantic exploration from language abstractions and pretrained
  representations.
\newblock \emph{arXiv preprint arXiv:2204.05080}, 2022.

\bibitem[Vaswani et~al.(2017)Vaswani, Shazeer, Parmar, Uszkoreit, Jones, Gomez,
  Kaiser, and Polosukhin]{vaswani2017attention}
Vaswani, A., Shazeer, N., Parmar, N., Uszkoreit, J., Jones, L., Gomez, A.~N.,
  Kaiser, {\L}., and Polosukhin, I.
\newblock Attention is all you need.
\newblock \emph{Advances in neural information processing systems}, 30, 2017.

\bibitem[Williams et~al.(2017)Williams, Aldrich, and
  Theodorou]{williams2017model}
Williams, G., Aldrich, A., and Theodorou, E.~A.
\newblock Model predictive path integral control: From theory to parallel
  computation.
\newblock \emph{Journal of Guidance, Control, and Dynamics}, 40\penalty0
  (2):\penalty0 344--357, 2017.

\bibitem[Wortsman et~al.(2022)Wortsman, Ilharco, Gadre, Roelofs, Gontijo-Lopes,
  Morcos, Namkoong, Farhadi, Carmon, Kornblith, et~al.]{wortsman2022model}
Wortsman, M., Ilharco, G., Gadre, S.~Y., Roelofs, R., Gontijo-Lopes, R.,
  Morcos, A.~S., Namkoong, H., Farhadi, A., Carmon, Y., Kornblith, S., et~al.
\newblock Model soups: averaging weights of multiple fine-tuned models improves
  accuracy without increasing inference time.
\newblock In \emph{International Conference on Machine Learning}, pp.\
  23965--23998. PMLR, 2022.

\bibitem[Xiao et~al.(2022)Xiao, Radosavovic, Darrell, and
  Malik]{xiao2022masked}
Xiao, T., Radosavovic, I., Darrell, T., and Malik, J.
\newblock Masked visual pre-training for motor control.
\newblock \emph{arXiv preprint arXiv:2203.06173}, 2022.

\bibitem[Yu et~al.(2020)Yu, Quillen, He, Julian, Hausman, Finn, and
  Levine]{yu2020meta}
Yu, T., Quillen, D., He, Z., Julian, R., Hausman, K., Finn, C., and Levine, S.
\newblock Meta-world: A benchmark and evaluation for multi-task and meta
  reinforcement learning.
\newblock In \emph{Conference on robot learning}, pp.\  1094--1100. PMLR, 2020.

\end{thebibliography}
\bibliographystyle{icml2023}
\newpage
\appendix 
\onecolumn
\label{sec:appendix}

\section{Proof of Proposition~\ref{proposition:vip-to-clip}}
\label{appendix:proof}
In this section, we provide a full proof of Proposition~\ref{proposition:vip-to-clip} in the main text. For ease of reading, we begin by reproducing the proposition. 

\begin{proposition*}
Let the video distribution consist of solely degenerate videos of repeated frames that align with the text annotation, $D := \{v:=((g,g;l))\}$. Then, the VIP-L objective is equivalent to the InfoNCE objective up to a constant: 
\begin{equation}
\mathcal{L}_{\text{VIP-L}}(\phi, \psi) = \BE_{p(g,l)}\left[-\log \frac{ e^{(1-\gamma) \mathcal{S}(\phi(g);\psi(l))}}{\BE_{D(g')}\left[e^{(1-\gamma) \mathcal{S}(\phi(g');\psi(l))}\right]} \right]+1,
\end{equation}
where $p(g,l)$ is the distribution of goal frame and text pair.
\end{proposition*}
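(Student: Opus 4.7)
The plan is to substitute the degenerate distribution $D=\{v:=((g,g;l))\}$ directly into the VIP-L expression and simplify. I would first observe that for these trivial videos, the initial-frame distribution $\mu_0(o;l)$ conditioned on a language annotation $l$ is supported entirely on the single matched frame $g$, so $\BE_{\mu_0(o;l)}[-\mathcal{S}(\phi(o);\psi(l))] = -\mathcal{S}(\phi(g);\psi(l))$. After taking the outer expectation over language annotations, the first VIP-L term becomes $(1-\gamma)\BE_{p(g,l)}[-\mathcal{S}(\phi(g);\psi(l))]$, which is exactly the positive-pair term in an InfoNCE loss (up to the $(1-\gamma)$ scaling).

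Next I would simplify the log-sum-exp term. In a degenerate video the ``successive'' frames $o$ and $o'$ coincide with the unique frame $g'$ of that video, while the goal specification $l$ is contrastively sampled independently (this is the standard negative-sampling convention VIP inherits from the dual RL derivation). The exponent then collapses as
\begin{equation*}
\mathcal{S}(\phi(g');\psi(l)) + 1 - \gamma\,\mathcal{S}(\phi(g');\psi(l)) \;=\; (1-\gamma)\,\mathcal{S}(\phi(g');\psi(l)) + 1,
\end{equation*}
so the constant $+1$ factors out of the exponential as $e^1$, and pulling it through the $\log$ produces the stated additive $+1$.

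Combining the two pieces, the first term can be rewritten as $-\log e^{(1-\gamma)\mathcal{S}(\phi(g);\psi(l))}$ inside the expectation over $p(g,l)$, and the second term (after removing the constant) becomes $\log \BE_{D(g')}\!\bigl[e^{(1-\gamma)\mathcal{S}(\phi(g');\psi(l))}\bigr]$. Merging them under a single outer expectation $\BE_{p(g,l)}$ yields the ratio form in Eq.~\eqref{eq:infonce}, plus the constant $1$.

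The main obstacle here is not algebraic but notational: one has to be careful that in VIP's dual formulation the goal specification appearing in the log-sum-exp term is drawn from the marginal goal distribution \emph{independently} of the frames $(o,o')$, while in the first term it is drawn jointly with the initial frame. Once this negative-sampling convention is made explicit, the identification of the positive and negative contributions of the InfoNCE loss is immediate, and the degenerate-video assumption does the rest of the work by removing the temporal structure that distinguishes VIP-L from a static contrastive objective.
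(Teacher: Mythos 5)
Your proposal is correct and follows essentially the same route as the paper's proof: substitute the degenerate videos so that $o=o'$ collapses the exponent to $1+(1-\gamma)\mathcal{S}(\phi(g');\psi(l))$, reduce $p(l)\mu_0(o;l)$ to $p(g,l)$ and $D(o,o';l)$ to the marginal $D(g')$, factor $e^1$ out of the log-sum-exp to obtain the additive constant, and merge the two terms into the InfoNCE log-ratio. Your explicit remark about the negative-sampling convention is a slightly more careful statement of the step the paper handles tersely, but it is the same argument.
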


\begin{proof}
We begin with the VIP-L objective:
\begin{equation}
\label{eq:initial-step}
\BE_{p(l)}[(1-\gamma) \BE_{\mu_0(o;l)}\left[-\mathcal{S}(\phi(o);\psi(l))\right] + \log \BE_{(o,o';l) \sim D}\left[\mathrm{exp}\left(\mathcal{S}(\phi(o);\psi(l)) + 1 - \gamma \mathcal{S}(\phi(o');\psi(l)) \right)\right]]
\end{equation}

We can massage this expression as follow:
\begin{equation}
\label{eq:intermediate-step}
\BE_{p(l)}[\BE_{\mu_0(o;l)}\left[ - (1-\gamma) \mathcal{S}(\phi(o);\psi(l))\right] + \log \BE_{(o,o';l) \sim D}\left[\mathrm{exp}\left(1 + (1- \gamma) \mathcal{S}(\phi(o);\psi(l)) \right)\right]],
\end{equation}
assuming $o=o'$ in the log-sum-exp term.

Now, the joint distribution of language and initial-frame $p(l)\mu_0(o;l)$ reduces to the marginal distribution of goal-frame and text distribution $p(g,l)$ when the videos are just concatenations of the goal frames. Similarly, 
The language-conditioned distribution of successive intermediate frames $D(o,o';l)$ reduces to the marginal distribution of goal frames $D(g')$ in the dataset. Plugging these substitution back into Equation~\eqref{eq:intermediate-step} gives

\begin{equation}
\label{eq:l-vip-simplification}
\begin{split}
 & \BE_{p(g,l)}\left[-\log \frac{ e^{(1-\gamma) \mathcal{S}(\phi(g);\psi(l))}}{\BE_{D(g')}\left[\mathrm{exp}\left(1 + (1- \gamma) \mathcal{S}(\phi(g');\psi(l))\right)\right]}\right] \\ 
 =  & \BE_{p(g,l)}\left[-\log \frac{ e^{(1-\gamma) \mathcal{S}(\phi(g);\psi(l))}}{\BE_{D(g')}\left[e \cdot \mathrm{exp}\left((1- \gamma) \mathcal{S}(\phi(g');\psi(l))\right)\right]}\right] \\ 
  =  & \BE_{p(g,l)}\left[-\log \frac{ e^{(1-\gamma) \mathcal{S}(\phi(g);\psi(l))}}{\BE_{D(g')}\left[\mathrm{exp}\left((1- \gamma) \mathcal{S}(\phi(g');\psi(l))\right)\right]}\right] + 1 \\ 
= &  \BE_{p(g,l)}\left[-\log \frac{ e^{(1-\gamma) \mathcal{S}(\phi(g);\psi(l))}}{\BE_{D(g')}\left[e^{(1-\gamma) \mathcal{S}(\phi(g');\psi(l))}\right]} \right] + 1
    \end{split}
\end{equation}
\end{proof}

\section{LIV Model Details}
\label{appendix:model-details}
We implement LIV using the open-sourced CLIP architecture\footnote{\href{https://github.com/openai/CLIP}{https://github.com/openai/CLIP}} without modifications; we use the modified ResNet50~\citep{he2016deep} from CLIP for the vision encoder, and the CLIP Transformer~\citep{vaswani2017attention, radford2019language} architecture for the language encoder. The training hyperparameters used during the pre-training and fine-tuning stages are listed in Table~\ref{table:vip-hyperparameters}. During pre-training, we also incorporate the VIP-L objective, which we find to produce better pre-trained LIV models in our preliminary experiments; we hypothesize that adding the explicit language-based VIP loss is instrumental in shaping the representation with semantic structure early on. During the fine-tuning stage, the same set of fine-tuning hyperparameters is used for fine-tuning CLIP as well as the ablation fine-tuning methods presented in Section~\ref{section:fine-tuning}.

Since LIV uses $-1$ as the constant fixed reward for all observations, the range of valid state value is $[\frac{-1}{1-\gamma}, 0]$; however, cosine similarity, as used in CLIP, has range of $[-1,1]$. Thus, to be able to represent all possible values, we set $\mathcal{S}(\phi(\cdot), \psi(\cdot)) := \frac{1}{1-\gamma} \texttt{CosineSimilarity}(\phi(\cdot), \psi(\cdot))$. Coincidentally, with this choice of $\mathcal{S}$, the InfoNCE objective in LIV reduces to precisely the InfoNCE objective used in CLIP.

We pre-train LIV on EpicKitchen~\citep{damen2018scaling}. We use the \texttt{EPIC-KITCHENS-100} version of the data and only utilize the RGB frames and text annotations from the dataset; the default frame rate in the raw dataset is used. The pre-training takes place on a node of 8 NVIDIA V100 GPUs. 

\begin{table}[ht]
\centering
\caption{VIP Architecture \& Pre-Training Hyperparameters.}\label{table:vip-hyperparameters}
\begin{tabular}{clll}
\toprule
& & Pre-Training & Fine-Tuning \\
\midrule
& Model Initialization & CLIP & \{LIV-EPIC, CLIP, Random\} \\ 
 & Optimizer & Adam~\citep{kingma2014adam} & Adam\\
               & Gradient Steps & 200000 & 10000\\ 
              & Batch Size      & 512 & 64\\
              & Learning Rate & 0.00001 & 0.00001 \\ 
              & Weight Recay & 0.001 & 0.001 \\
              & Discount Factor $\gamma$     & 0.98 & \{0.98, 0.96\} \\
            & VIP-L objective & Yes & No \\ 
\bottomrule
\end{tabular}
\end{table}

\section{Environment Details}
\label{appendix:environments} 

\begin{figure}[t!]
\centering
\subfigure[3rd-Person View]{\label{fig:realrobot-3rdperson-view}\includegraphics[width=0.30\textwidth]{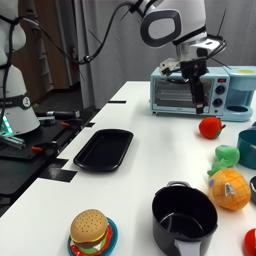}}
\subfigure[Wrist View]{\label{fig:realrobot-wrist-view}\includegraphics[width=0.30\textwidth]{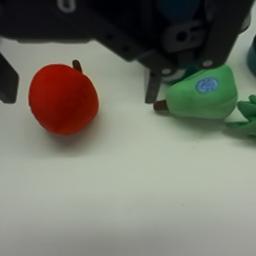}}
\caption{RealRobot Visual Inputs.}
\label{figure:realrobot-views}
\end{figure}

\textbf{MetaWorld.}
The MetaWorld environment consists of a tabletop scene with a Sawyer robot that can interact with 4 objects, including a drawer, faucet, and two mugs distinguished by color. The dataset is collected by running a random policy for 50000 episodes with episode length $20$; each episode is labeled with procedurally generated language descriptions that it achieves via computing pre-defined success criterion for each language-specified task. A single episode can solve many distinct tasks. In that case, the labeled description will be a concatenation of all atomic instructions that the episode has solved. The whole dataset contains 2311 unique descriptions, and the evaluation tests on $6$ atomic instructions: 
\texttt{close drawer}, \texttt{open drawer}, \texttt{turn faucet right}, \texttt{turn faucet left}, \texttt{move black mug right}, \texttt{move the white mug left}. 

\textbf{FrankaKitchen}
The FrankaKitchen environment consists of a kitchen scene with a Franka robot that can interact with a variety of common household kitchen objects. We use the same 5-task split that was evaluated in~\citet{nair2022r3m} for visual imitation learning; the tasks as well as their language commands are listed in Table~\ref{table:frankakitchen-tasks}. For each task, we include $50$ demonstrations, so the total size of the dataset is 250 episodes, where each episode is $50$ environment steps long. 

\textbf{RealRobot.} Our RealRobot environment consists of a toy kitchen tabletop scene with a Franka robot that can interact with 3 soft fruit toys (\texttt{apple, pineapple, pear}) and is tasked with placing the fruit objects into the correct container (\texttt{tray, black pot, green pot}) based on language input, creating 9 tasks in total. We use two cameras for visual inputs, one mounted on the robot gripper and one mounted on the side of the workspace table; see Figure~\ref{figure:realrobot-views} The dataset is collected via human teleoperation with 100 demonstrations per task.

\begin{table}[ht]
\centering
\caption{FrankaKitchen Task Mapping}\label{table:frankakitchen-tasks}
\begin{tabular}{clll}
\toprule
Environment ID & Language Task  \\
\midrule 
\texttt{kitchen\_micro\_open-v3} & \texttt{open microwave} \\
\texttt{kitchen\_sdoor\_open-v3} & \texttt{slide cabinet} \\ 
\texttt{kitchen\_ldoor\_open-v3} & \texttt{open left door} \\ 
\texttt{kitchen\_knob1\_on-v3} & \texttt{turn on stove} \\ 
\texttt{kitchen\_light\_on-v3} & \texttt{switch on light} \\ 
\bottomrule
\end{tabular}
\end{table}

\section{Language-Conditioned Imitation Learning with Pre-Trained Representations} 
\label{appendix:lcbc}
We present the LCBC imitation learning hyperparameters in Table~\ref{table:lcbc-hyperparameters}.
Because the dataset size in MetaWorld is significantly larger, we use a larger MLP architecture with bigger batch size. For each distinct evaluation task, we rollout for 50 episodes and record the success rate. 
\begin{table}[ht]
\centering
\caption{LCBC Hyperparameters.}\label{table:lcbc-hyperparameters}
\begin{tabular}{cllll}
\toprule
& & MetaWorld & FrankaKitchen & RealRobot\\
\midrule 
& MLP Architecture & [1024, 1024, 1024] & [256, 256] & [256,256]\\ 
& Non-Linear Activation & ReLU & ReLU & ReLU\\ 
\midrule
 & Optimizer & Adam & Adam & Adam\\
& Gradient Steps & 200000 & 200000 & 1M\\ 
& Batch Size & 4096 & 32 & 64 \\ 
& Learning Rate & 0.001 & 0.001 & 0.001\\ 
& Proprioception & No & Yes & No\\ 
& Augmentation & No & No & Yes \\ 
\bottomrule
\end{tabular}
\end{table}

\begin{figure}[t!]
\centering
\includegraphics[width=0.6\columnwidth]{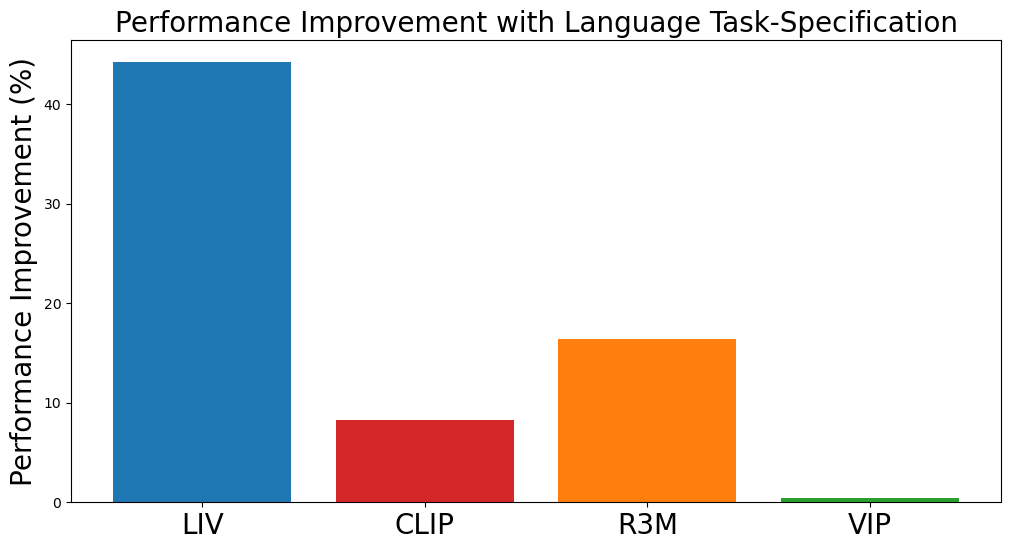}
\caption{\textbf{Comparison Between Language vs. One-Hot Task Encoding:} LIV benefits the most from using language task-specification, resulting in near $45\%$ gain in absolute success rates.}
\label{figure:languageonhot-aggregate}
\end{figure}

\begin{figure}[t!]
\centering
\includegraphics[width=0.6\columnwidth]{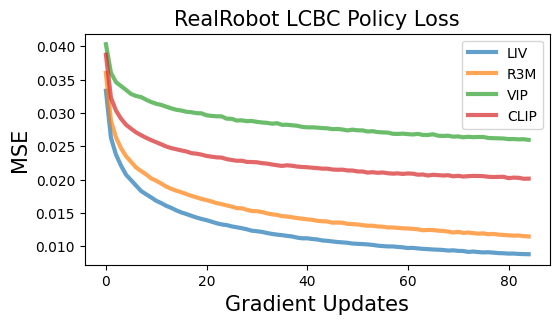}
\caption{LIV LCBC achieves lower training loss and consequently achieves higher success rate on our RealRobot multi-task suite.}
\label{figure:realrobot_loss}
\end{figure}

\subsection{Additional Results}
\label{appendix:lcbc-additional-results}
Given that our simulation environments require less sophisticated scene understanding from language, is language still helpful for policy learning? To probe this, we ablate each choice of pre-trained representations by replacing their language embeddings with one-hot task encoding to assess whether language provides contextual knowledge~\citep{sodhani2021multi} that facilitates policy learning and generalization. The full numeric results comparing the policy performance with and without language task encoding are presented in Table~\ref{table:pre-training-results}; the relative improvement in percentage from using language task encoding is also displayed in Figure~\ref{figure:languageonhot-aggregate}. as shown, LIV-EPIC consistently benefits from its jointly trained language representation. In particular, on both benchmarks, while LIV-EPIC and the strongest baselines (CLIP and VIP) all perform similarly with one-hot encoding, LIV-EPIC realizes much greater gain when language task-specification is used. In fact, using language task-specification \textit{hurts} all baselines on MetaWorld. We hypothesize that this is due to the fact that the MetaWorld dataset contains many episodes whose annotations are long descriptions that consist of concatenation of shorter atomic instructions; for example, \texttt{"close drawer turn faucet right push black mug right"} is a valid annotation that contains 3 atonomic instructions. Therefore, the language embeddings from pure language model (e.g, DistilBERT) or language model trained from image-text only datasets (e.g., CLIP) may fail to disambiguate these instructions, leading to incorrect task aliasing that hampers policy learning. In contrast, one-hot encoding treats every description as distinct and does not have this aliasing problem.  Together, these results highlight the challenges of adapting pure image-text representations and uni-modal visual representations to language-conditioned robotic control, thereby affirming LIV's unique effectiveness in vision-language pre-training for language-conditioned visuomotor control.

\begin{table}[t]
\caption{\textbf{Pre-Trained Representations for Language-Conditioned Imitation Learning:} LIV-EPIC makes most effective use of its language embedding and realizes greater gains when using language-based contextual representation.}
\centering \small
\begin{tabular}{llrrrrrrrrrrr}
\toprule
model & FrankaKitchen  & MetaWorld     \\
\midrule
\textbf{LIV-EPIC} & \textbf{29.3 }$\pm$ {\scriptsize 4.6} & \textbf{30.6} $\pm$ {\scriptsize 5.0}\\
LIV-EPIC (One-Hot) & 17.6 $\pm$ {\scriptsize 5.0} & 26.1 $\pm$ {\scriptsize 5.5} \\ 
\midrule 
CLIP & 22 $\pm$ {\scriptsize 3.5} & 19.4 $\pm$ {\scriptsize 1.3} \\
CLIP (One-Hot) & 14.8 $\pm$ {\scriptsize 0.7} & 28.6 $\pm$ {\scriptsize 1.3} \\ 
\midrule 
VIP (BERT) &  18.0 $\pm$ {\scriptsize 6.9} & 24.2 $\pm$ {\scriptsize 3.0}  \\ 
VIP (One-Hot) & 15.6 $\pm$ {\scriptsize 6.2} & 28.3 $\pm$ {\scriptsize 0.8} \\ 
\midrule 
R3M (BERT) & 18.7 $\pm$ {\scriptsize 11.0} & 12.7 $\pm$ {\scriptsize 3.9}  \\
R3M (One-Hot) & 11.5 $\pm$ {\scriptsize 1.9} & 18.1 $\pm$ {\scriptsize 5.5} \\
\bottomrule
\end{tabular}
\label{table:pre-training-results}
\end{table}

\subsection{Zero-Shot Long-Horizon Task Generalization}
\label{appendix:realrobot-longhorizon}
On RealRobot, we test whether the LIV policy can solve long-horizon, composite tasks that require solving the atomic tasks in some specified sequences. We allow the policy to spend a fixed number of 100 control steps to solve the current task; then, the robot will be reset to its initial position, and the current task will transition to the next one in the specified sequence, which again allocates the policy 100 steps before transitioning. We note that the RealRobot dataset contains only demonstrations for the short-horizon, atomic tasks, and the demonstrations are never collected in configurations where some fruits have already been placed into some containers. As such, solving more than one task strictly requires the policy to generalize to unseen tabletop configuration, as success for an earlier task will change the scene into a novel configuration for the later tasks.

Given this, we report the number of trials out of 10 in which the policy is able achieve \textit{partial success}, namely, solving at least 2 tasks out of the 3 tasks in the specified sequence. The comparison between LIV and R3M is shown in Table~\ref{table:zero-shot-long-horizon}. We see that LIV policy is capable of generalizing to composite tasks that require generalization in both visual and semantic levels. In some cases, LIV policy can solve all 3 tasks. On our project website, we provide videos of the policy rollouts.

\begin{table}[t]
\caption{\textbf{Partial Success Counts on Unseen Long-Horizon Composite Tasks.}}
\centering 
\begin{tabular}{llr}
\toprule
Task Sequence & LIV  & R3M     \\
\midrule
\{\texttt{Pineapple in Tray, Apple in Tray, Pear in Tray}\} & 5/10 & 0/10 \\ 
\{\texttt{Pineapple in Black Pot, Apple in Tray, Pear in Green Pot}\} & 3/10 & 1/10 \\ 
\{\texttt{Pineapple in Tray, Apple in Green Pot, Pear in Black Pot}\} & 5/10 & 1/10 \\ 
\bottomrule
\end{tabular}
\label{table:zero-shot-long-horizon}
\end{table}

\section{Fine-Tuning}
\label{appendix:fine-tuning} 

\subsection{Fine-Tuning on RealRobot}
\label{appendix:fine-tuning-realrobot}
In our real world environment, since we use a 3rd-person and a wrist camera for policy learning, the LIV fine-tuning procedure differs slightly from our simulation experiments. In particular, since the wrist view provides a local view of the table that is significantly out-of-distribution with respect to the pre-training dataset, we elect to fine-tune using just the 3rd-person camera view. Then, we use the pre-trained LIV to process visual observations from the wrist camera view and the fine-tuned LIV to process visual observation from the 3rd-person camera view. The 3rd-person view embedding vectors are further passed through a learnable shallow MLP adapter as the feature scale of pre-trained and fine-tuned LIV varies. Then, the embeddings are conatenated with the language embedding from the fine-tuned LIV, and the final concatenated embedding vector is the input to the MLP policy network. For our real-world experiment, we also employ trajectory level frame random-cropping as data augmentation during policy learning, which we find to improve all methods. 

\subsection{Additional Results}
\label{appendix:fine-tuning-additional-results}
In this section, we present additional experiments probing LIV's fine-tuning capability in simulation. 

\textbf{Can LIV effectively fine-tune base models of varying quality?} We first study how LIV and its ablations (CLIP and VIP-I) fare with base models of varying quality. To this end, in addition to the base CLIP model we consider in the main text, we also include pre-trained LIV as well as a Random model (i.e., randomly initialized weights on the same CLIP architecture) as base models to be fine-tuned. The full results are presented in Table~\ref{table:fine-tuning}. We see that LIV fine-tuning is effective for all three model initializations, whereas the baseline ablations deliver mixed results. In particular, CLIP fine-tuning degrades performance in all cases except on the Random model in MetaWorld. This difference in dataset sizes also explains why VIP-I fine-tuning is reasonable on MetaWorld but very poor on FrankaKitchen, consistent with the findings in~\citet{ma2022vip}. As such, we have demonstrated that both terms in the LIV are indispensable for effective fine-tuning, and LIV is uniquely effective at fine-tuning vision-language models under varying pre-training objectives, pre-trained model qualities, and fine-tuning dataset sizes. The final LIV fine-tuned models perform better when they started from better pre-trained models, so that the best combined system simply uses the LIV objective for both phases, pre-training as well as fine-tuning.

\textbf{How does in-domain dataset size affect LIV fine-tuning performance?} 
Now, we probe whether LIV fine-tuning can work with even smaller in-domain robot datasets, representative of the few-shot settings that several recent works have studied~\citep{nair2022r3m, ma2022vip}. On the FrankaKitchen environment, instead of 50 trajectories per task, we repeat the same fine-tuning+LCBC experiment with just 10 and 25 trajectories per task. For these experiments, we include TCN+CLIP as a baseline, as it is the only baseline that actually improve policy performance with 50 trajectories per task for fine-tuning. The results are presented in Table~\ref{table:fine-tuning-fewshots}. As shown, across different dataset sizes, LIV fine-tuning consistently provides large gains, whereas TCN-CLIP is able to realize much smaller gains. Notably, in the most challenging setting of few-shot, 10 demos per task, LIV fine-tuning is able to match CLIP (no fine-tuning) with 50 demonstrations per task and improves the base model performance by more than 200\%. These results demonstrate that LIV is fully capable of effective fine-tuning even in a very low data regime, showcasing its versatility and sample-efficiency.

\textbf{The sensitivity of TCN+CLIP to objective weighting $\alpha$.} We present the full hyperparameter search over $\alpha$ for TCN+CLIP on both FrankaKitchen and MetaWorld and report the results in Table~\ref{table:fine-tuning-tcnclip}. As shown, while higher $\alpha$ values work better on FrankaKitchen, only the lowest $\alpha$ value of $0.1$ results in a TCN-CLIP fine-tuned model that did not diverge during MetaWorld training; however, this converged TCN-CLIP-$\alpha$0.1 model yields significantly lower downstream policy performance than LIV fine-tuning. Note that in practice, hyperparameter tuning for offline visual imitation learning/RL is fairly difficult because of the high computational footprint that limits the amount of hyperparameter tuning and, more fundamentally, the offline setting does not permit online rollouts for evaluation. As such, LIV’s lack of dependency on tuning the balance between its SSL and CLIP objective is a significant advantage over these baselines in addition to its already superior performance.

\begin{table}
\caption{\textbf{Fine-Tuning Vision-Language Representations:} LIV consistently improves the performance of pre-trained vision-language models regardless of their initial capabilities, the sizes and the qualities of the in-domain fine-tuning datasets.}
\label{table:fine-tuning}
\centering 
\resizebox{0.7\textwidth}{!}{
\begin{tabular}{llrrrrrrrrrrr}
\toprule
MetaWorld & & & & \\
\midrule 
Model/Method & Pre-Trained & \textbf{LIV} & CLIP & VIP-I   \\
\midrule 
Random & 20.6 $\pm$ {\scriptsize 1.0}& 27.8 $\pm$ {\scriptsize 4.1} & 30.8 $\pm$ {\scriptsize 2.2}& 30.6 $\pm$ {\scriptsize 3.5} & \\ 
CLIP & 19.4 $\pm$ {\scriptsize 1.3} &  33.9 $\pm$ {\scriptsize 7.5} &  16.4 $\pm$ {\scriptsize 4.3} & 30.0 $\pm$ {\scriptsize 2.2} &  \\
LIV-EPIC & 30.6 $\pm$ {\scriptsize 5.0} & \textbf{35.8} $\pm$ {\scriptsize 1.4} & 21.4 $\pm$ {\scriptsize 5.7}& 20.3 $\pm$ {\scriptsize 3.4}& \\
\midrule 
FrankaKitchen & & & & \\ 
\midrule 
Model/Method  & Pre-Trained & \textbf{LIV} & CLIP & VIP-I   \\
\midrule 
Random & 17.7 $\pm$ {\scriptsize 3.9} & 19.2 $\pm$ {\scriptsize 3.8} & 17.1 $\pm$ {\scriptsize 2.2}& 3.2 $\pm$ {\scriptsize 0.7} & \\ 
CLIP & 22 $\pm$ {\scriptsize 3.5} &  26.8 $\pm$ {\scriptsize 4.9} & 14.0 $\pm$ {\scriptsize 6.8} & 14.8 $\pm$ {\scriptsize 1.3}   \\
LIV-EPIC & 29.3 $\pm$ {\scriptsize 4.6} & \textbf{32.3} $\pm$ {\scriptsize 5.8} & 15.1 $\pm$ {\scriptsize 4.3} & 17.3 $\pm$ {\scriptsize 6.6}& \\ 
\bottomrule
\end{tabular}}
\end{table}

\begin{table}
\caption{\textbf{Few-Shot Fine-Tuning Vision-Language Representations:} LIV fine-tuning can consistently improve over base model even with handful of demonstrations per task.}
\label{table:fine-tuning-fewshots}
\centering 
\resizebox{0.7\textwidth}{!}{
\begin{tabular}{llrr}
\toprule
Number of Demos per Task & CLIP & CLIP (LIV fine-tuned) & CLIP (TCN+CLIP fine-tuned)   \\
\midrule 
10 & 7.3 $\pm$ {\scriptsize 1.2} & \textbf{22.0} $\pm$ {\scriptsize 6.0} & 13.3 $\pm$ {\scriptsize 2.3} \\ 
20 & 12.7 $\pm$ {\scriptsize 1.5} & \textbf{30.7} $\pm$ {\scriptsize 1.7} & 23.3 $\pm$ {\scriptsize 1.2} \\ 
50 & 22.0 $\pm$ {\scriptsize 3.5} & \textbf{33.0} $\pm$ {\scriptsize 1.4} & 25.3 $\pm$ {\scriptsize 4.1} \\ 
\bottomrule
\end{tabular}}
\end{table}

\begin{table}
\caption{TCN+CLIP is sensitive to the relative weighting between the two components in the combined objective.}
\label{table:fine-tuning-tcnclip}
\centering 
\resizebox{0.7\textwidth}{!}{
\begin{tabular}{llrrr|r}
\toprule
& $\alpha=0.1$ & $\alpha=0.3$ & $\alpha=0.5$  & $\alpha=0.8$ & LIV fine-tuning \\
\midrule 
FrankaKitchen & 11.3 $\pm$ {\scriptsize 4.2} & 24.0 $\pm$ {\scriptsize 1.0} & 25.3 $\pm$ {\scriptsize 4.1} & 24.7 $\pm$ {\scriptsize 4.2} & 33.0 $\pm$ {\scriptsize 1.4}\\ 
MetaWorld & 14.4 $\pm$ {\scriptsize 2.7} & Diverged & Diverged & Diverged & 35.8 $\pm$ {\scriptsize 1.4}\\ 
\bottomrule
\end{tabular}}
\end{table}

\section{Reward Learning}
\label{appendix:reward-learning}
We describe our model-based planning experimental details. On MetaWorld, we use a cross-entropy Method (CEM)~\citep{rubinstein2004cross} planner to propose action sequences and employ the open-sourced SV2P~\citep{babaeizadeh2017stochastic} visual dynamics model trained on the demonstration data to rollout the action sequences for optimization. On FrankaKitchen, as in~\citet{ma2022vip}, we use the ground-truth environment dynamics to for action rollouts and employ a model-path predictive integral (MPPI)~\citep{williams2017model} planner. On FrankaKitchen, due to the exploration challenge, we also warmstart the action search with a fixed open-loop sequence that brings the robot end-effector to the vicinity of the task object but does not perform the full commanded task.

\subsection{Hyperparameters}

On MetaWorld, we use the open-sourced implementation of Cross-Entropy Method (CEM) on this environment released by~\citep{nair2022learning}. On FrankaKitchen, we follow the practice of~\citet{ma2022vip} and use a publicly available implementation of MPPI\footnote{\href{https://github.com/aravindr93/trajopt/blob/master/trajopt/algos/mppi.py}{https://github.com/aravindr93/trajopt/blob/master/trajopt/algos/mppi.py}} with the default hyperparameters. 

\begin{table}[ht]
\centering
\caption{Model-Based Planning Hyperparameters.}\label{table:planning-hyperparameters}
\begin{tabular}{clll}
\toprule
& & MetaWorld & FrankaKitchen \\
\midrule 
& Planner & CEM & MPPI~\citep{williams2017model} \\ 
& Planning Horizon & 20 & 50 \\ 
& \# Proposed Action Sequences & 200 & 128 \\ 
& Optimization Iteration & 1 & 1 \\ 
& Dynamics Model & SV2P trained on in-domain dataset & Ground truth simulation \\ 
\bottomrule
\end{tabular}
\end{table}

\subsection{Additional Results \& Analysis} 
\label{appendix:reward-learning-additional-results}

\textbf{Additional analysis of Table~\ref{table:reward-learning-results}.}
 These results illustrate the orthogonal, if not competing, nature of reward and representation capability of a vision-language model. While CLIP (CLIP Fine-Tuned) exhibits improved reward learning performance over pre-trained CLIP, in Section~\ref{section:fine-tuning}, we have shown that CLIP fine-tuning leads to far inferior representation backbone for policy learning. We believe this is because CLIP fine-tuning aligns the last frames with text goals, and the model-based planners we use evaluate action sequences based on only the reward of the last observation. In contrast, in imitation learning, the representation needs to be well-behaved for all intermediate observations, which CLIP fine-tuning impairs, as shown in Figure~\ref{figure:fine-tuning-comparison}. LOREL is a reward learning algorithm, yet it is prone to overfitting when trained from scratch on small in-domain data (i.e., FrankaKitchen) and is most performant when initialized with a pre-trained representation. Finally, though R3M training involves learning a language-reward predictor, this predictor is trained only in service of the core visual representation training. We find that this predictor is inferior to even purely in-domain trained LOREL on MetaWorld.

\textbf{How does increasing planning budget affect model performance?}
To further assess the capability of the various learned reward models, we repeat the model-based planning experiment on MetaWorld by increasing the CEM optimization iteration from $1$ to $3$. The results are shown in Table~\ref{table:reward-learning-additional-results}. We see that almost all models that are trained or fine-tuned on the in-domain data see performance increase with the fine-tuned LIV-EPIC standing as the best model. However, the pre-trained models (LIV-EPIC, CLIP, R3M), with the exception of LIV-EPIC, see performance degradation, suggesting that their reward models are in fact exploited by the stronger optimizer. Finally, we observe that LIV with 1 CEM iteration already performs as well as LOREL with 3 CEM iterations, suggesting that LOREL is more prone to ``false nagatives'', i.e. assigning low scores to good trajectories. These results highlight both LIV's ability for zero-shot and fine-tuning reward model.

\begin{table*}[t]
\caption{LIV models consistently improve with increased planning budget; in contrast, baselines report mixed results.}
\label{table:reward-learning-additional-results}
\centering 
\begin{tabular}{llrrrrrrrrrrr}
\toprule
Model  & MetaWorld (CEM Iterations=1) & MetaWorld (CEM Iterations=3)   \\
\midrule
LIV-EPIC & 29.7 & 34 \\
LIV-EPIC (LIV Fine-Tuned) & \textbf{55.2} & \textbf{57.8} \\
\midrule 
CLIP & 18.2 & 14.7 \\ 
CLIP (LIV Fine-Tuned) &  45.3& 44.4 \\
CLIP (CLIP Fine-Tuned) & 30.7 & 34.4 \\
\midrule 
LOREL & 47.9 & 55.4 \\ 
LOREL (R3M Initialized)  & 47.5 & 50.6 \\ 
R3M & 18.3 & 18.1 \\ 
R3M (R3M Fine-Tuned) & 43.9& 50.8 \\ 
\bottomrule
\end{tabular}
\end{table*}

\textbf{Why does R3M work well zero-shot on FrankaKitchen?}
Interestingly, we find R3M to perform well zero-shot on FrankaKitchen (Table~\ref{table:reward-learning-results}, achieving $\approx 9\%$ success rate without any in-domain fine-tuning. 
Upon investigating this outcome however, we find that this result is an artifact of the specific way in which R3M was trained. 
In particular, R3M's pre-trained reward predictor has a bias for actions that induce visual change in the environment because it was pre-trained to output higher scores for frames that are farther apart in time, which typically correlate with larger visual changes in the scene. To confirm this, we repeat the same experiment on FrankaKitchen but this time with \textit{random} language goals. The results are shown in Table~\ref{table:random-goals}. We see that R3M's performance remains surprisingly high, indicating that it does not depend at all on the language-based task specification.
In contrast, other models' performance catastrophically decline. This indicates that R3M's language grounding is limited and often confuses completion of specific tasks with any indiscriminate visual changes in the environment. This finding is further supported by R3M's poor performance on the MetaWorld environment, in which random actions are enough to move the objects and induce large visual changes, and task completion requires more directed action, driven by 
more sophisticated language understanding.
LIV-EPIC significantly outperforms R3M on MetaWorld and is the best zero-shot reward model overall on this benchmark. 

\begin{table}
\caption{Performance Comparison Between Correct and Random Language Goals.}
\label{table:random-goals}
\centering 
\begin{tabular}{llrrrrrrrrrrr}
\toprule
Model &  Correct Goal & Random Goal  \\
\midrule
LIV-EPIC & 1.3 & 1.0\\
LIV-EPIC (LIV Fine-Tuned) & \textbf{20.0} & 0.0 \\
\midrule 
LOREL & 9.6 & 0.0 \\ 
LOREL (R3M Initialized) & 16.8 & 0.0 \\
\midrule 
R3M & $8.8$ & 12.1 \\ 
R3M (R3M Fine-Tuned) & 16.1 & 0.0 \\ 
\bottomrule
\end{tabular}
\end{table}

\section{Representation Qualitative Results}
\label{appendix:qualitative}
In this section, we provide additional qualitative results on our pre-trained and fine-tuned models. For animated version of the qualitative reward curves, please visit our project website: \href{https://penn-pal-lab.github.io/LIV}{penn-pal-lab.github.io/LIV}

\subsection{EpicKitchen (Real)}
\label{appendix:qualitative-epickitchen}
We first visualize pre-trained LIV-EPIC on representative seen and unseen EpicKitchen videos by plotting the embedding curves with respect to the image (final frame of the video) and the text goal. In both seen and unseen splits, the three videos have annotations \texttt{open cabinet}, \texttt{open door}, and \texttt{open microwave}, respectively. The results are in Figure~\ref{figure:liv-epickitchen-qualitative} and \ref{figure:liv-epickitchen-qualitative-unseen}. For comparison purpose, we include the results for the CLIP model in Figure~\ref{figure:clip-epickitchen-qualitative} and \ref{figure:clip-epickitchen-qualitative-unseen}.

\subsection{HelloRobot (Real)}
\label{appendix:qualitative-hellorobot}

In Figure~\ref{figure:liv-hellorobot-success}, we present additional examples where pre-trained LIV is able to capture language-conditioned progress in unseen robot videos; in Figure~\ref{figure:clip-hellorobot-success}, we present the reward curves for CLIP on the same set of videos; as shown, CLIP's zero-shot language-reward is much more noisy. In Figure~\ref{figure:liv-hellorobot-long}, we conduct the same reward curve analysis on untrimmed videos in which the robot completes a sequence of opposite actions in the same video (e.g., \texttt{open the fridge} and \texttt{close the fridge}; note that these results are best viewed on our project website. Since the image goal and the language goal semantically refer to opposite actions, where image goal specifies the action accomplished in the last frame and the language goal specified the action accomplished in the middle of the video, we see that LIV's reward curves exhibit inverted trend across two modalities. This demonstrates that LIV has the ability to detect fine-grained, action-induced object state changes in videos.

Finally, in Figure~\ref{figure:liv-hellorobot-failure}, we also present several failure examples, where LIV language rewards fail to capture language-based progression. There are several reasons why these failures may occur, such as network capacity, and the distribution shift presented in these videos with respect to camera viewpoint, embodiment, and language commands. Given LIV's self-supervised nature, we are hopeful that LIV's zero-shot capability will only improve with more expressive network architecture and diverse datasets.

\subsection{RealRobot (Real)}
\label{appendix:realrobot-reward-curves}
In Figure~\ref{figure:liv-realrobot-qualitative}, \ref{figure:liv-livfinetuned-realrobot-qualitative}, we present the reward curves for LIV (pre-trained) and LIV (LIV fine-tuned). As shown, LIV (pre-trained) produces reasonable visual reward progress but suffers from domain gap that renders its language reward progress ineffective. LIV (LIV fine-tuned) remedies this issue and smoothens the representation in both modalities.

\subsection{FrankaKitchen (Sim)}
\label{appendix:frankakitchen-reward-curves}

In Figure~\ref{figure:liv-frankakitchen-qualitative}, \ref{figure:liv-livfinetune-frankakitchen-qualitative}, \ref{figure:liv-clipfinetune-frankakitchen-qualitative}, \ref{figure:liv-tcnclipfinetune-frankakitchen-qualitative}. we present the reward curves for LIV-EPIC, LIV-EPIC (LIV fine-tuned), LIV-EPIC (CLIP fine-tuned), and LIV-EPIC (TCN+CLIP fine-tuned) on the FrankaKitchen tasks, respectively. For each model, the first plot is the averaged reward curve for all 5 tasks, whereas the succeeding 5 plots are the task-specific reward curves. As shown, LIV-EPIC, without any in-domain fine-tuning, is able to competently capture visual progress but lacks language grounding to capture language goal progress. LIV fine-tuning captures fine-grained language-conditioned progression while simultaneously improving visual temporal alignment. CLIP fine-tuning over-aggressively aligns the representations of the last frame and the text goal and collapses intermediate representations. TCN+CLIP lacks temporal smoothness in the learned representation that is crucial for both vision-language representation for control (Section~\ref{section:pre-training} and language-conditioned reward-specification (Section~\ref{section:reward-learning}.

\clearpage 

\begin{figure} 
\centering
  \includegraphics[width=0.7\linewidth]{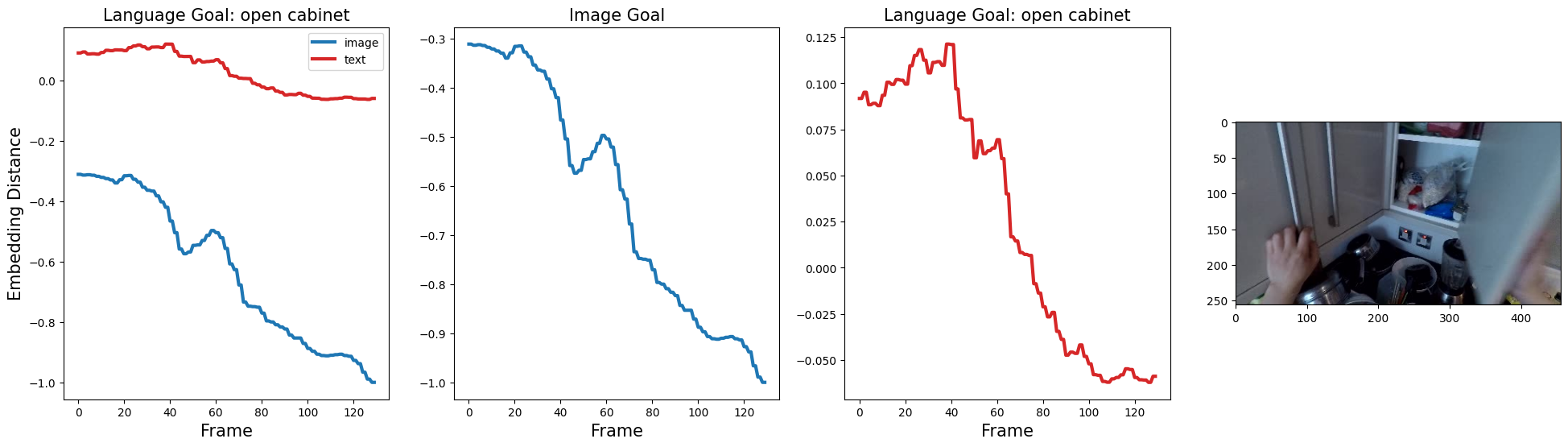}\hfill
  \includegraphics[width=0.7\linewidth]{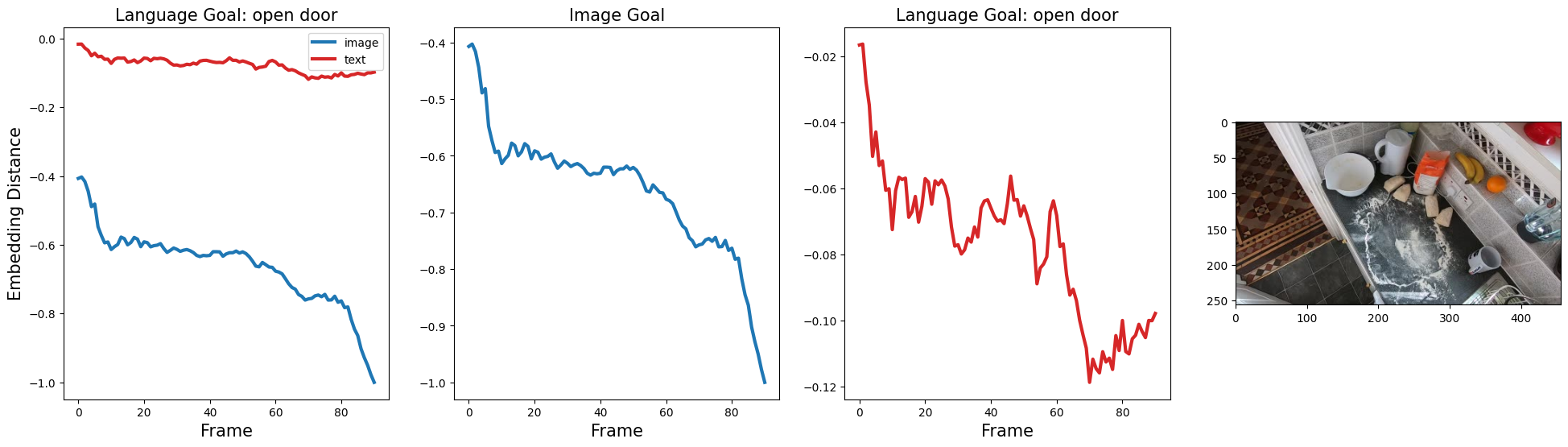}\hfill
  \includegraphics[width=0.7\linewidth]{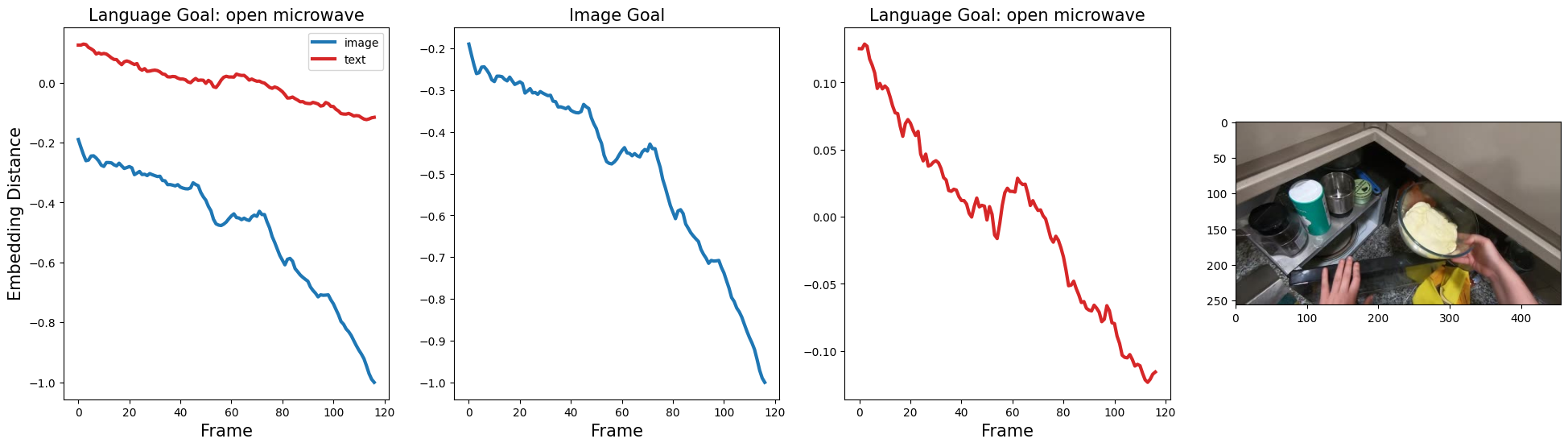}\hfill
  \caption{Pre-trained LIV-EPIC image and language goal reward curves on (seen) EpicKitchen videos.} 
\label{figure:liv-epickitchen-qualitative}
\end{figure}

\begin{figure} 
\centering
  \includegraphics[width=0.7\linewidth]{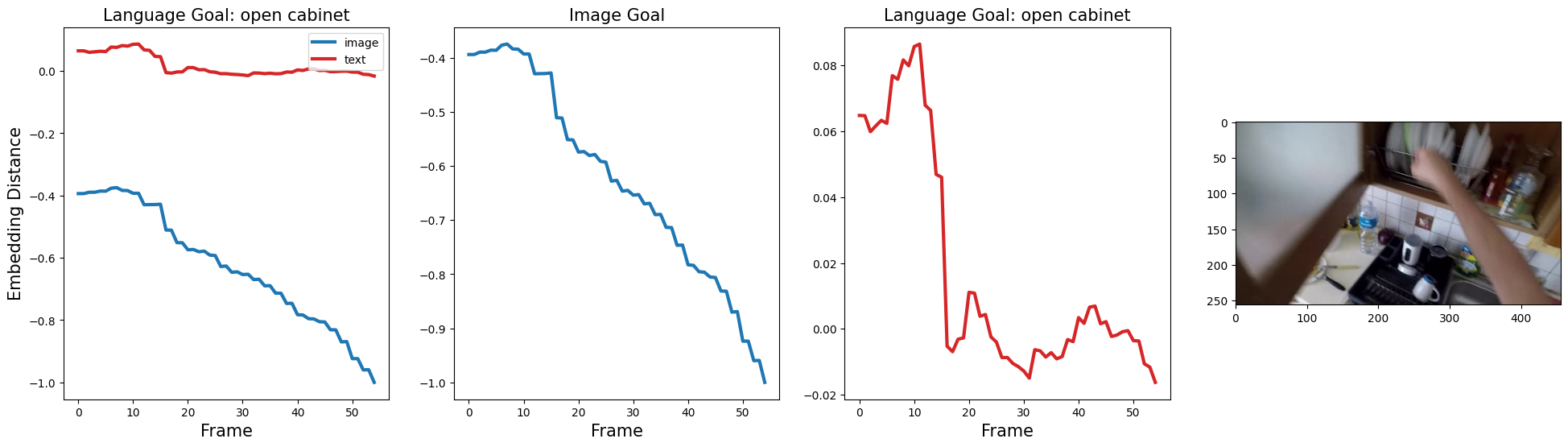}\hfill
  \includegraphics[width=0.7\linewidth]{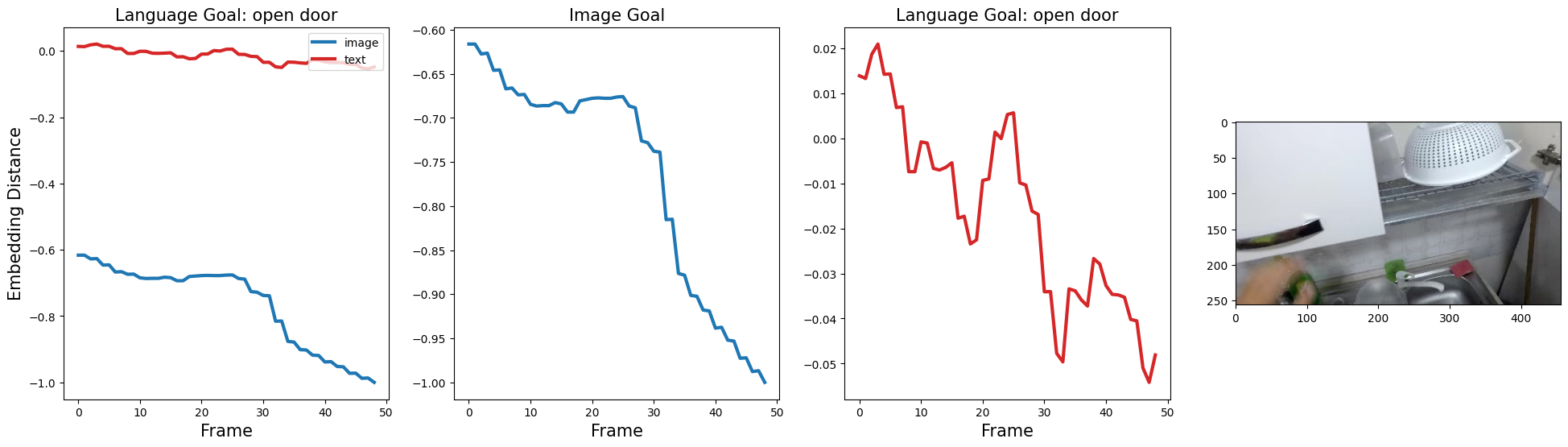}\hfill
  \includegraphics[width=0.7\linewidth]{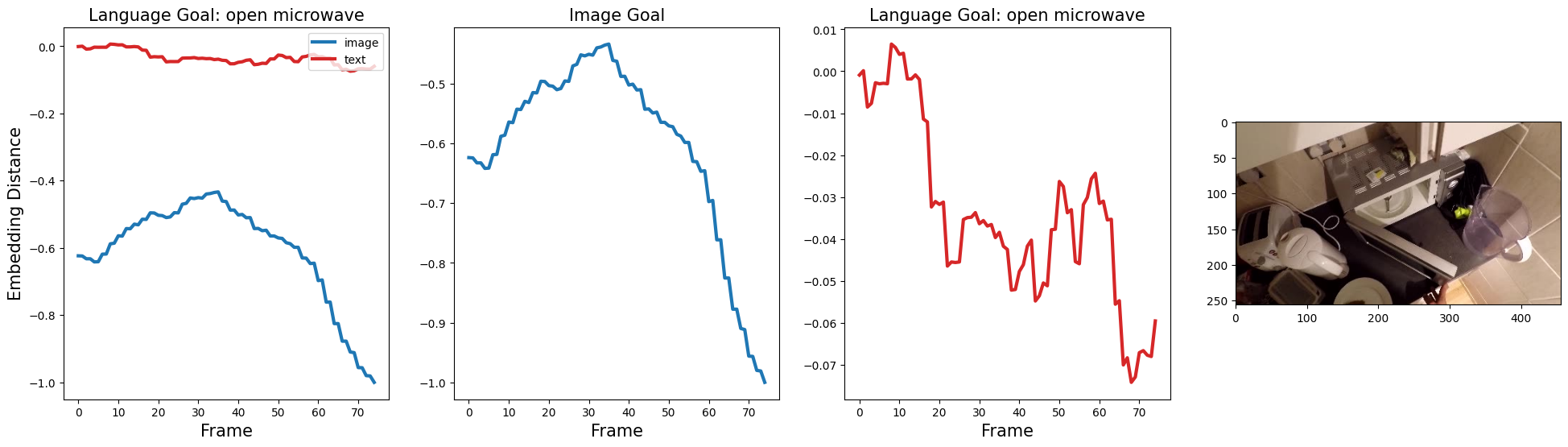}\hfill
  \caption{Pre-trained LIV-EPIC image and language goal reward curves on (unseen) EpicKitchen videos.} 
\label{figure:liv-epickitchen-qualitative-unseen}
\end{figure}

\begin{figure} 
\centering
  \includegraphics[width=0.7\linewidth]{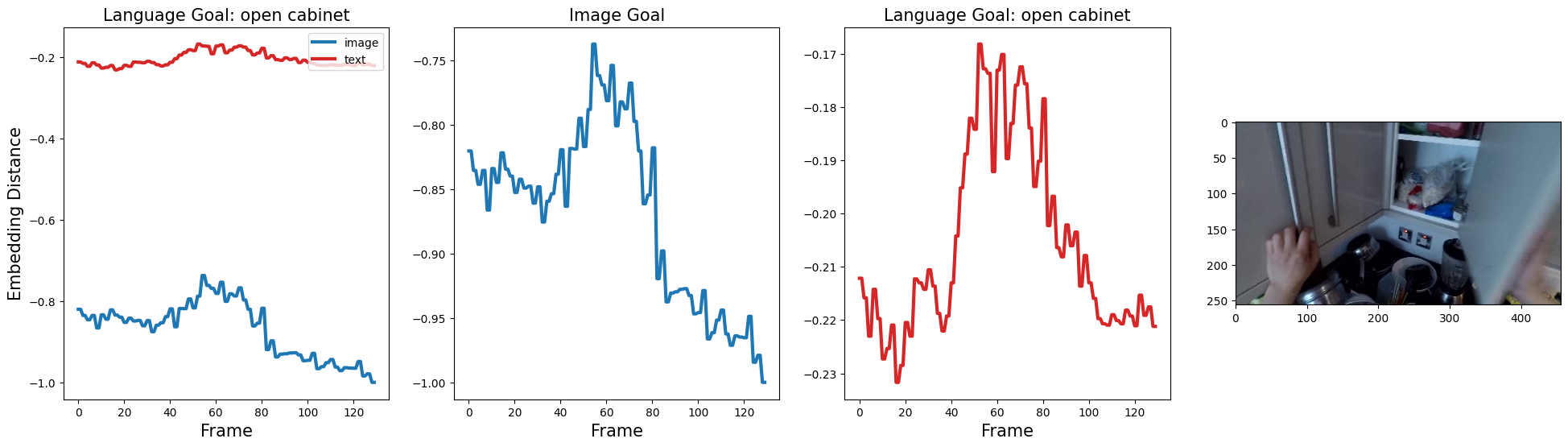}\hfill
  \includegraphics[width=0.7\linewidth]{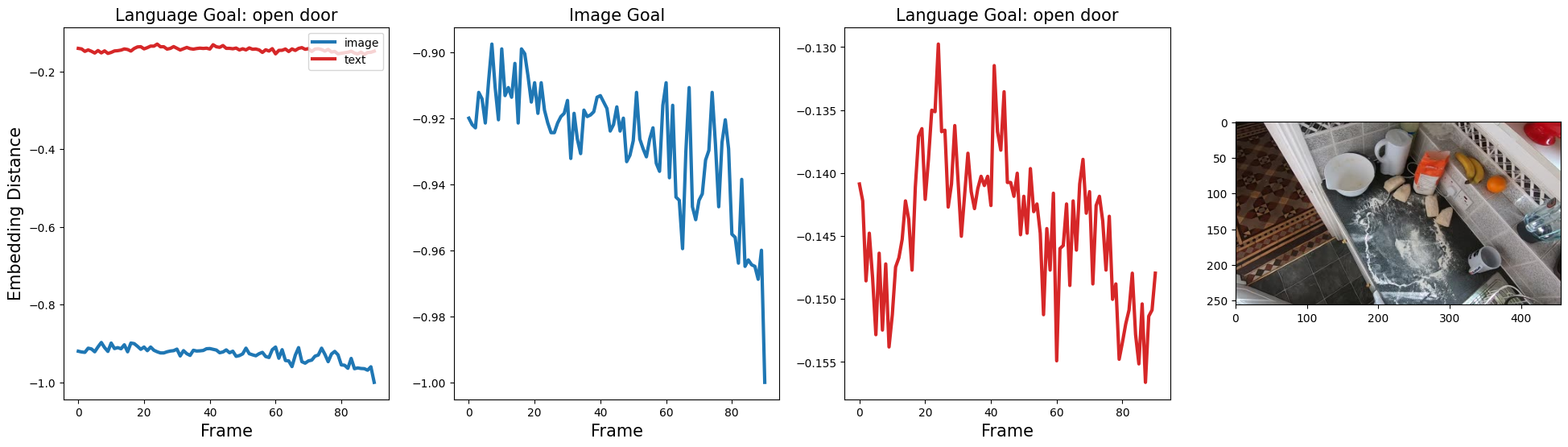}\hfill
  \includegraphics[width=0.7\linewidth]{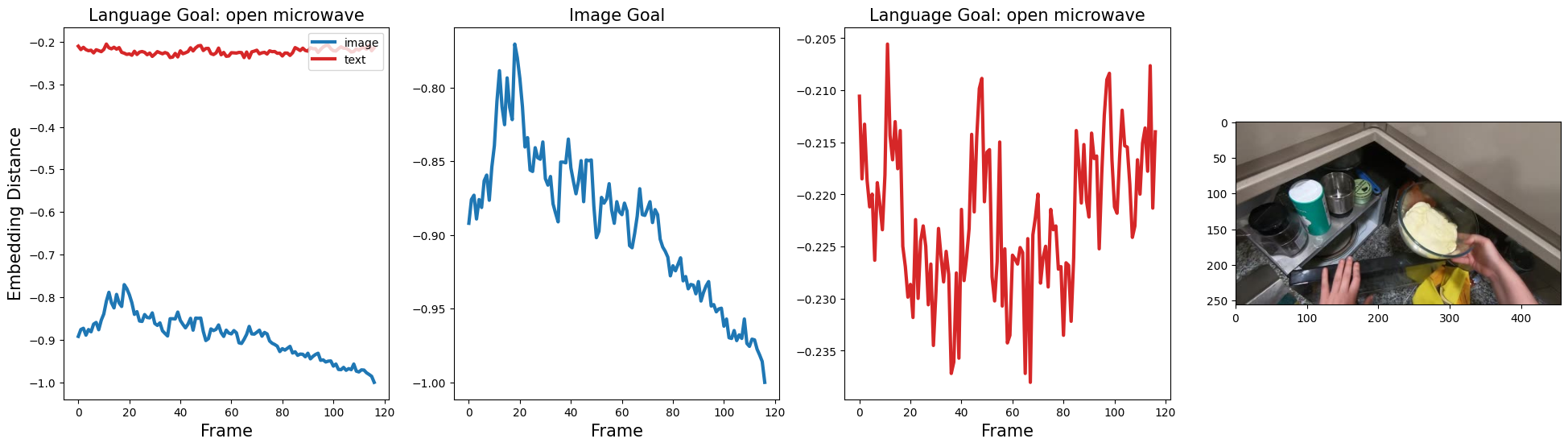}\hfill
  \caption{CLIP image and language goal reward curves on (seen) EpicKitchen (videos).} 
\label{figure:clip-epickitchen-qualitative}
\end{figure}

\begin{figure} 
\centering
  \includegraphics[width=0.7\linewidth]{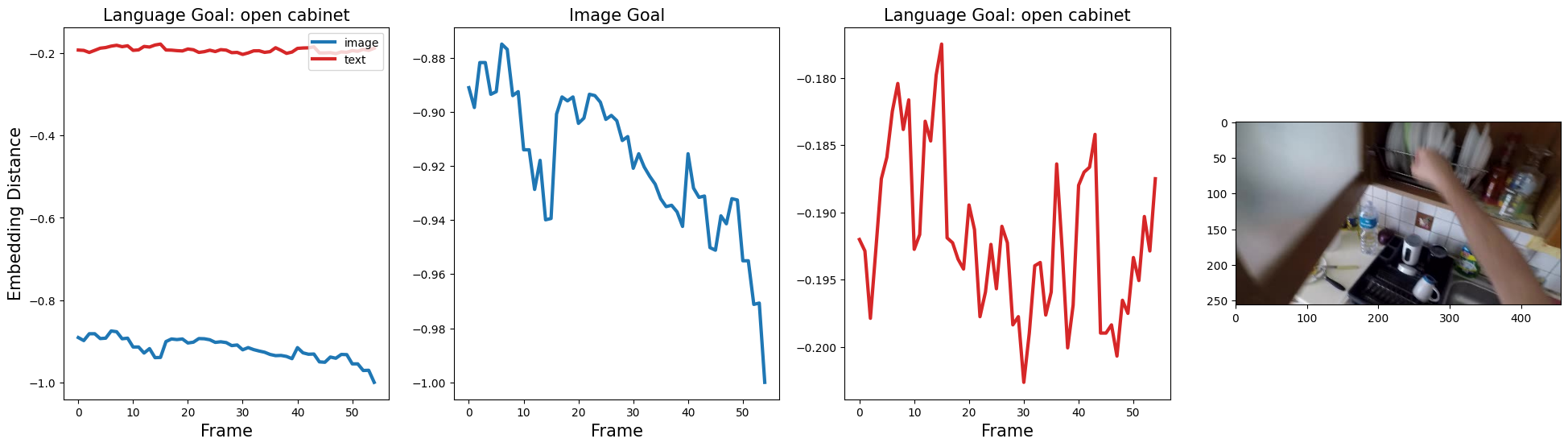}\hfill
  \includegraphics[width=0.7\linewidth]{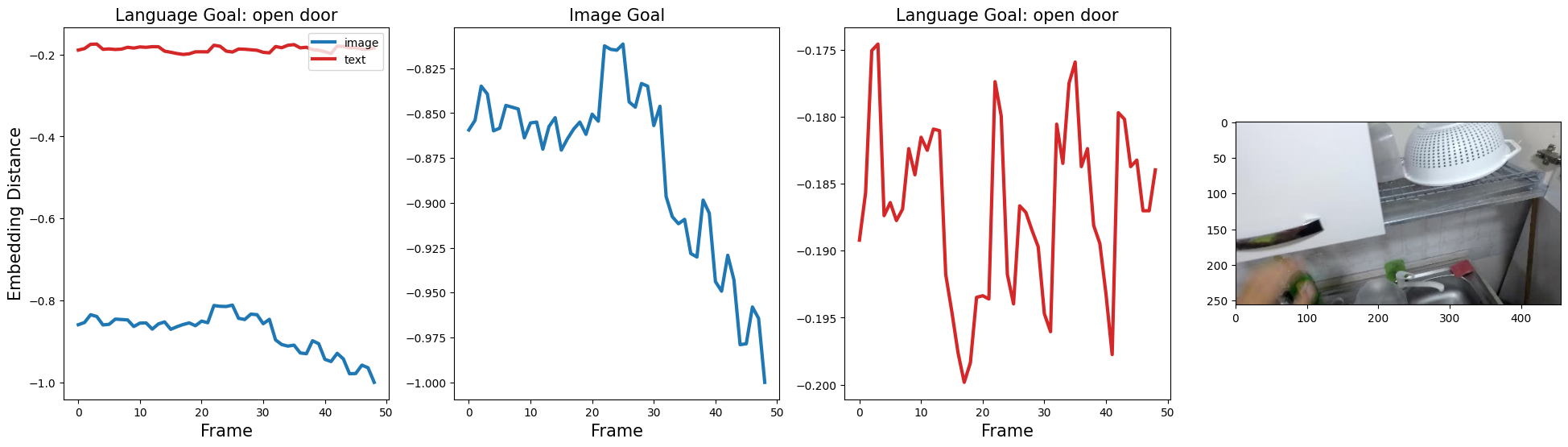}\hfill
  \includegraphics[width=0.7\linewidth]{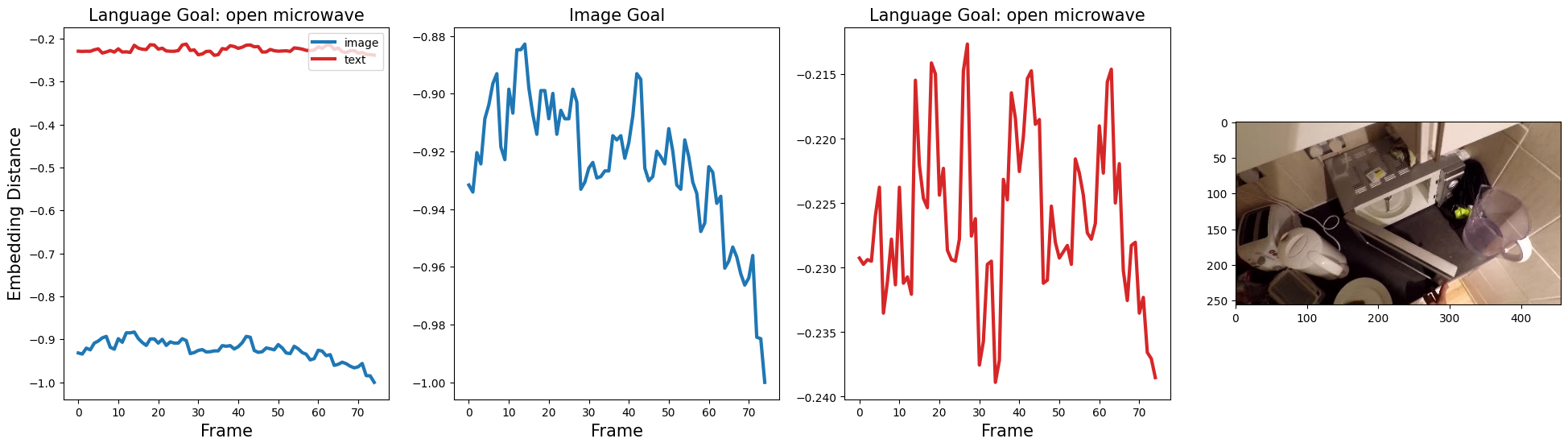}\hfill
  \caption{CLIP image and language goal reward curves on (unseen) EpicKitchen videos.} 
\label{figure:clip-epickitchen-qualitative-unseen}
\end{figure}

\begin{figure} 
\centering
  \includegraphics[width=0.7\linewidth]{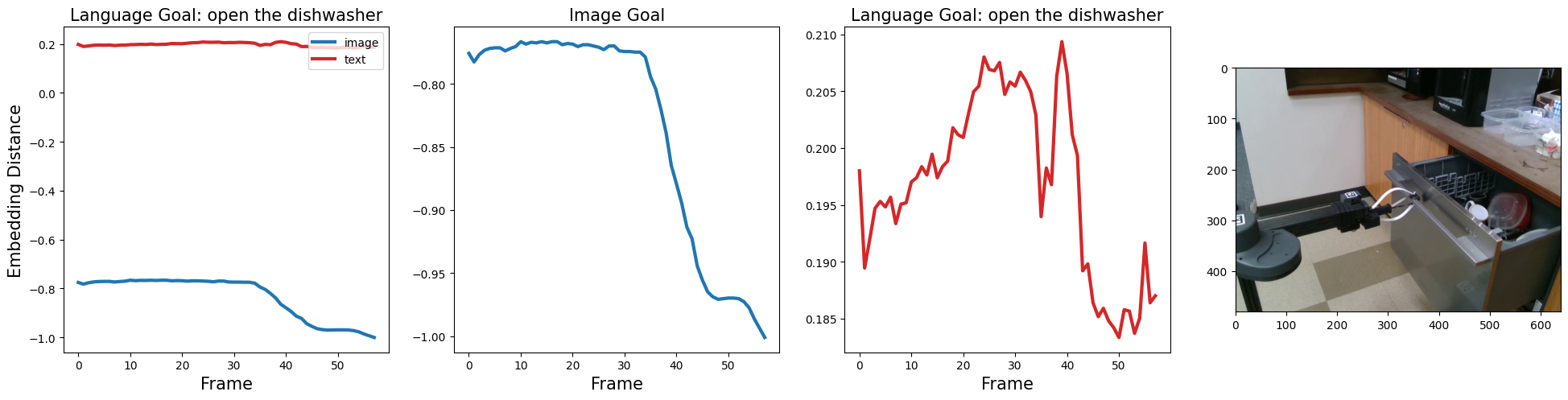}\hfill
  \includegraphics[width=0.7\linewidth]{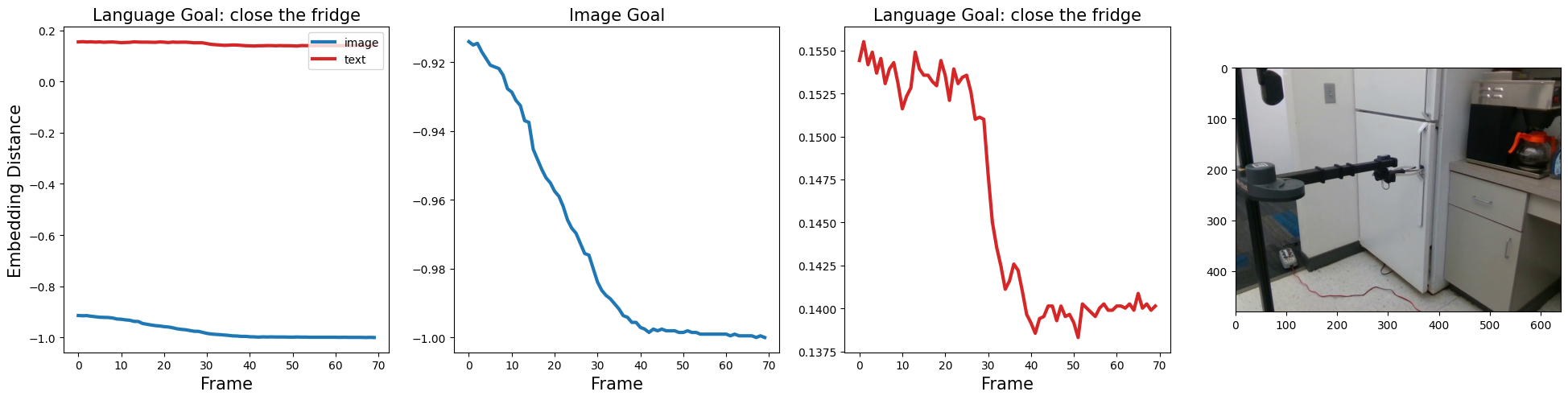}\hfill
  \includegraphics[width=0.7\linewidth]{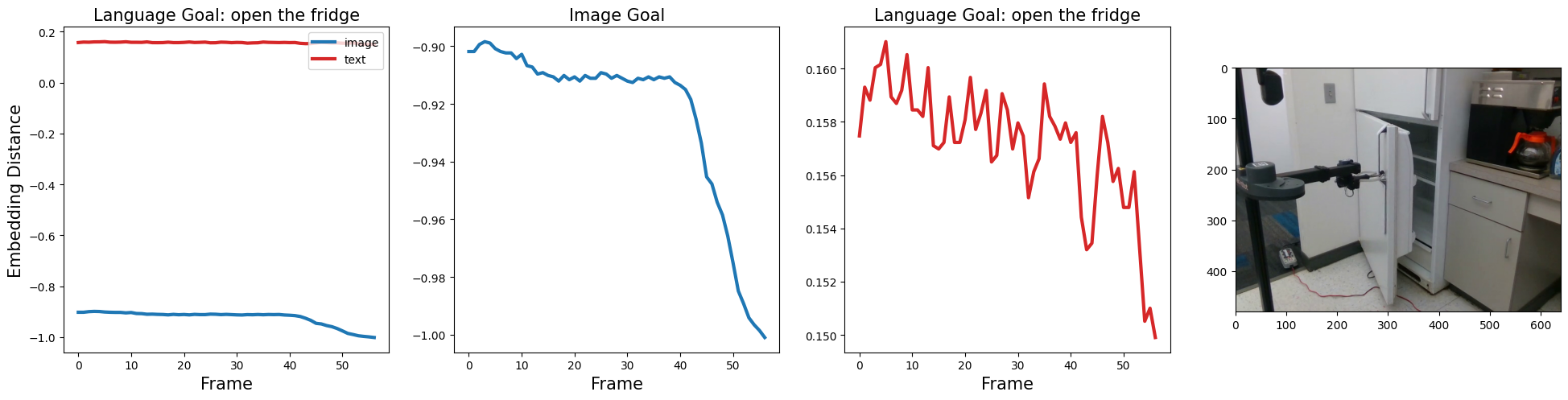}\hfill
    \includegraphics[width=0.7\linewidth]{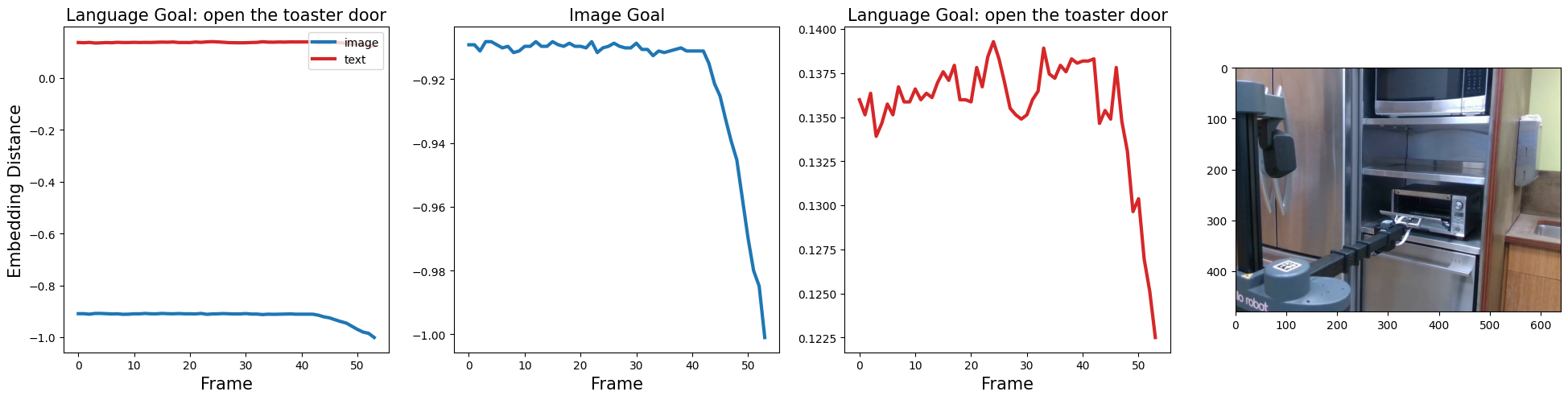}\hfill
      \includegraphics[width=0.7\linewidth]{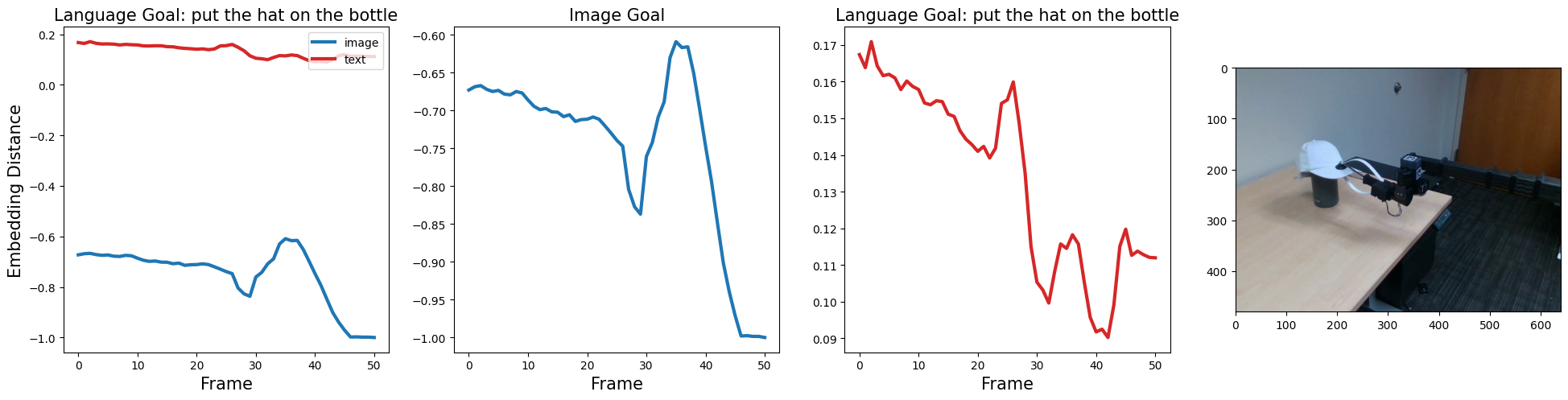}\hfill
        \includegraphics[width=0.7\linewidth]{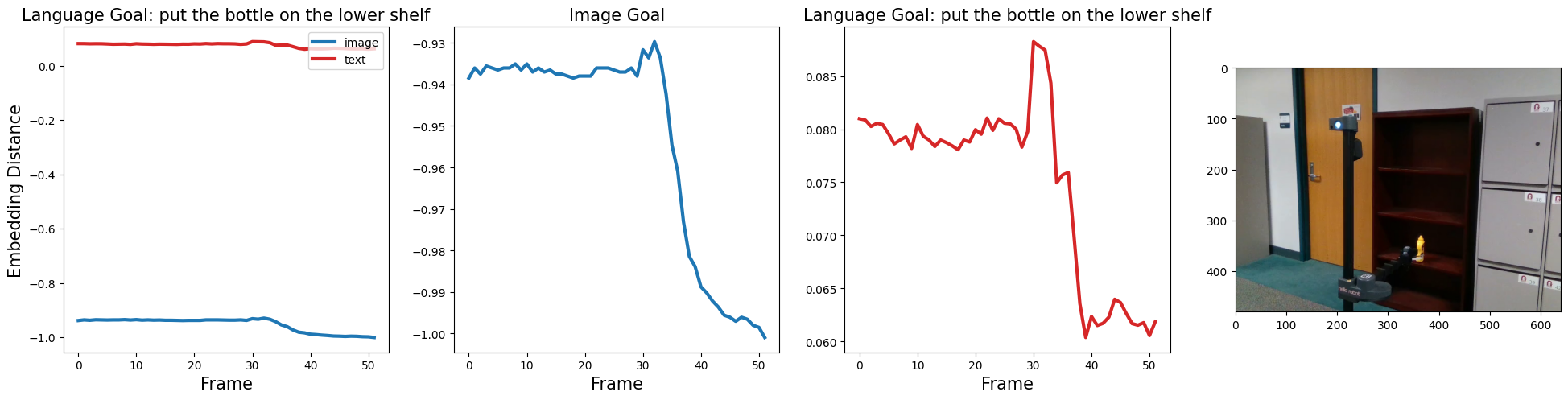}\hfill
  \caption{Success cases of LIV image and language goal reward curves on (unseen) Robot videos.} 
\label{figure:liv-hellorobot-success}
\end{figure}

\begin{figure} 
\centering
  \includegraphics[width=0.7\linewidth]{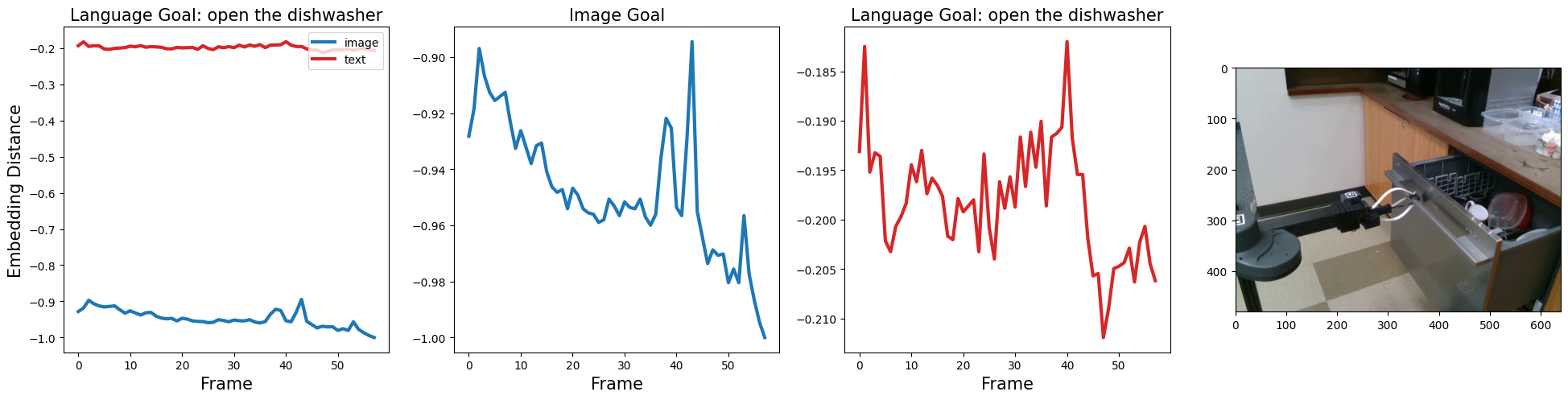}\hfill
  \includegraphics[width=0.7\linewidth]{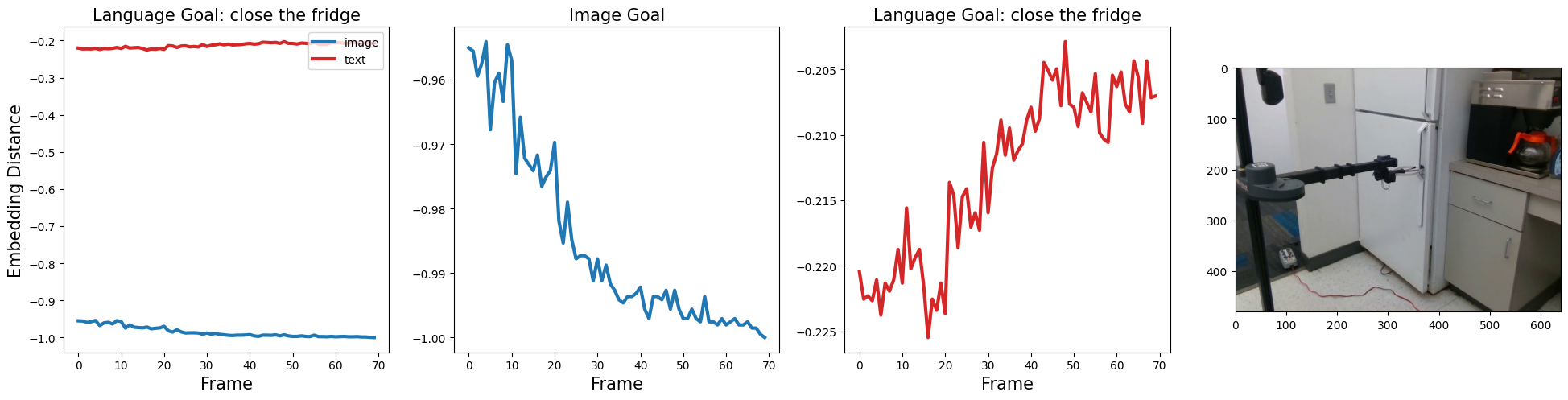}\hfill
  \includegraphics[width=0.7\linewidth]{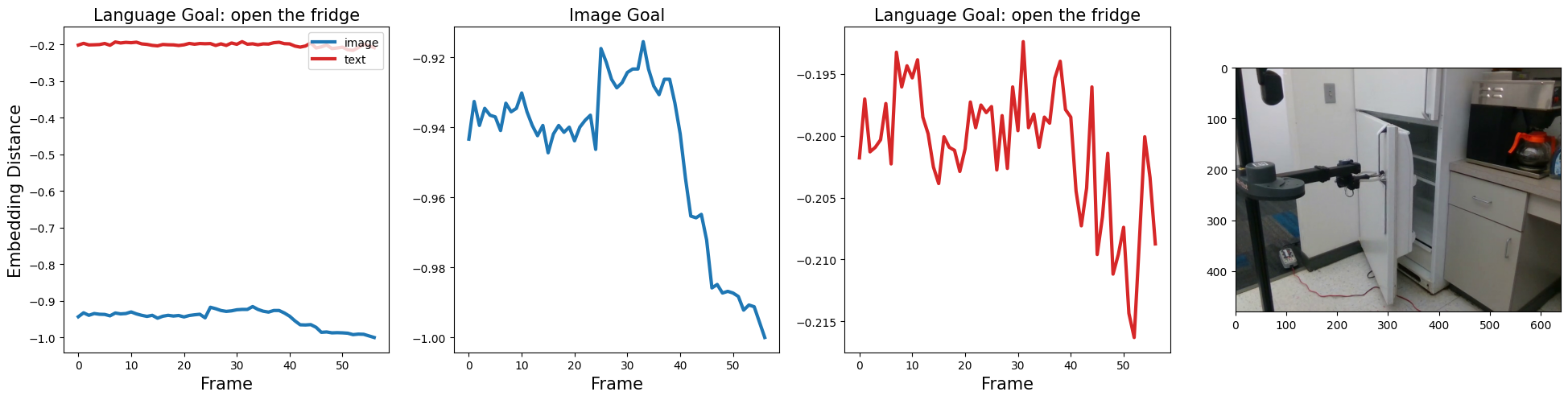}\hfill
    \includegraphics[width=0.7\linewidth]{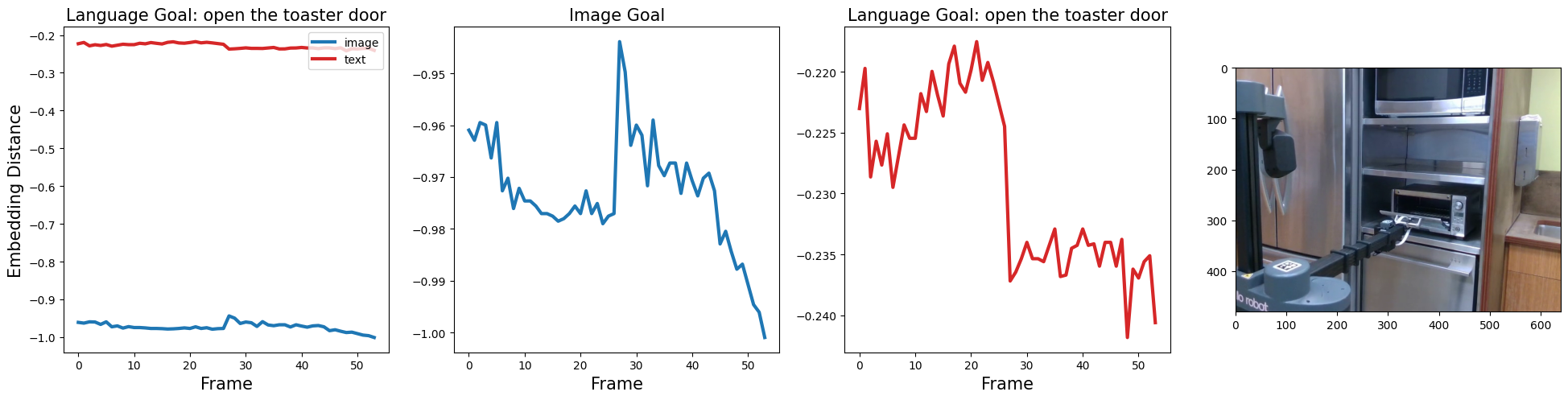}\hfill
      \includegraphics[width=0.7\linewidth]{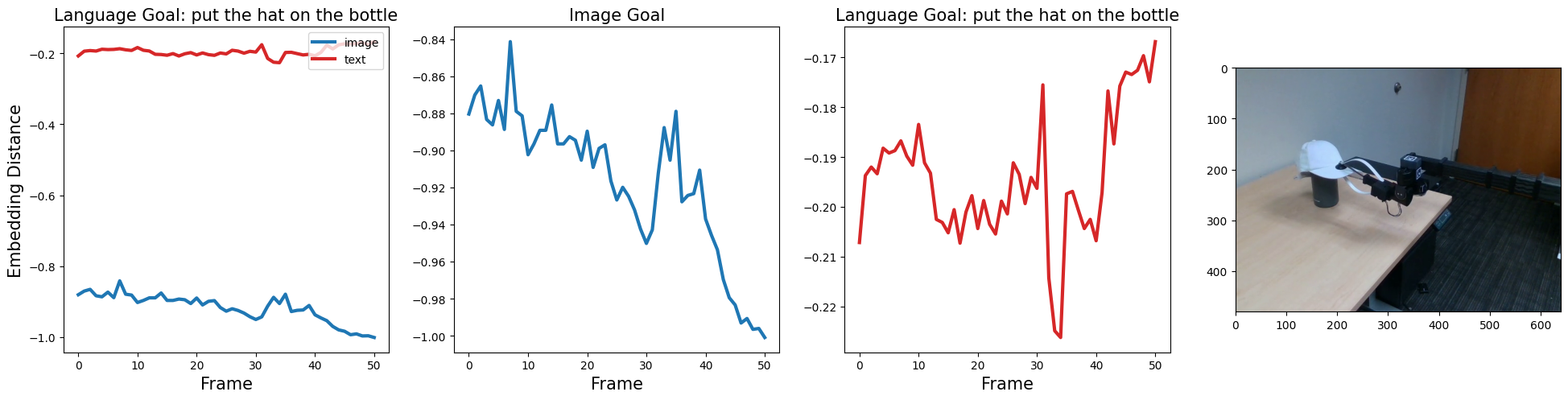}\hfill
        \includegraphics[width=0.7\linewidth]{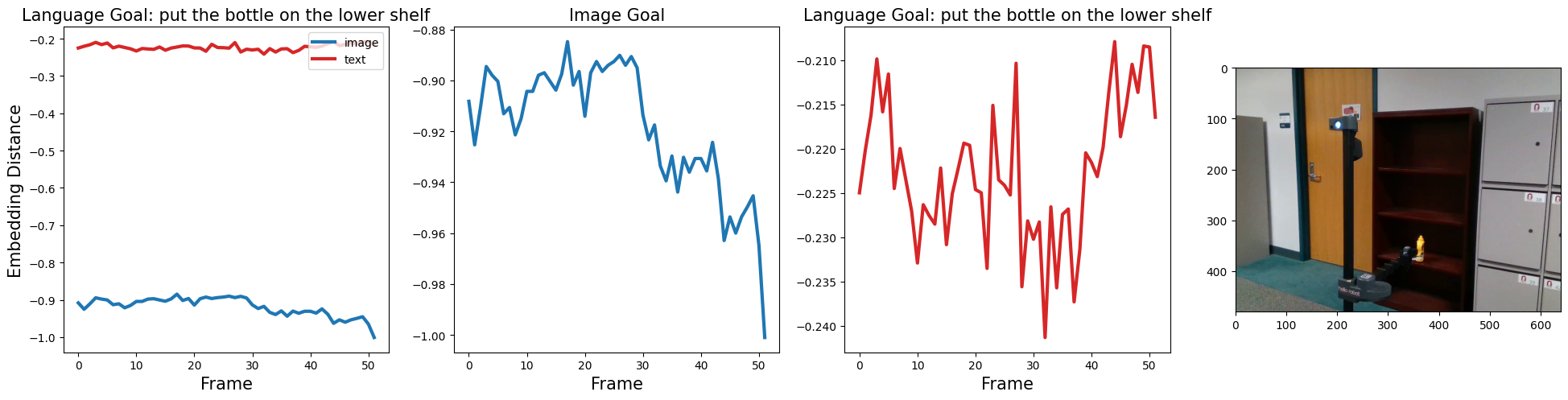}\hfill
  \caption{CLIP image and language goal reward curves on the same set of unseen Robot videos.} 
\label{figure:clip-hellorobot-success}
\end{figure}

\begin{figure} 
\centering
  \includegraphics[width=0.7\linewidth]{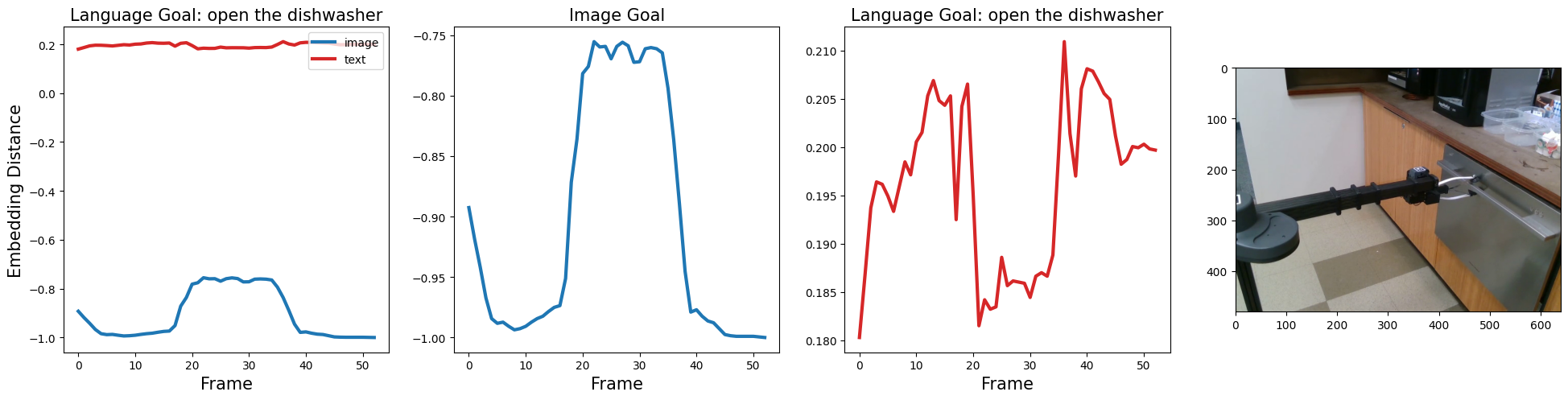}\hfill
  \includegraphics[width=0.7\linewidth]{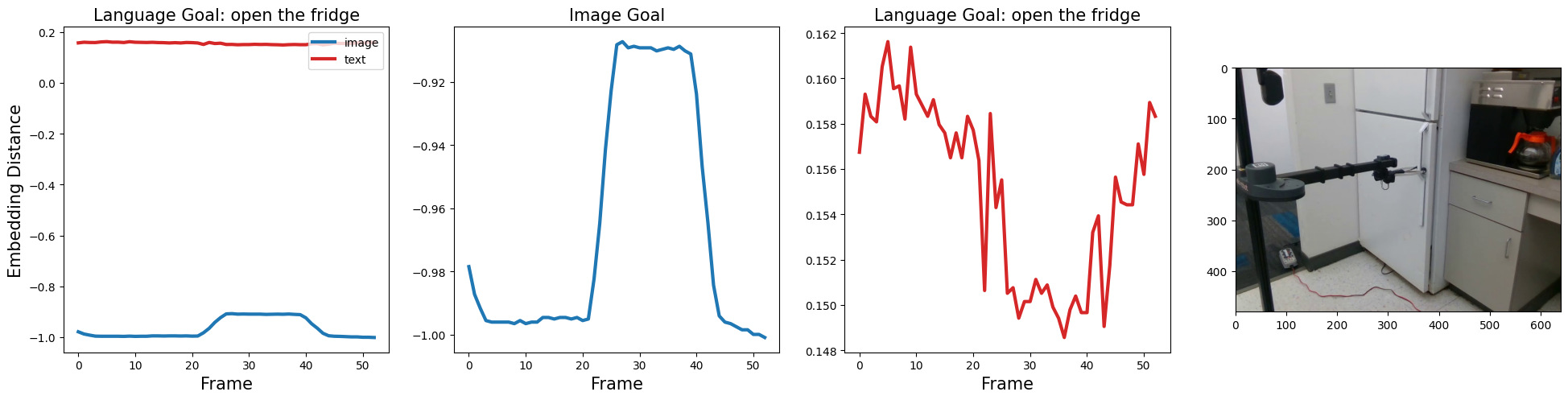}\hfill
  \includegraphics[width=0.7\linewidth]{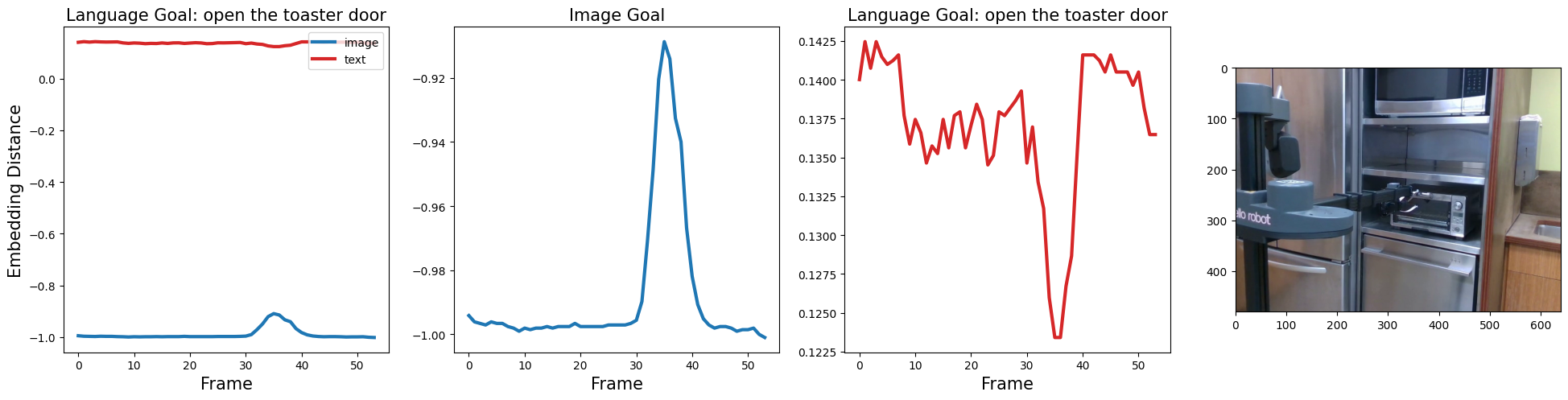}\hfill
  \caption{When the video contains opposite actions, LIV's image and language reward curves exhibit inverted trend because the image goal depicts the action completed in the last frame, which is opposite from the action described in the language goal that has already occurred in the middle of the video.} 
\label{figure:liv-hellorobot-long}
\end{figure}

\begin{figure} 
\centering
  \includegraphics[width=0.7\linewidth]{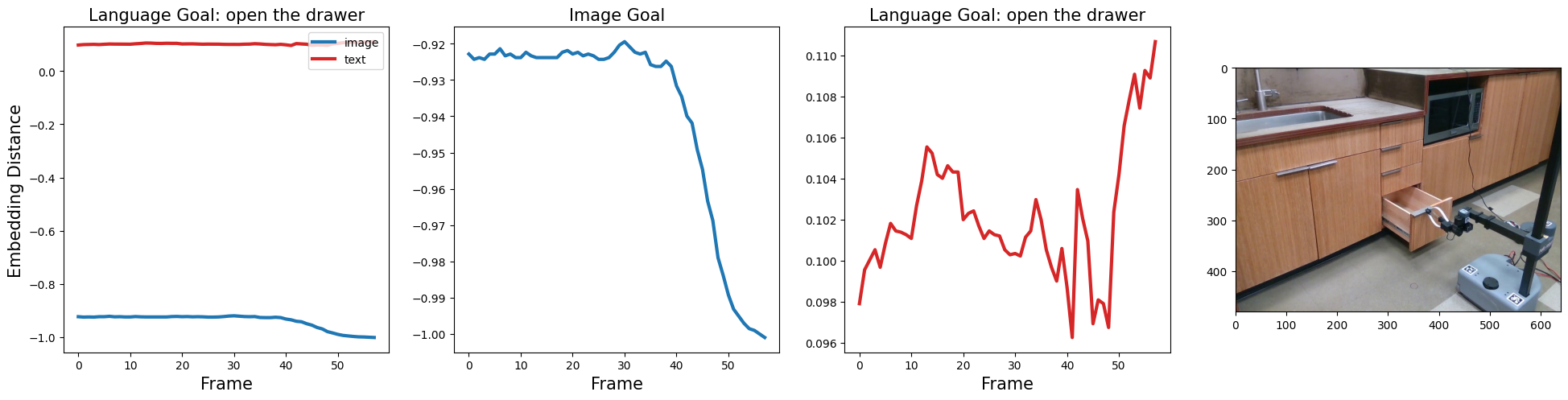}\hfill
  \includegraphics[width=0.7\linewidth]{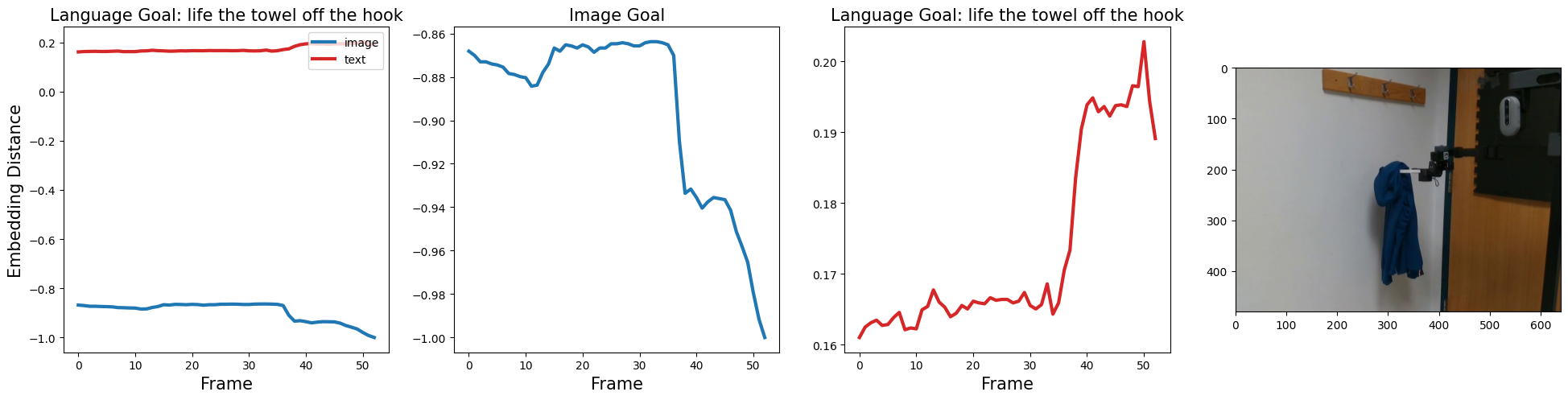}\hfill
  \includegraphics[width=0.7\linewidth]{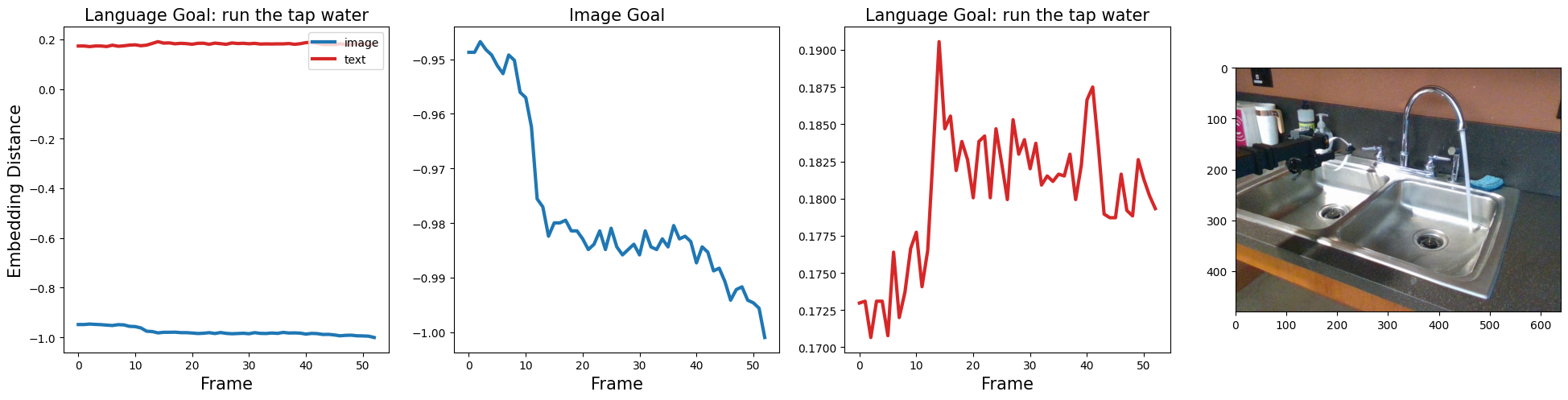}\hfill
  \caption{Failure cases of LIV image and language goal reward curves on (unseen) Robot videos.} 
\label{figure:liv-hellorobot-failure}
\end{figure}

\begin{figure} 
\centering
  \includegraphics[width=0.49\linewidth]{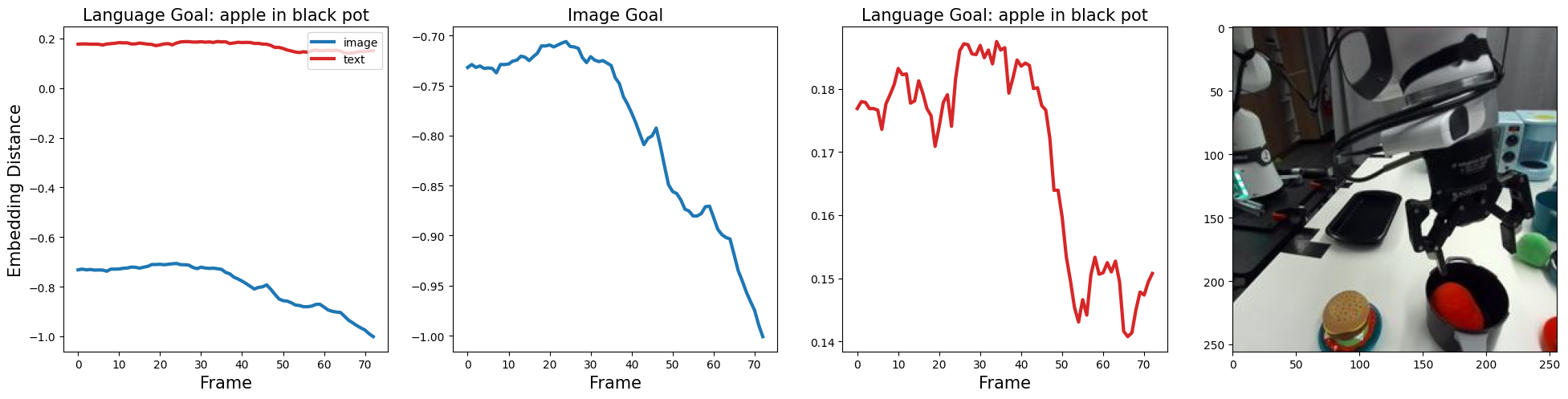}\hfill  \includegraphics[width=0.49\linewidth]{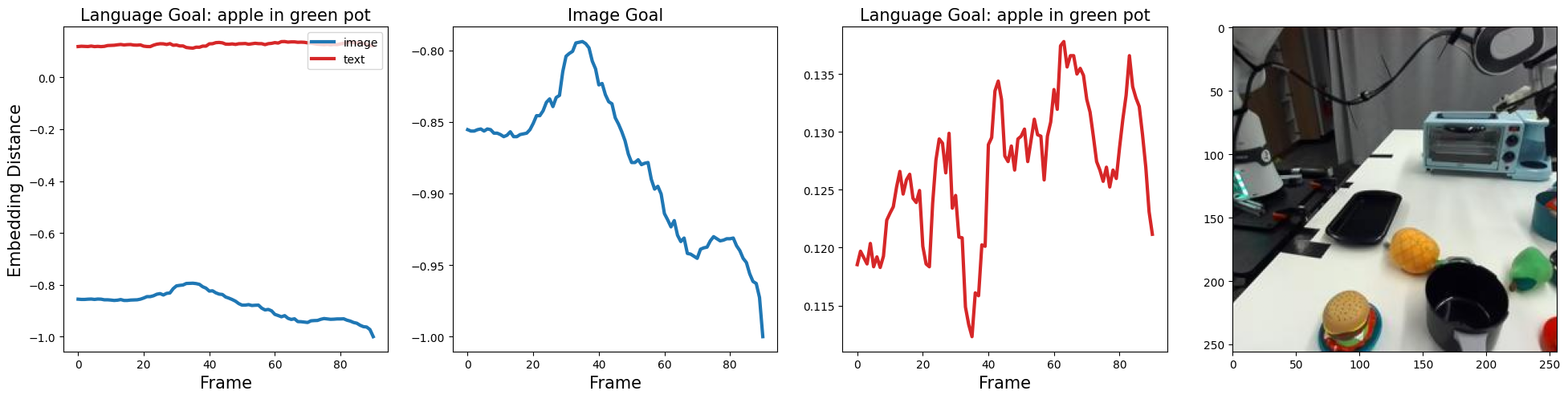}\hfill  \includegraphics[width=0.49\linewidth]{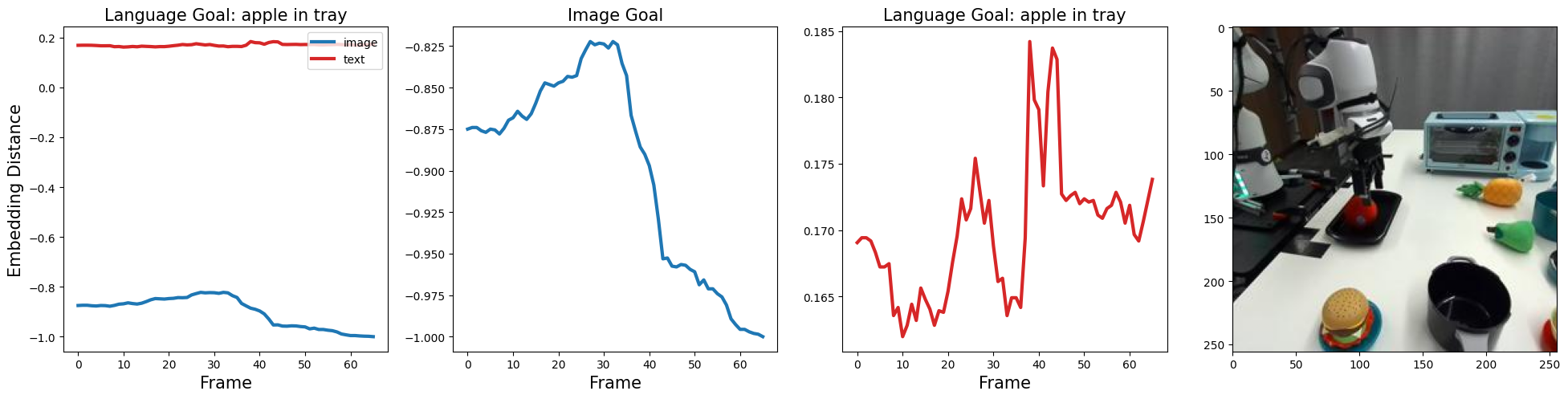}\hfill
    \includegraphics[width=0.49\linewidth]{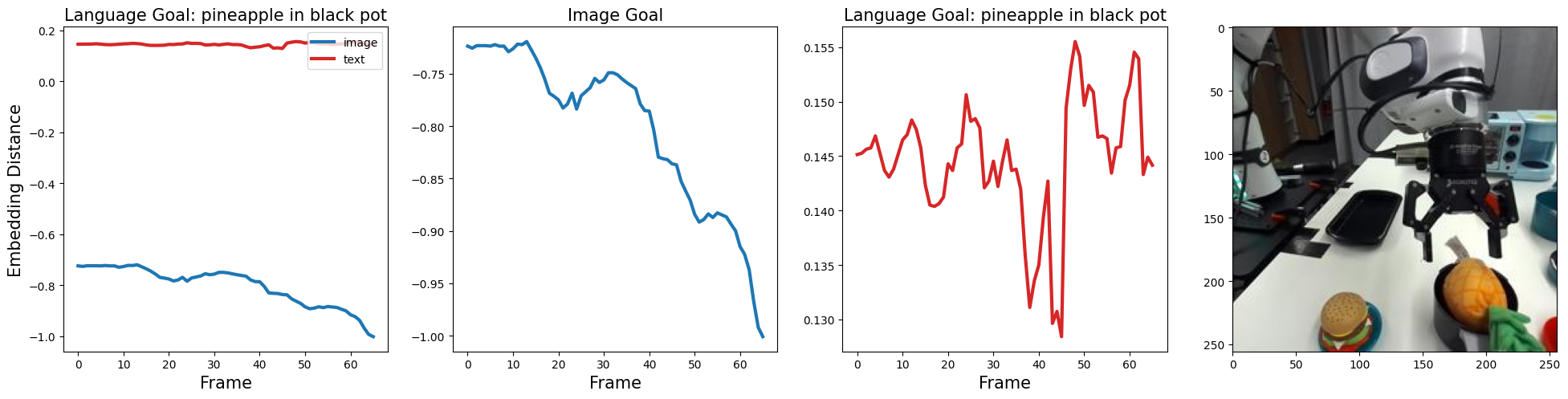}\hfill  \includegraphics[width=0.49\linewidth]{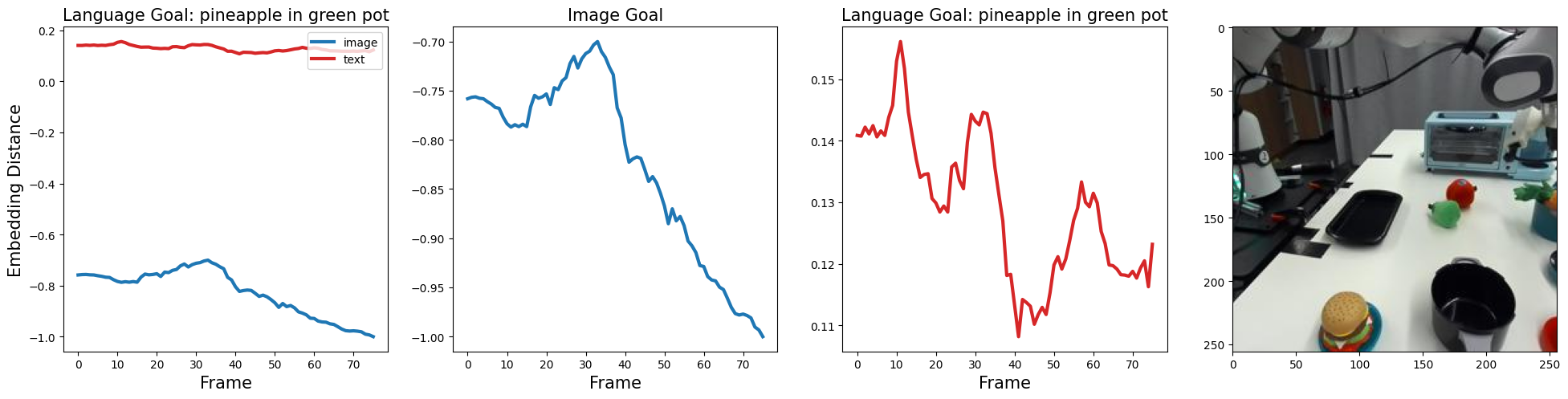}\hfill  \includegraphics[width=0.49\linewidth]{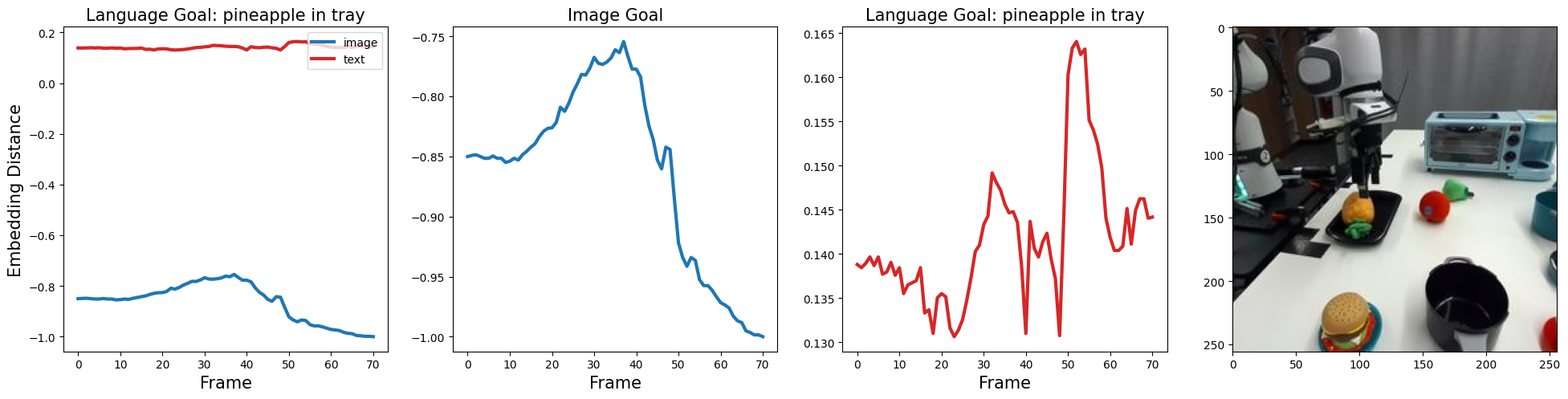}\hfill
      \includegraphics[width=0.49\linewidth]{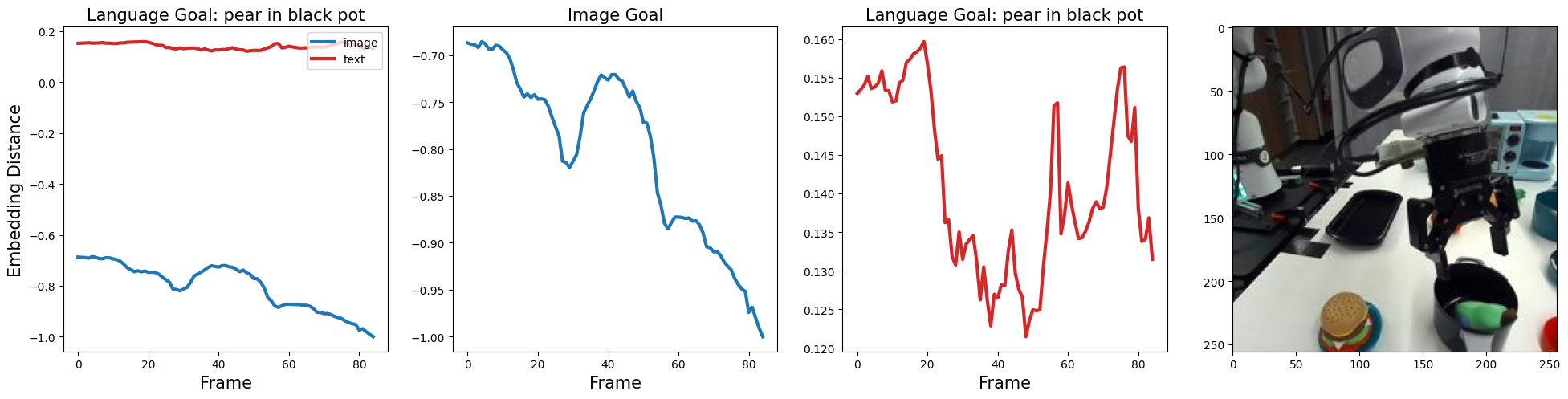}\hfill  \includegraphics[width=0.49\linewidth]{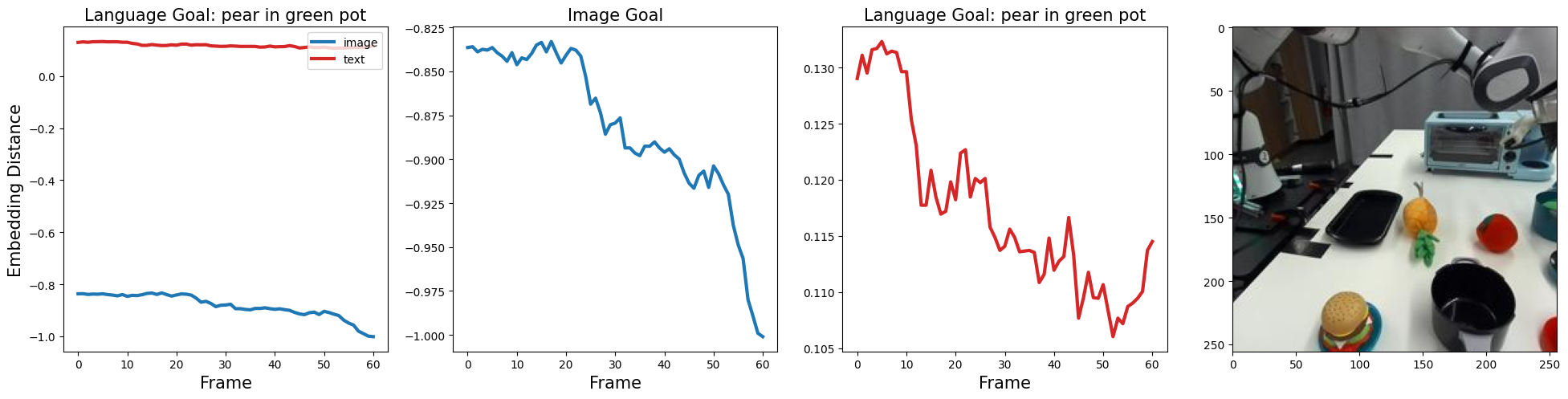}\hfill  \includegraphics[width=0.49\linewidth]{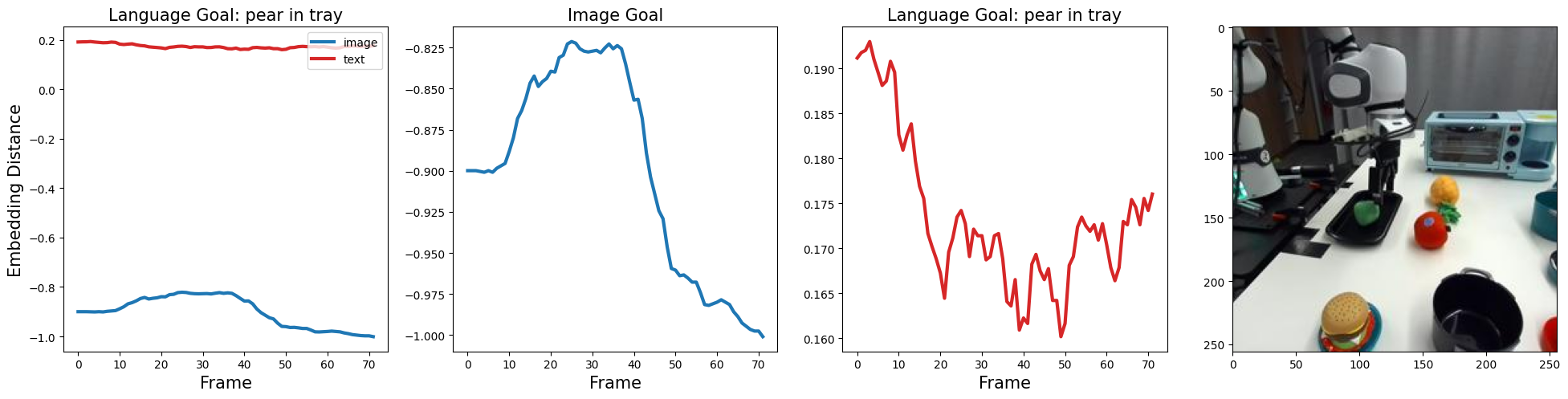}\hfill
  \caption{Pre-trained LIV image and language goal reward curves on RealRobot tasks.} 
\label{figure:liv-realrobot-qualitative}
\end{figure}

\begin{figure} 
\centering
  \includegraphics[width=0.49\linewidth]{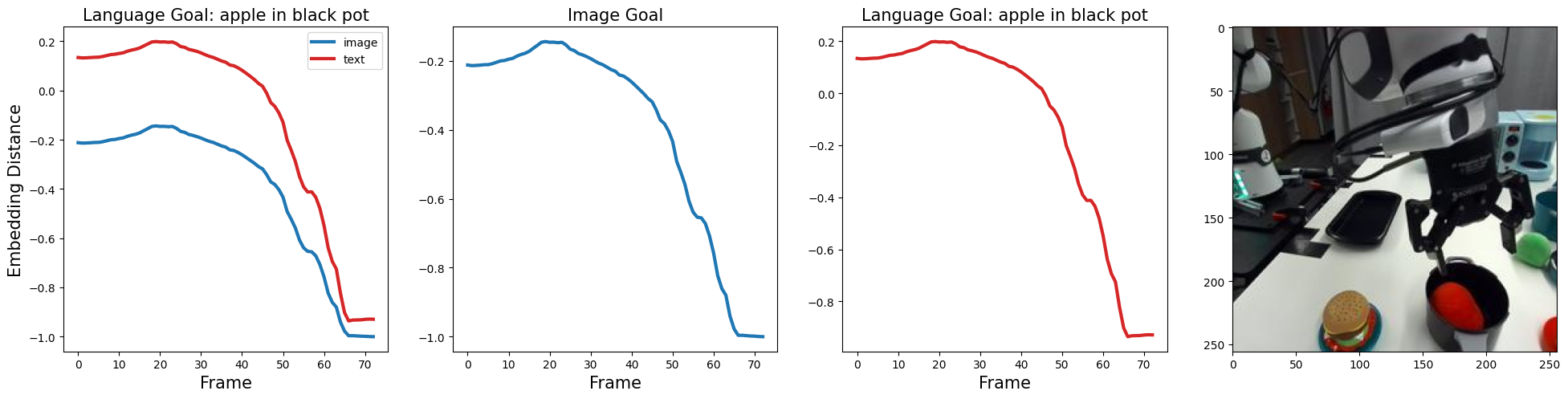}\hfill 
    \includegraphics[width=0.49\linewidth]{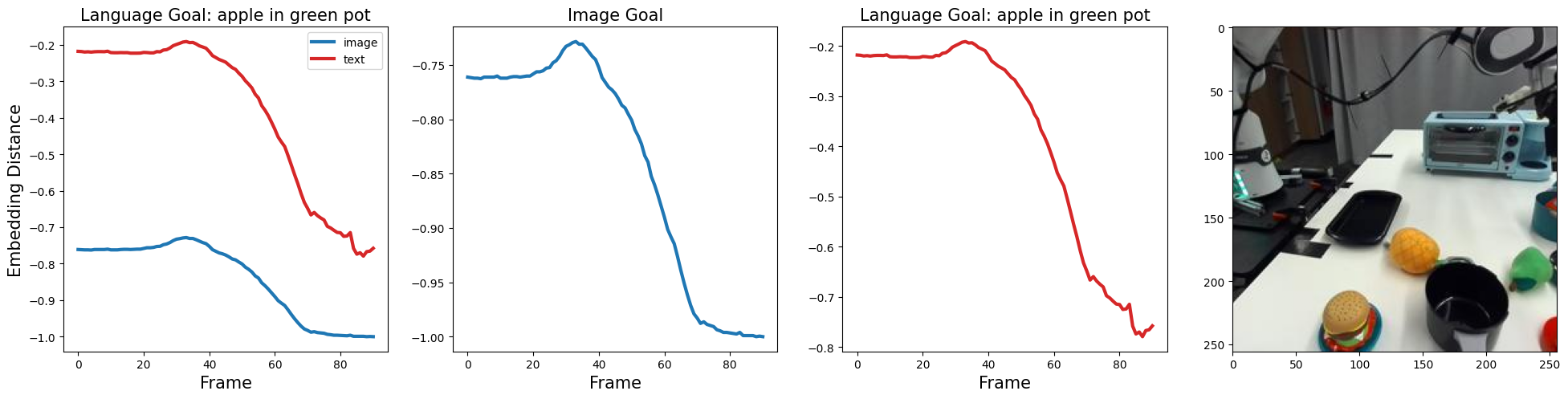}\hfill 
      \includegraphics[width=0.49\linewidth]{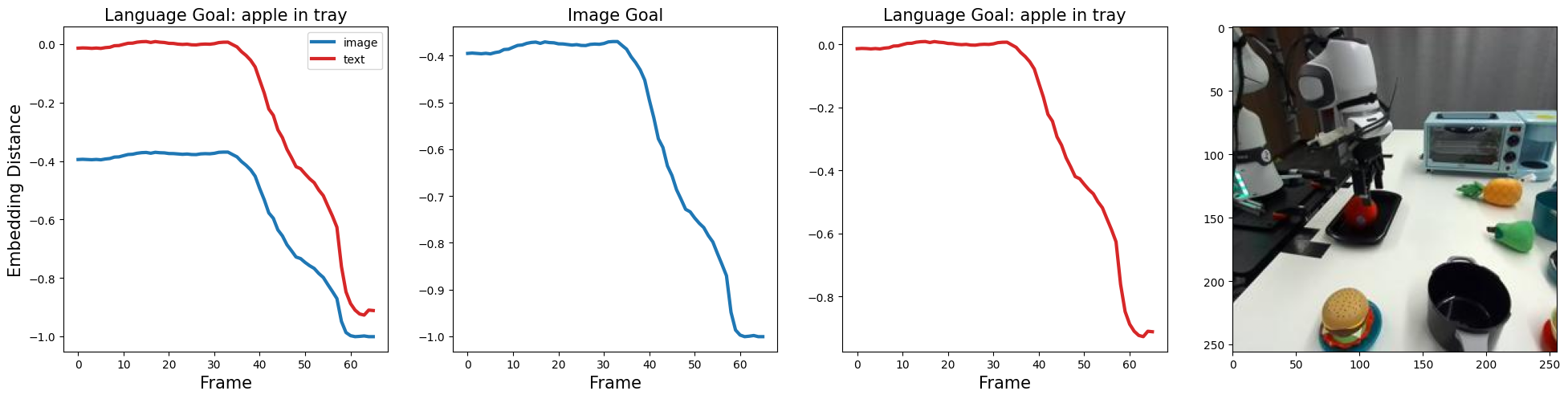}\hfill 
  \includegraphics[width=0.49\linewidth]{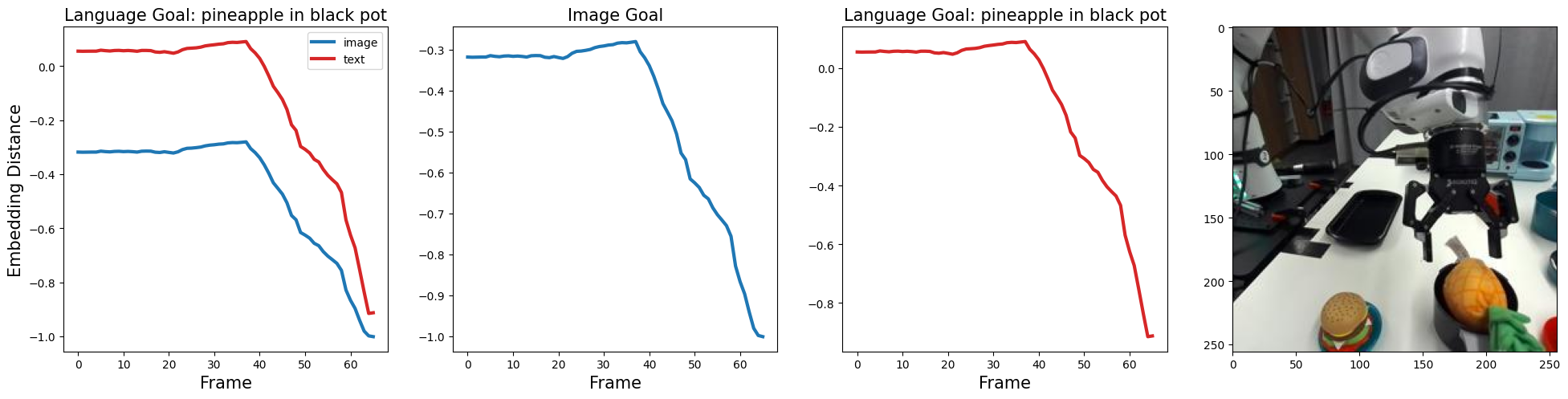}\hfill 
    \includegraphics[width=0.49\linewidth]{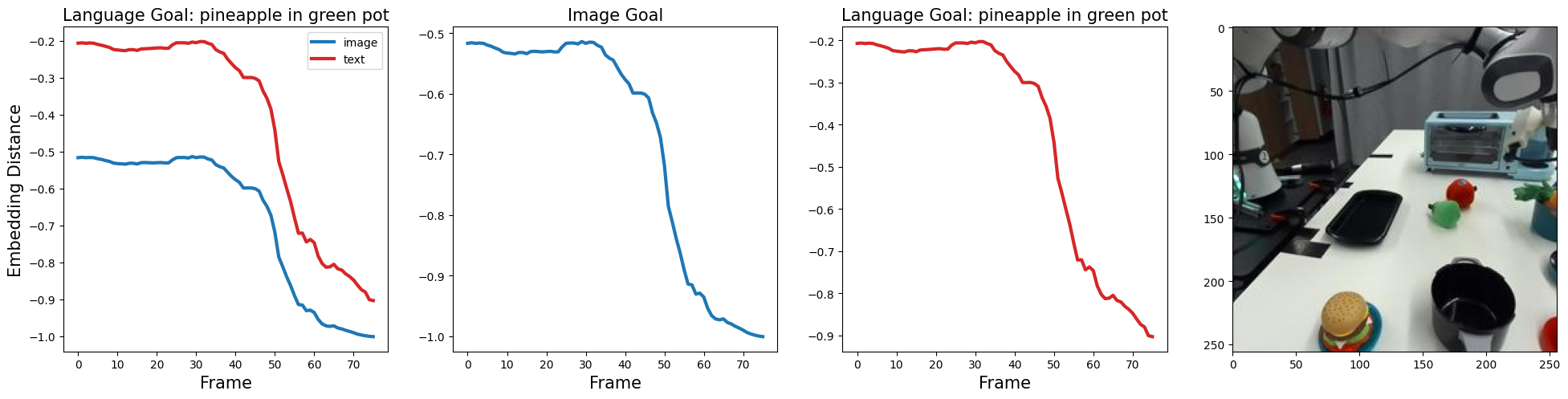}\hfill 
      \includegraphics[width=0.49\linewidth]{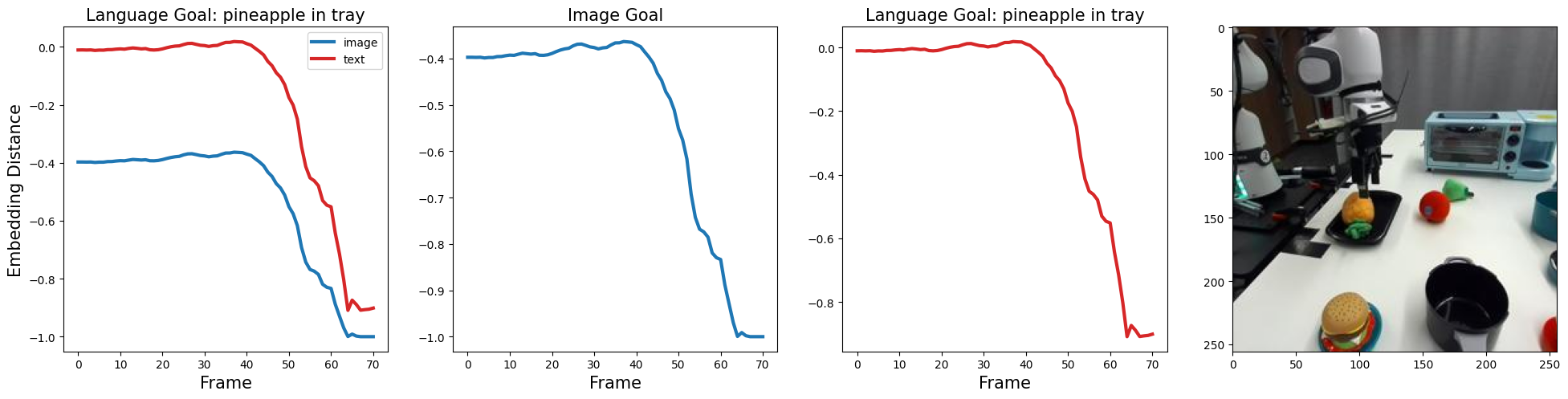}\hfill 
  \includegraphics[width=0.49\linewidth]{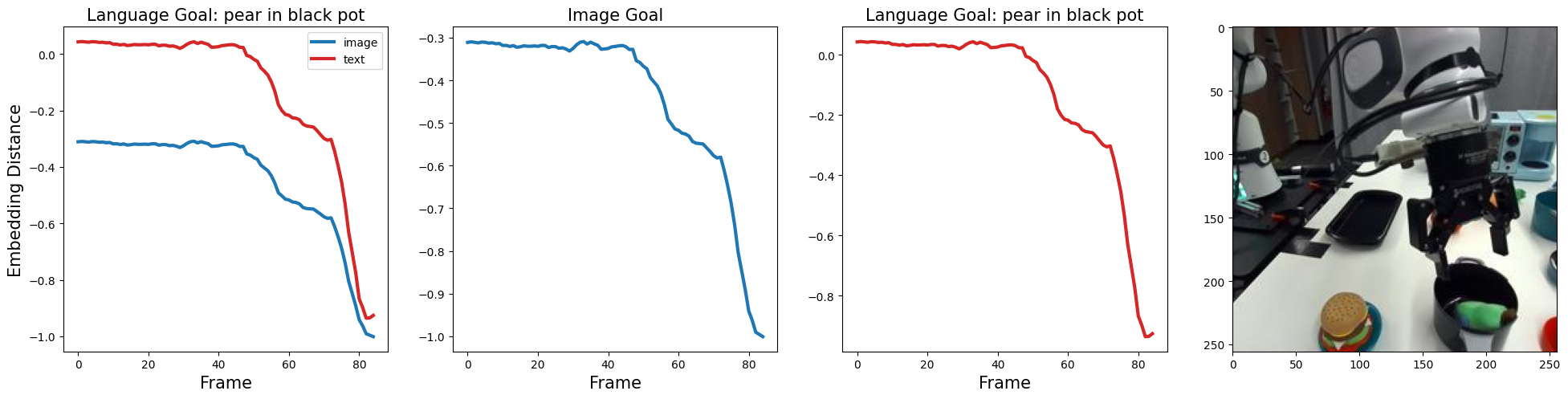}\hfill 
    \includegraphics[width=0.49\linewidth]{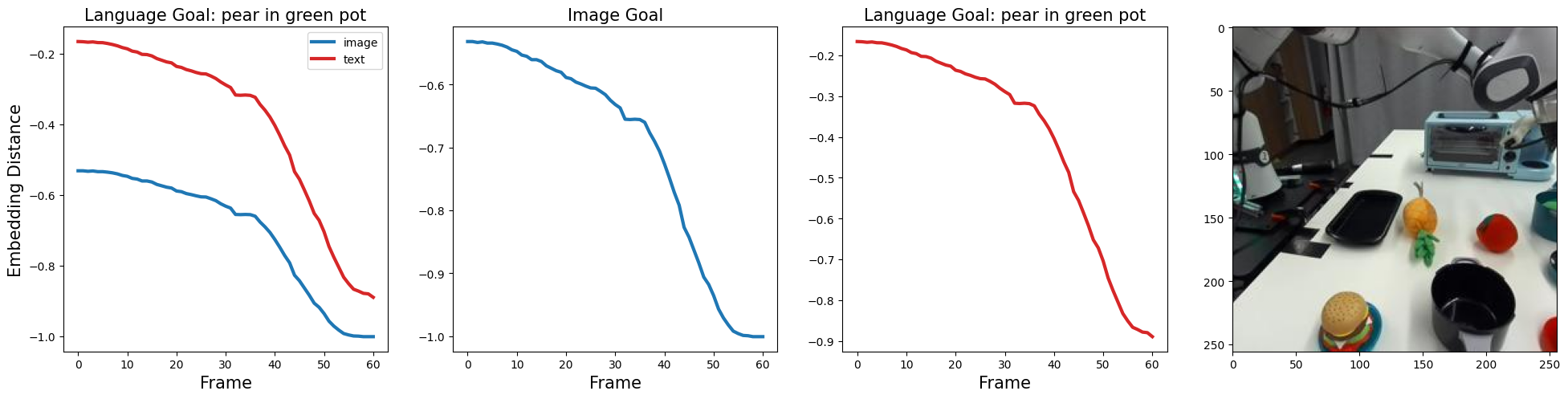}\hfill 
      \includegraphics[width=0.49\linewidth]{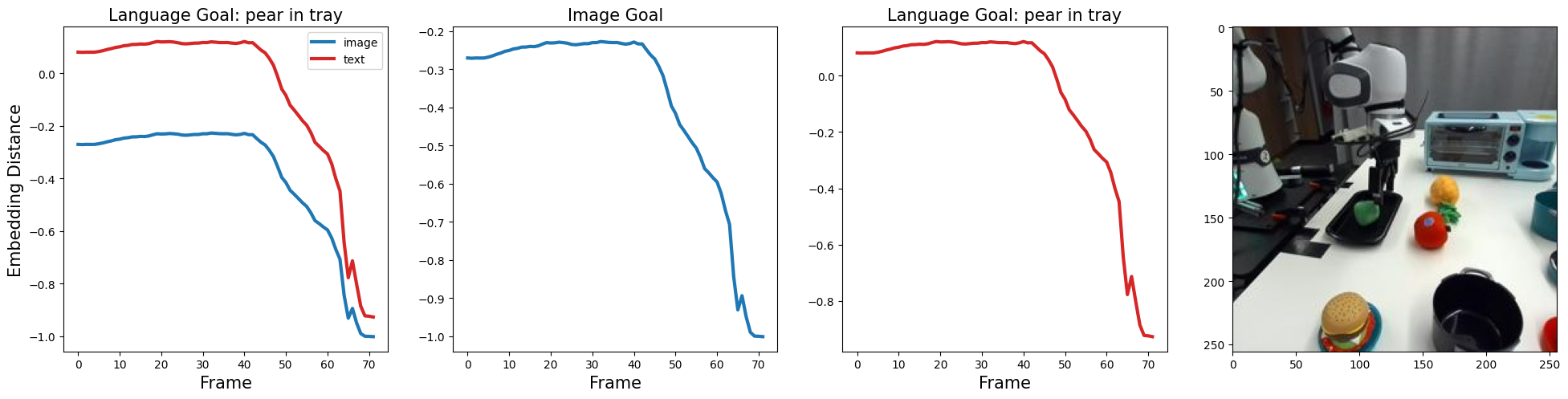}\hfill 
  \caption{LIV (LIV fine-tuned) image and language goal reward curves on RealRobot tasks.} 
\label{figure:liv-livfinetuned-realrobot-qualitative}
\end{figure}

\begin{figure} 
\centering
  \includegraphics[width=0.7\linewidth]{figures/liv_franka_qualitative/frankakitchen.png}\hfill
  \includegraphics[width=0.7\linewidth]{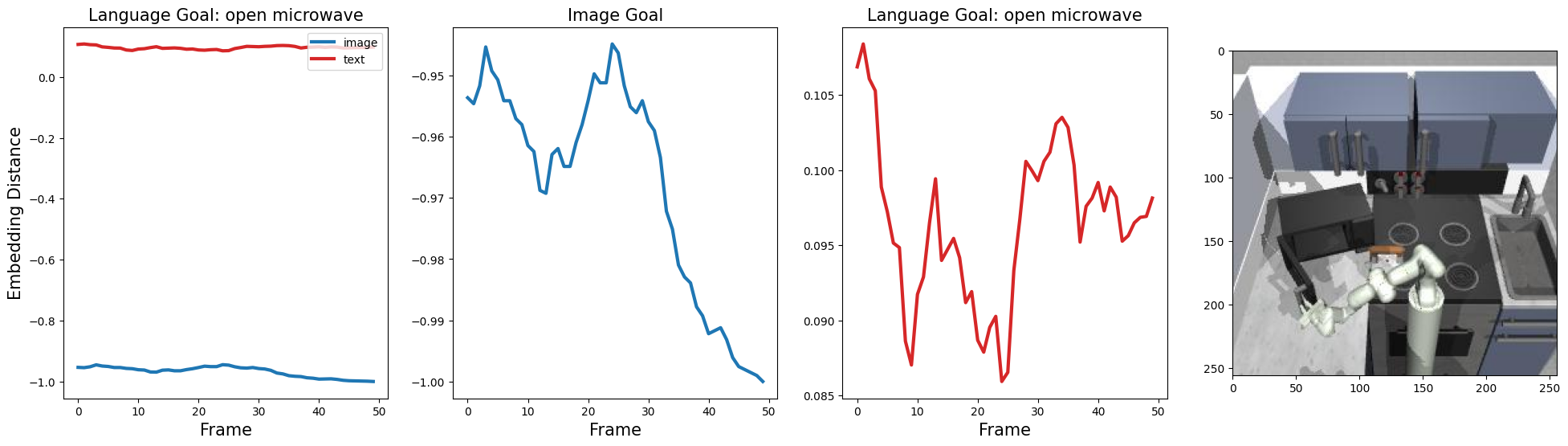}\hfill
  \includegraphics[width=0.7\linewidth]{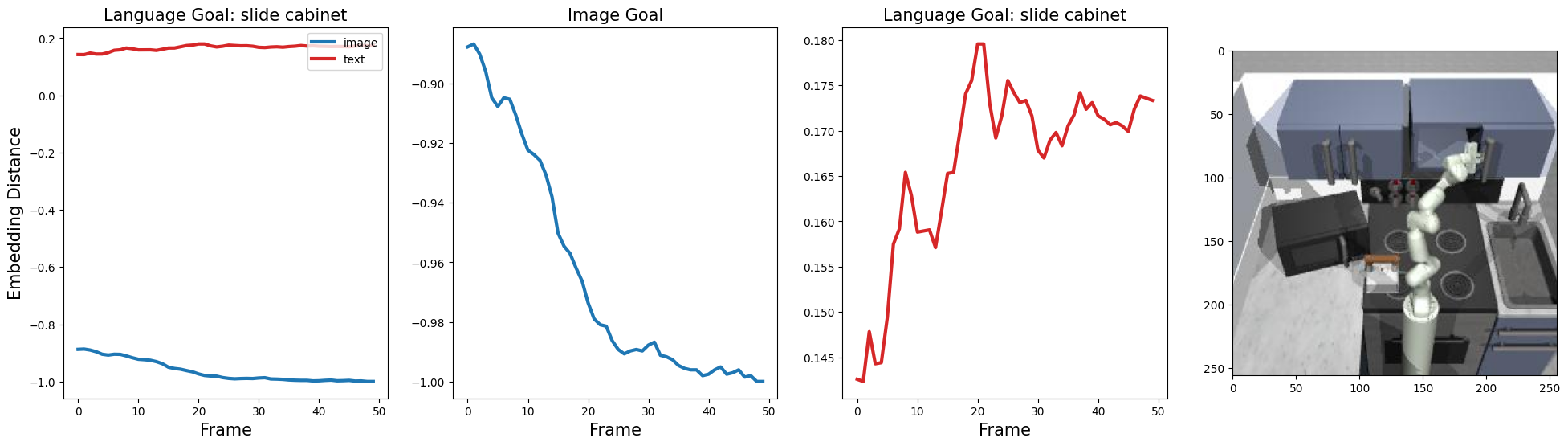}\hfill
  \includegraphics[width=0.7\linewidth]{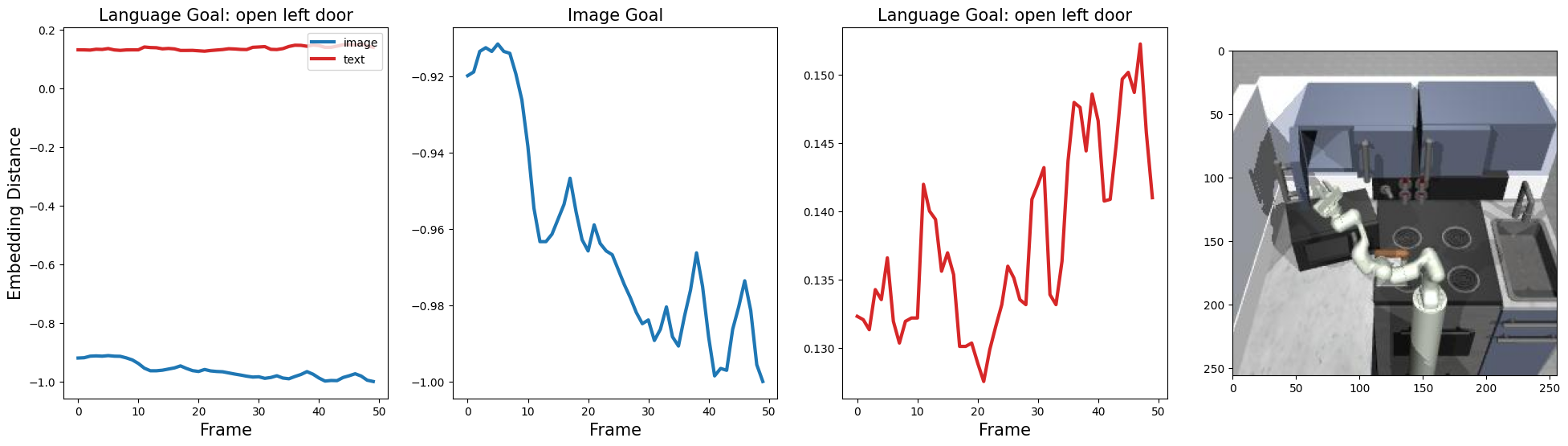}\hfill
  \includegraphics[width=0.7\linewidth]{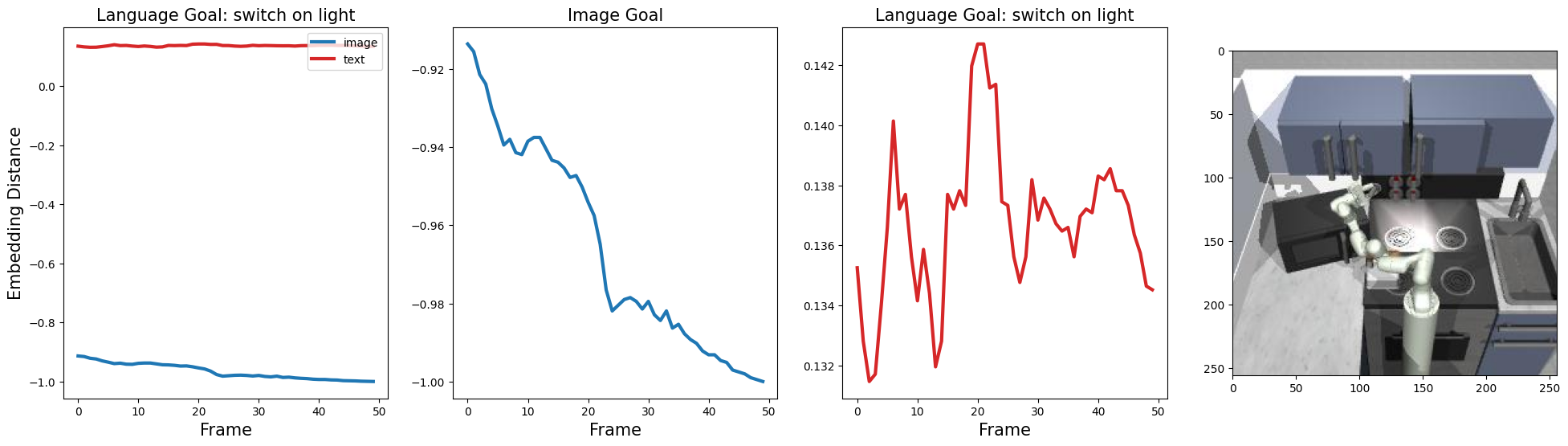}\hfill
    \includegraphics[width=0.7\linewidth]{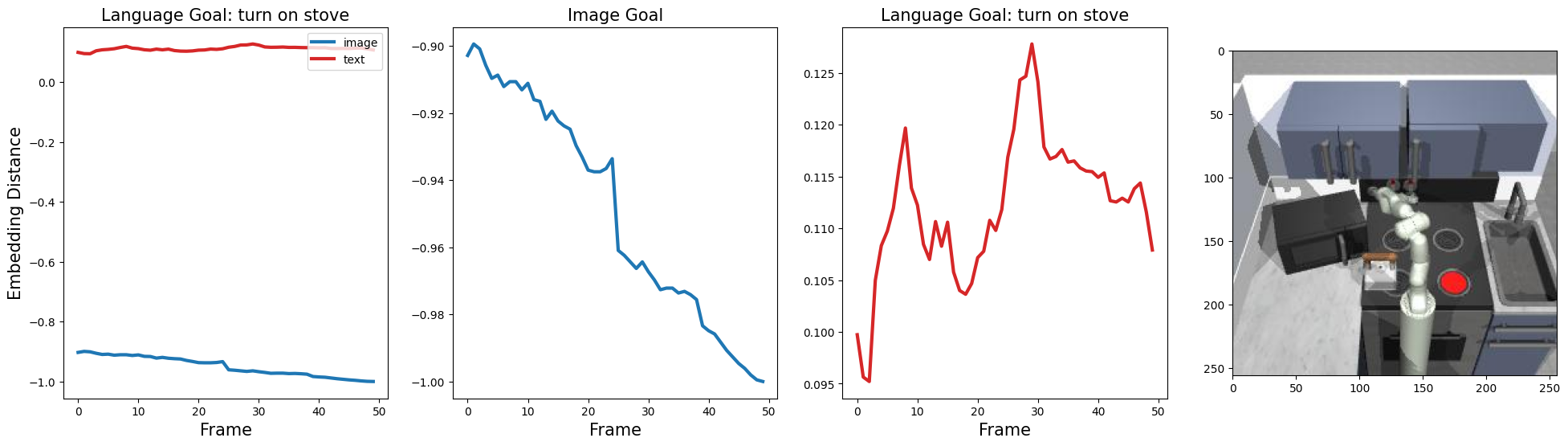}\hfill
  \caption{Pre-trained LIV-EPIC image and language goal reward curves on simulated FrankaKitchen tasks.} 
\label{figure:liv-frankakitchen-qualitative}
\end{figure}

\begin{figure} 
\centering
  \includegraphics[width=0.7\linewidth]{figures/liv_livfinetune_franka_qualitative/frankakitchen.png}\hfill
  \includegraphics[width=0.7\linewidth]{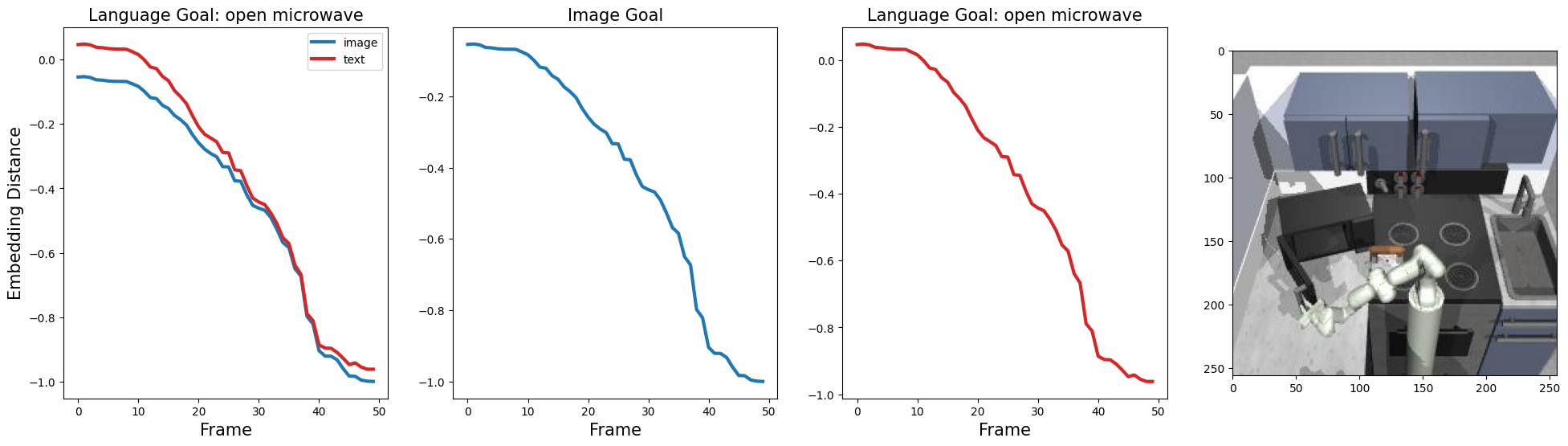}\hfill
  \includegraphics[width=0.7\linewidth]{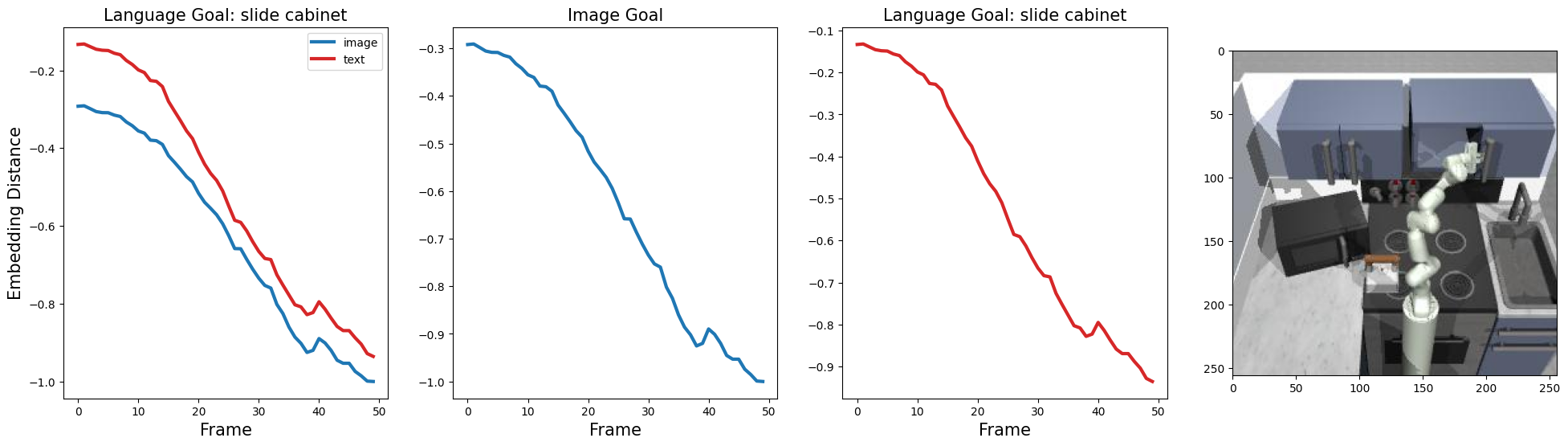}\hfill
  \includegraphics[width=0.7\linewidth]{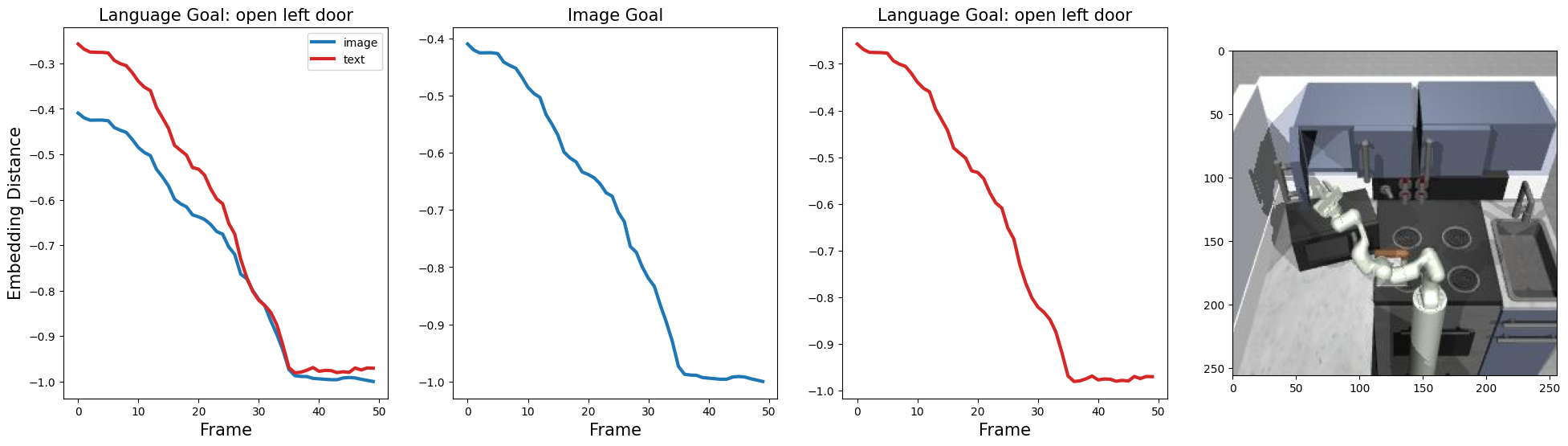}\hfill
  \includegraphics[width=0.7\linewidth]{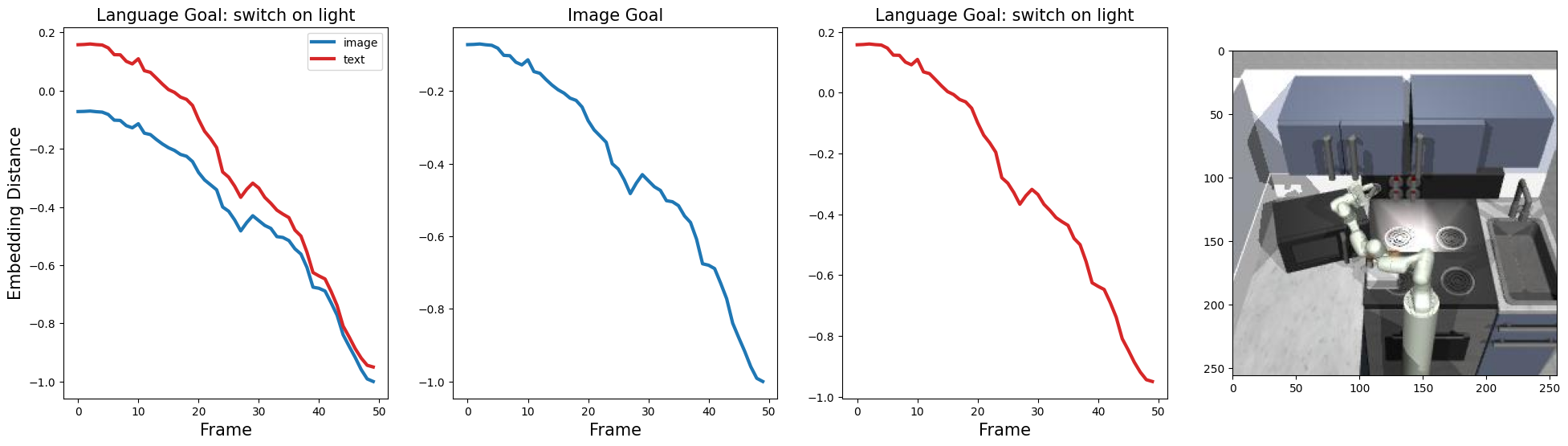}\hfill
    \includegraphics[width=0.7\linewidth]{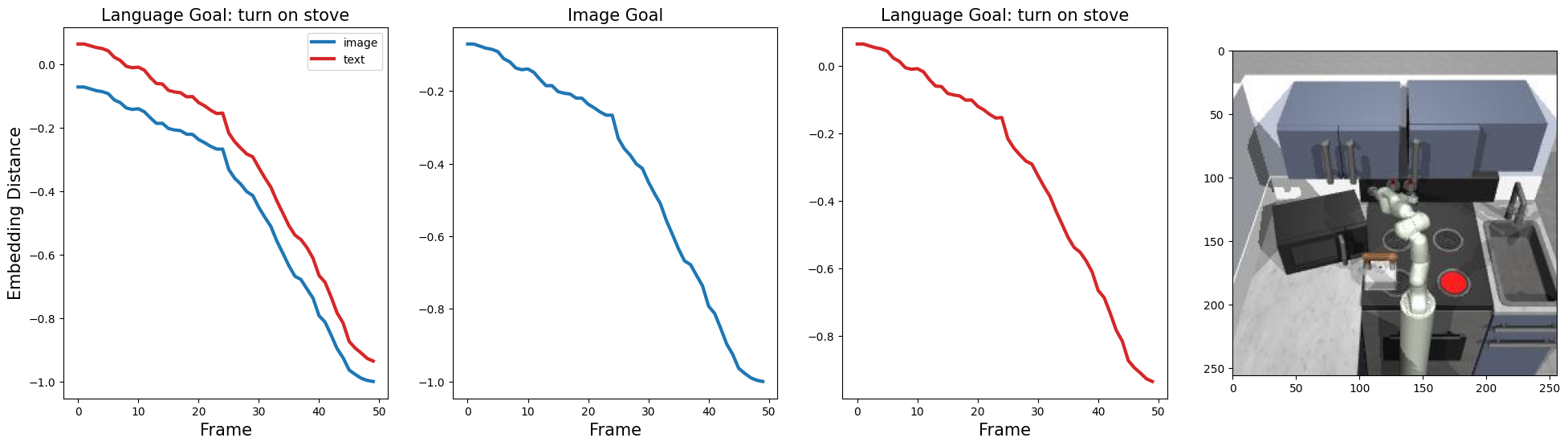}\hfill
  \caption{LIV-EPIC (LIV fine-tuned) image and language goal reward curves on simulated FrankaKitchen tasks.} 
\label{figure:liv-livfinetune-frankakitchen-qualitative}
\end{figure}

\begin{figure} 
\centering
  \includegraphics[width=0.7\linewidth]{figures/liv_clipfinetune_franka_qualitative/frankakitchen.png}\hfill
  \includegraphics[width=0.7\linewidth]{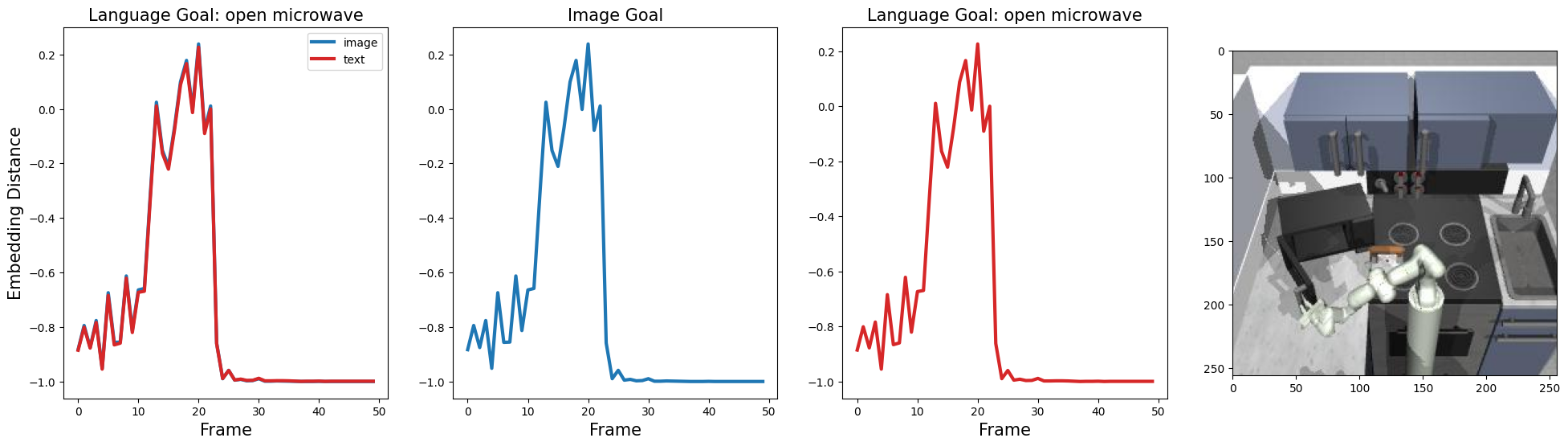}\hfill
  \includegraphics[width=0.7\linewidth]{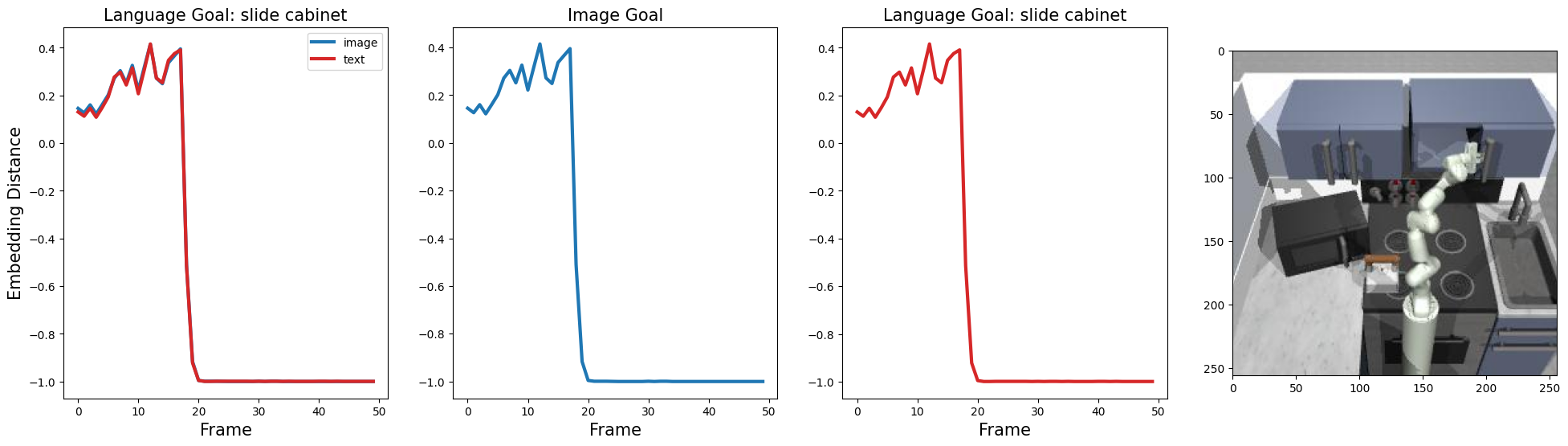}\hfill
  \includegraphics[width=0.7\linewidth]{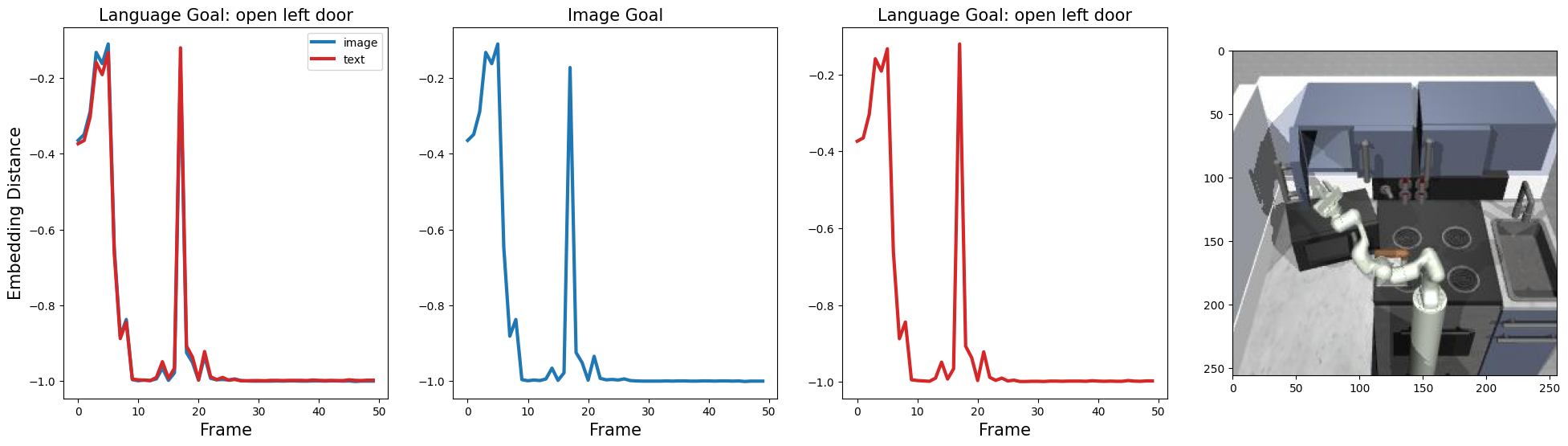}\hfill
  \includegraphics[width=0.7\linewidth]{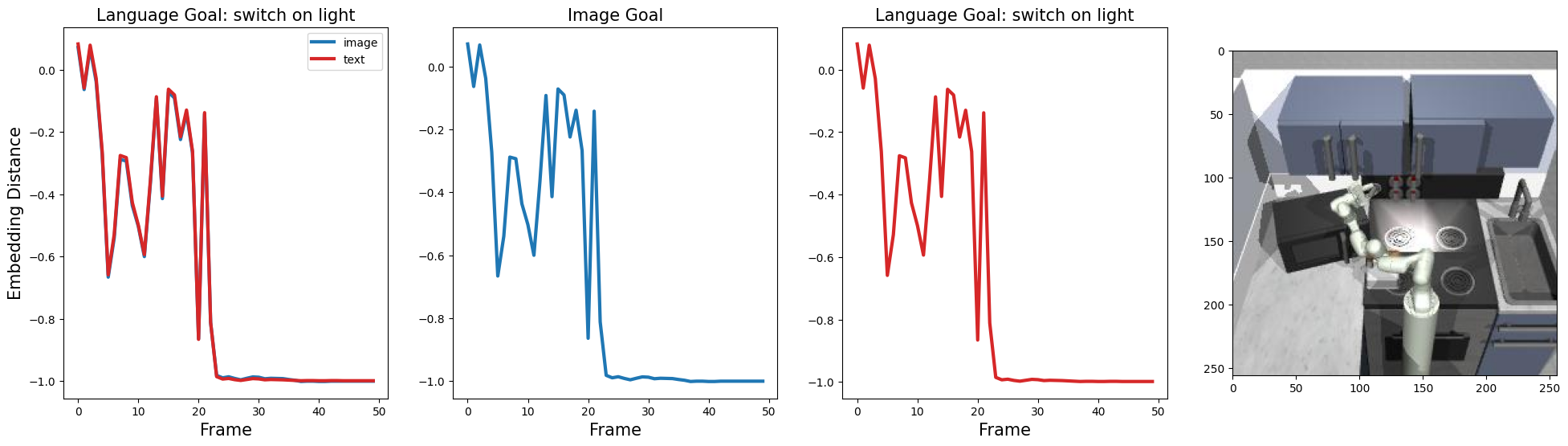}\hfill
    \includegraphics[width=0.7\linewidth]{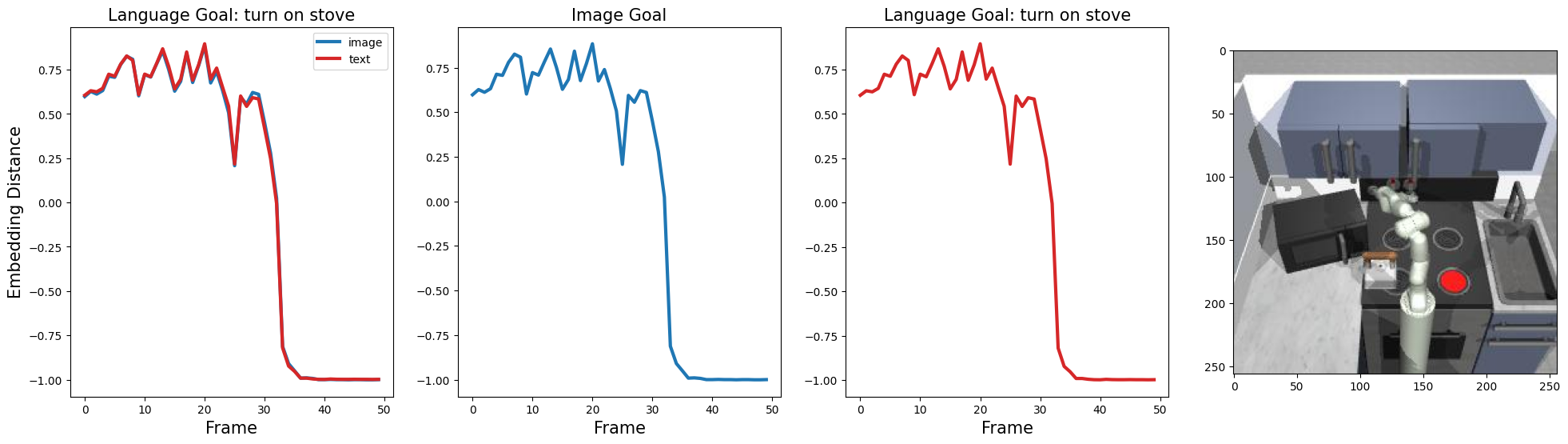}\hfill
  \caption{LIV-EPIC (CLIP fine-tuned) image and language goal reward curves on simulated FrankaKitchen.} 
\label{figure:liv-clipfinetune-frankakitchen-qualitative}
\end{figure}

\begin{figure} 
\centering
  \includegraphics[width=0.7\linewidth]{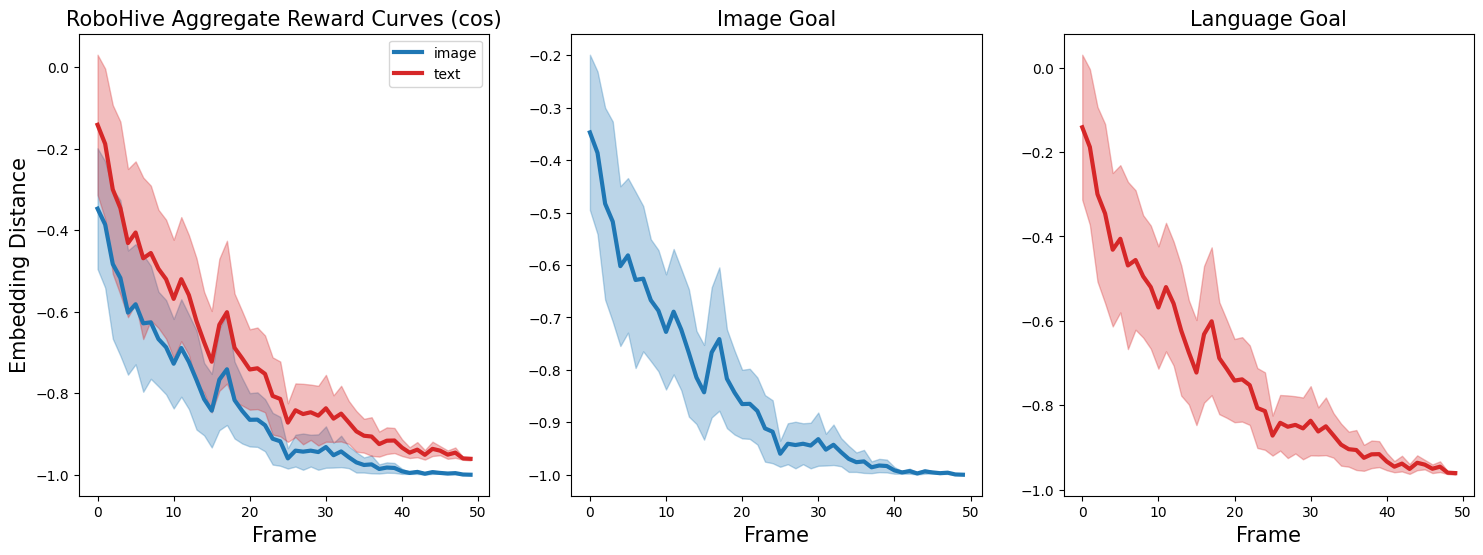}\hfill
  \includegraphics[width=0.7\linewidth]{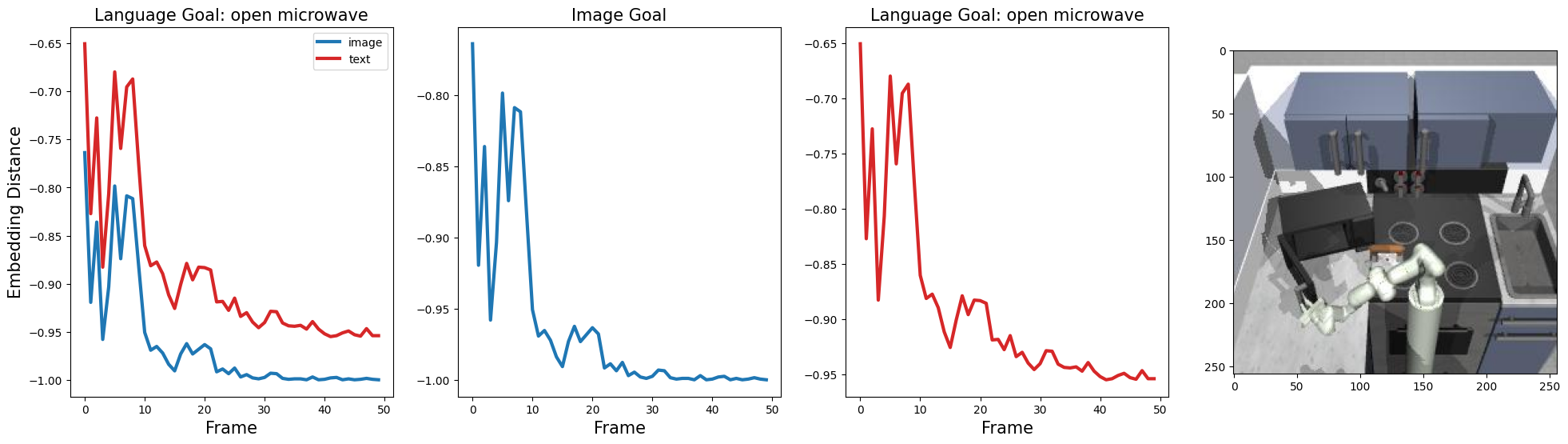}\hfill
  \includegraphics[width=0.7\linewidth]{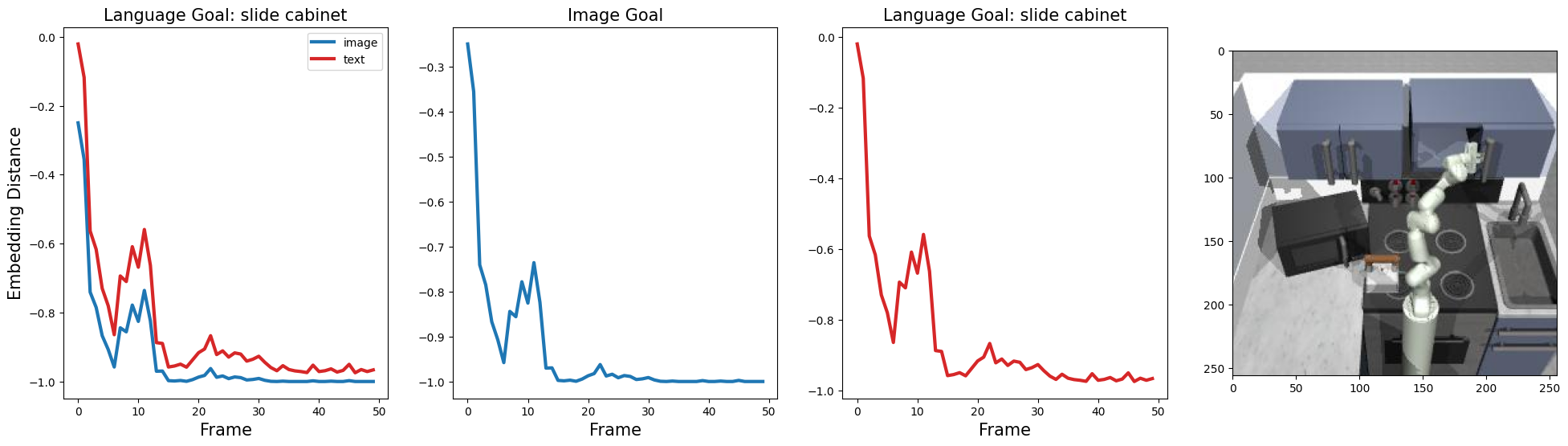}\hfill
  \includegraphics[width=0.7\linewidth]{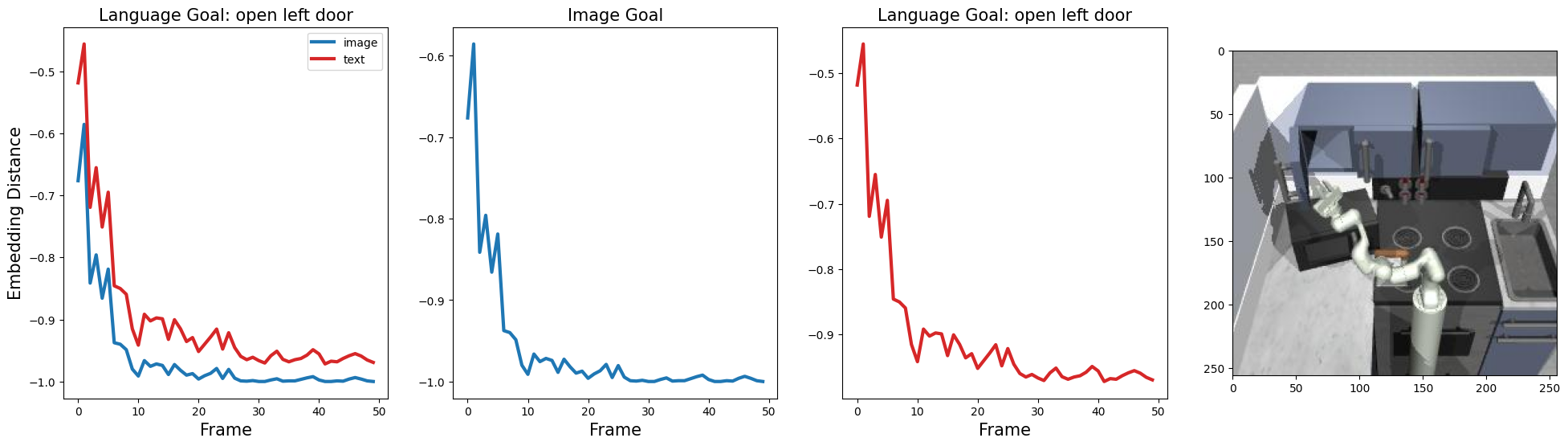}\hfill
  \includegraphics[width=0.7\linewidth]{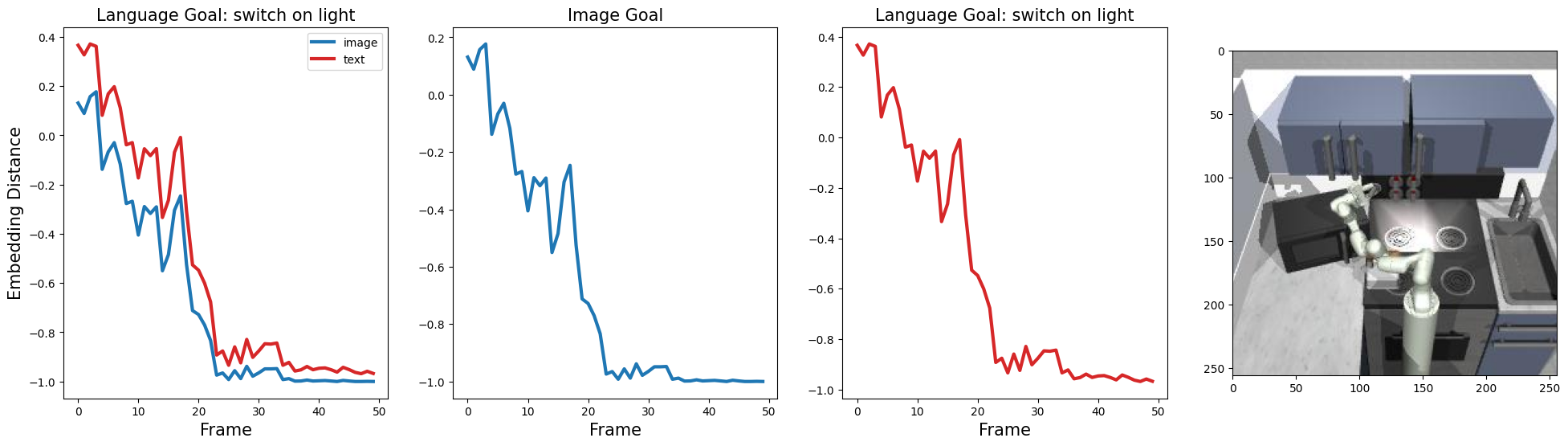}\hfill
    \includegraphics[width=0.7\linewidth]{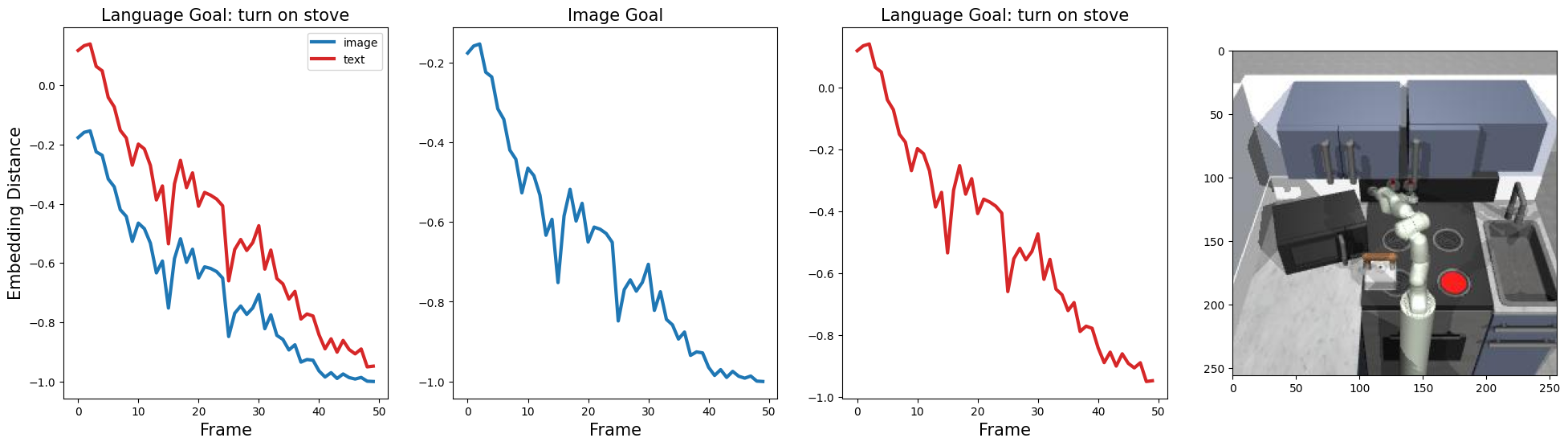}\hfill
  \caption{LIV-EPIC (TCN+CLIP fine-tuned) image and language goal reward curves on simulated FrankaKitchen.} 
\label{figure:liv-tcnclipfinetune-frankakitchen-qualitative}
\end{figure}
\end{document}